\newtheorem{fact}{Fact}
\DeclarePairedDelimiter{\ceil}{\lceil}{\rceil}
\DeclareMathOperator*{\argmax}{arg\,max} 
\newcommand{\cjq}[1]{\color{blue}#1\color{black}}
\newcommand{\ngy}[1]{\color{cyan}#1\color{black}}
\newcommand{\etckgk}{C-ETC-K}
\newcommand{\etckga}{C-ETC-Y}
\newcommand\blfootnote[1]{%
  \begingroup
  \renewcommand{\@makefntext}[1]{\noindent\makebox[1.8em][r]#1}
  \renewcommand\thefootnote{}\footnote{#1}%
  \addtocounter{footnote}{-1}%
  \endgroup
}
\newcommand{\cmin}{c_\mathrm{min}}
\newcommand{\mopt}{m^*}
\newcommand{\moptrnd}{m^\dagger}
\begin{document}

\title{A Framework for Adapting Offline Algorithms to Solve Combinatorial Multi-Armed Bandit Problems with Bandit Feedback}

\author{\name Guanyu Nie\email nieg@iastate.edu \\
	\name Yididiya Y Nadew\email yididiya@iastate.edu \\
	\name Yanhui Zhu\email yanhui@iastate.edu \\
	\name Vaneet Aggarwal\email{vaneet@purdue.edu}\\
	\name Christopher John Quinn\email{cjquinn@iastate.edu}
}
	\if 0
	\addr Department of Statistics\\
	University of Washington\\
	Seattle, WA 98195-4322, USA
	\AND
	\name Michael I.\ Jordan \email jordan@cs.berkeley.edu \\
	\addr Division of Computer Science and Department of Statistics\\
	University of California\\
	Berkeley, CA 94720-1776, USA
\fi 

\maketitle

\begin{abstract}
  We investigate the problem of stochastic, combinatorial multi-armed bandits where the learner only has access to bandit feedback and the reward function can be non-linear. We provide a general framework for adapting discrete offline approximation algorithms into sublinear $\alpha$-regret methods that only require bandit feedback, %
  achieving $\mathcal{O}\left(T^\frac{2}{3}\log(T)^\frac{1}{3}\right)$ expected cumulative $\alpha$-regret dependence on the horizon $T$.  The framework only requires the offline algorithms to be robust to small errors in function evaluation.  The adaptation procedure does not even require explicit knowledge of the offline approximation algorithm --- the offline algorithm can be used as a black box subroutine. To demonstrate the utility of the proposed framework, the proposed framework is applied to diverse applications in submodular maximization. The new CMAB algorithms for submodular maximization with knapsack constraints outperform a full-bandit method developed for the adversarial setting in experiments with real-world data.

\end{abstract}

\section{Introduction}
\label{intro}

Many real world sequential decision problems can be modeled using the framework of stochastic multi-armed bandits (MAB), such as scheduling,  assignment problems, advertising campaigns, and product recommendations, among others.  The decision maker sequentially selects actions and receives stochastic rewards from an unknown distribution.  The goal of the decision maker is to maximize the expected cumulative reward over a (possibly unknown) time horizon.  Actions result both in the immediate reward and, more importantly, information about that action's reward distribution. Such problems result in a trade-off between trying actions the agent is uncertain of (\textit{exploring}) or only taking the action that is empirically the best seen so far (\textit{exploiting}).\blfootnote{This extends the framework from \citep{nie23framework} to adapt randomized offline approximation algorithms.}

In the classic MAB setting, the number of possible actions is small relative to the time horizon, meaning each action can be taken at least once, and %
there is no assumed relationship between the reward distributions of different arms. %
The \textit{combinatorial} multi-armed bandit (CMAB) setting involves a large but structured action space.  For example, in product recommendation problems, the decision maker may select a subset of products (base arms) from among a large set. There are several aspects that can affect the difficulty of these problems.   First, MAB methods are typically compared against a learner with access to a value oracle of the reward function (an offline problem).  For some problems, it is NP-hard for the baseline learner with value oracle access to optimize.  An example is if the expected/averaged reward function is submodular and actions are subsets constrained by cardinality.  At best, for these problems, approximation algorithms may exist.  Thus, unless the time horizon is large (exponentially long in the number of base arms, for instance), it would be more reasonable to compare the CMAB agent against the performance of the  approximation algorithm for the related offline problem. Likewise, one could apply state of the art methods for (unstructured) MAB problems treating each subset as a separate arm, and obtain $\tilde{\mathcal{O}}(T^\frac{1}{2})$ dependence on the horizon $T$ for the subsequent regret bound.  However, that dependence would only apply for exponentially large $T$.

Feedback plays an important role in how challenging the problem is.  When the decision maker only observes a (numerical) reward for the action taken, that is known as bandit or full-bandit feedback.  When the decision maker observes additional information, such as contributions of each base arm in the action, that is semi-bandit feedback. Semi-bandit feedback greatly facilitates learning for CMAB problems.   Suppose for example that the reward function (on average) was non-linear but monotone increasing over the inclusion lattice of the $n$ base arms and there was a cardinality constraint of size $k$ characterizing the action set. 
 The agent would know from the start that no set of size smaller than $k$ could be optimal. %
 However, there would be $n \choose k$ sets of size  $k$.  For $n=100$ and $k=10$, the agent would need a horizon $T>10^{12}$ to try each cardinality $k$ set even just once.  If the reward function belongs to a certain class, such as the class of submodular functions, then one approach would be to use a greedy procedure based on base arm values.   With semi-bandit feedback, the agent could, on the one hand,  take actions of cardinality $k$, 
 gain the subsequent rewards, and yet also observe samples of the base arms' values to improve future actions.

Bandit feedback is much more challenging, as only the joint reward is observed.  In general, for non-linear reward functions,  the individual values or marginal gains of base arms can only be loosely bounded if actions only consist of maximal subsets. Thus, to estimate values or marginal gains of base arms, the agent would need to deliberately spend time  sampling actions  that are known to be sub-optimal (small sets of base arms) in order to estimate their values to later  better select actions of cardinality $k$.  Standard MAB methods like UCB or TS based methods by design do not take actions  identified (with some confidence)  to be sub-optimal.  Thus, while such strategies can be employed when semi-bandit feedback is available, it is less clear whether they can be effectively used when only bandit feedback is available. %

There are important applications where semi-bandit feedback may not be available, such as in influence maximization and recommender systems.  Influence maximization models the problem of identifying a low-cost subset (seed set) of nodes in a (known) social network that can influence the maximum number of nodes in a network \citep{Nguyen2013OnBI, Leskovec2007CosteffectiveOD, Bian2020Efficient}. %
Recent research has generalized the problem to online settings where the knowledge of the network and diffusion model is not required \citep{Wang2020FastTS, perrault2020budgeted} but extra feedback is assumed.  However, for many networks the user interactions and user accounts are private; only aggregate feedback (such as the count of individuals using a coupon code or going to a website) might be visible to the decision maker.

\if 0
Another such problem is in recommender systems.  
When recommending bundles of items, %
selecting items with highest individual utilities may lead to diminishing returns if those items are from the same category. %
This problem of sequentially recommending sets of items under a budget constraint has been studied in \citet{yu2016linear} and \citet{takemori2020submodular}, where the objective function is a linear combination of some known submodular functions and the extra feedback (on marginal gains of recommended items) is assumed.  Third-party services or small-scale agents %
might not have the resources to a priori curate feature profiles of products or have direct access to user actions and only observe aggregated feedback.

\fi

In this work, we seek to address these challenges by proposing a {\em general framework} for adapting offline approximation algorithms into algorithms for stochastic CMAB problems when only bandit feedback is available.   We identify that a single  condition related to the robustness of the approximation algorithm to erroneous function evaluations is sufficient to guarantee that a simple explore-then-commit (ETC) procedure accessing the approximation algorithm as a black box results in a sublinear $\alpha$-regret CMAB algorithm despite having only bandit feedback available.  The approximation algorithm does not need to have any special structure (such as an iterative greedy design).  Importantly, no effort is needed on behalf of the user in mapping steps in the offline method into steps of the CMAB method.

We demonstrate the utility of this framework by assessing the robustness of several approximation algorithms. We consider submodular maximization problems where we study three approximation algorithms designed for knapsack constraints, two designed for cardinality constraints and one designed for unconstrained problems, which immediately result in sublinear $\alpha$-regret CMAB algorithms that only rely on bandit-feedback. We note that this paper provides first regret guarantees for stochastic CMAB problems with submodular rewards, knapsack constraints, and bandit feedback.  We also show that despite the simplicity and universal design of the adaptation, the resulting CMAB algorithms work well on budgeted influence maximization and song recommendation problems using real-world data. 

The main contributions of this paper can be summarized as: 

\noindent {\bf 1.} We provide a general framework for adapting discrete offline approximation algorithms into sublinear $\alpha$-regret methods for stochastic CMAB problems where only bandit feedback is available.  The framework only requires the offline algorithms to be robust to small errors in function evaluation, a property important in its own right for offline problems.  The algorithms are not required to have a special structure --- instead they are used as black boxes.  Our procedure has minimal storage and time-complexity overhead and achieves a regret bound with $\tilde{\mathcal{O}}(T^\frac{2}{3})$ dependence on the horizon $T$.

\noindent {\bf 2.} We illustrate the utility of the proposed framework by assessing the robustness of several approximation algorithms for (offline) constrained submodular optimization, a class of reward functions lacking simplifying properties of linear or Lipschitz reward functions.  Specifically, we prove the robustness of approximation algorithms given in \citep{nemhauser1978analysis, badanidiyuru2014fast, Sviridenko2004ANO, khuller1999budgeted, yaroslavtsev2020bring} with cardinality or knapsack constraints, and use the general framework to give regret bounds for the stochastic CMAB. In particular, we note that this paper gives the first regret bounds for stochastic submodular CMAB with knapsack constraints under bandit feedback. 

\noindent {\bf 3.} We empirically evaluate the performance of proposed framework through the stochastic submodular CMAB with knapsack constraints problem for two applications: Budgeted Influence Maximization and Song Recommendation. The evaluation results demonstrate that the proposed approach significantly outperforms a full-bandit method for a related problem in the adversarial setting.

The rest of the paper is organized as follows. Section \ref{sec:related-work} describes the key related works. Section \ref{prob_state} defines the considered combinatorial multi-armed bandit (CMAB) problem. Section \ref{sec:robust} defines the robustness guarantee of offline combinatorial optimization problem. Section \ref{sec:alg} provides the proposed algorithm using which  the offline approximation algorithm could be adapted to the stochastic CMAB algorithm, and provides the regret guarantees for stochastic CMAB. Section \ref{sec:appl-CETC:submod} applies the proposed framework to different submodular maximization problems. Section \ref{sec:exp} provides the evaluation results of the proposed framework on stochastic submodular CMAB with knapsack constraints problem. Section \ref{sec:concl} concludes the paper.

\section{Related Work} \label{sec:related-work}

\begin{table}[t]
\begin{tabular}{c|c|c|c}
\hline
Application                                                                        & \begin{tabular}[c]{@{}c@{}}Approximation \\ Factor ($\alpha$)\end{tabular} & Our $\alpha$-Regret Bound                                                                                                & \begin{tabular}[c]{@{}c@{}}The Best\\Prior Bound\end{tabular}                                                       \\ \hline
\begin{tabular}[c]{@{}c@{}}Monotone SM with\\ Cardinality Constraints\end{tabular} & $1-1/e$                                                         & $\tilde{\mathcal{O}}\left(k n^\frac{1}{3} T^\frac{2}{3}\right)$                                      & $\tilde{\mathcal{O}}\left(k^\frac{4}{3} n^\frac{1}{3} T^\frac{2}{3}\right) ^*$ \\ \hline
\begin{tabular}[c]{@{}c@{}}Monotone SM with\\ Knapsack Constraints\end{tabular}    & $1/2$                                                           & $\tilde{\mathcal{O}}\left(\beta^\frac{2}{3}\Tilde{K}^\frac{1}{3} n^\frac{1}{3} T^\frac{2}{3}\right) ^\dagger$ & -                                                                           \\ \hline
\begin{tabular}[c]{@{}c@{}}Non-monotone SM\\ without Constraint\end{tabular}       & $1/2$                                                           & $\tilde{\mathcal{O}}\left(nT^\frac{2}{3}\right)$                                                     & $\tilde{\mathcal{O}}\left(nT^\frac{2}{3}\right) ^{**}$                                    \\ \hline
\end{tabular}
\caption{Table of selected results and related works. ``SM'' means submodular maximization.  $\tilde{\mathcal{O}}$ hides the logarithmic terms. Only results considering the stochastic setting with full-bandit feedback are presented. Parameters are horizon $T$, $n$ base arms, cardinality $k$, knapsack budget $B$, knapsack minimum element cost $\cmin$, knapsack ratio $\beta=\frac{B}{\cmin}$, bound on largest feasible cardinality set for knapsack case $\Tilde{K}=\min\{\beta, n\}$, and $^\ddagger$ $h$ is the discretization size. %
$^*$ is from \citep{nie2022explore}. $^{**}$ is from \citep{Fourati2023Randomized}.
}
\label{tab:related:work}
\end{table}

Table~\ref{tab:related:work} summarizes the comparison of our results with existing methods. We now briefly discuss  the  closely related works.  %

\paragraph{Adversarial  CMAB} 
The closest related works are for the adversarial  CMAB setting.  \citet{niazadeh2021online} propose a framework for transforming iterative greedy $\alpha$-approximation algorithms for offline
problems to online methods in an adversarial bandit setting, for both semi-bandit (achieving $\widetilde{O}(T^{1/2})$ $\alpha-$regret)  and full-bandit feedback (achieving $\widetilde{O}(T^{2/3})$ $\alpha-$regret).  Their framework requires the offline approximation algorithm to have an iterative greedy structure (unlike ours), satisfy a robustness property (like ours), and satisfy a property referred to as Blackwell reducibility (unlike ours). In addition to these conditions, the adaptation depends on the number of subproblems (greedy iterations) which for some algorithms can be known ahead of time (such as with cardinality constraints) but for other algorithms can only be upper-bounded.     The authors check those conditions and explicitly adapt several offline approximation algorithms. In this paper, we consider an approach for converting offline approximation algorithm to online for stochastic CMAB.  In addition to requiring less assumptions about the approximation algorithm, our procedure does not adapt individual steps of the approximation algorithm into an online method.  Instead our procedure  uses the offline algorithm as a black box.

We also note that \citet{niazadeh2021online} do not consider submodular CMAB with knapsack constraints, and thus do not verify whether any approximation algorithms for the offline problem  satisfy the required properties (of sub-problem structure or robustness or Blackwell reducibility) to be transformed, and this is an example we consider for our general framework. Consequently, in our experiments for submodular CMAB with knapsack constraints in \cref{sec:exp}, we use the algorithm in \citep{streeter2008online} designed for a knapsack constraint (in expectation) as representative of methods for the adversarial setting. Other related works for adversarial submodular CMAB with knapsack constraint are described in Appendix \ref{apdx:rad}. 

\paragraph{Stochastic Submodular CMAB with Full Bandit Feedback}   Recently,  \cite{nie2022explore} propose an algorithm for stochastic MAB with monotone submodular rewards, when there is a cardinality constraint. Further, \cite{Fourati2023Randomized} proposed an algorithm for stochastic MAB with non-monotone submodular rewards. Their algorithms are a specific adaptation of an offline greedy method.    In our work, we propose a general framework that employs the  offline algorithm as a black box (and these results becomes a special case of our approach).  While there are multiple results for semi-bandit feedback (see Appendix \ref{apdx_semi}), this paper considers full bandit feedback.

\section{Problem Statement}
\label{prob_state}

We consider sequential, combinatorial decision-making problems over a finite time horizon $T$.  Let $\Omega$ denote the ground set of base elements (arms). Let $n=|\Omega|$ denote the number of  arms.  Let $D  \subseteq 2^\Omega$ denote the subset of feasible actions (subsets), for which we presume  membership can be efficiently evaluated.  We will later consider applications with cardinality and knapsack constraints, though our methods are not limited to those.   We will use the terminologies \emph{subset} and \emph{action} interchangeably throughout the paper.  

At each time step $t$, the learner selects a feasible action $A_t \in D$.  After the subset $A_t$ is selected, the learner receives reward $f_t(A_t)$. We assume the reward $f_t$ is stochastic, bounded in $[0,1]$, and i.i.d. conditioned on a given subset.  Define the expected reward function as $f(A) = \mathbb{E}[f_t(A)]$.

The goal of the learner is to maximize the cumulative reward $\sum_{t=1}^Tf_t(A_t)$. To measure the performance of the algorithm, one common metric is to compare the learner to an agent with access to a value oracle for $f$.   However, if optimizing $f$ over $D$ is NP-hard, such a comparison would not be meaningful unless the horizon is exponentially large in the problem parameters.

If there is a known approximation algorithm $\mathcal{A}$ with approximation ratio $\alpha\in(0,1]$ for optimizing $f$ over $D$,  a more natural alternative is to evaluate the performance of a CMAB algorithm against what $\mathcal{A}$ could achieve.  Thus, we consider the %
the expected cumulative $\alpha$-regret $\mathcal{R}_{\alpha,T}$, which is the difference between $\alpha$ times the cumulative reward of the optimal subset's expected value and the average received reward, (we write $\mathcal{R}_T$ when $\alpha$ is understood from context) 
\begin{align}
    \mathbb{E}[\mathcal{R}_{T}] = \alpha Tf(\mathrm{OPT}) - \mathbb{E}\left[\sum_{t=1}^T f_t(A_t)\right],\label{eq:reg:exp1e}
\end{align} 
where  OPT is the optimal solution, i.e., $\text{OPT}\in \argmax_{A \in D } f(A)$ and the expectations are with respect to both  the  rewards %
and  actions (if random). 

\begin{commentediting3}
\cjq{a table of results?}
\end{commentediting3}

\section{Robustness of Offline  Algorithms}
\label{sec:robust}

In this section, we  introduce a criterion for an offline approximation algorithm's sensitivity to (bounded) additive perturbations to function evaluations. Investigating robustness of approximation algorithms in offline settings is valuable in its own right. %
Importantly, we will show that this property alone is sufficient to guarantee that the offline algorithm can be adapted to solve analogous combinatorial multi-armed bandit (CMAB) problems with just bandit feedback and yet achieve sub-linear regret.  Furthermore, the CMAB adaptation will not rely on any special structure of the algorithm design, instead employing it as a black box.

\begin{definition}[$(\alpha, \delta)$-Robust Approximation]\label{def:robust}
    An algorithm (possibly random) $\mathcal{A}$ is an $(\alpha, \delta)$-robust  approximation algorithm
    for the combinatorial optimization problem of maximizing a %
    function $f:2^\Omega\to \mathbb{R}$ over a finite domain $D \subseteq 2^\Omega$ if its output $S^*$ %
    using a value oracle for $\hat{f}$     satisfies the relation below with the optimal solution $\mathrm{OPT}$ under $f$, provided that for any $\epsilon >0$ that $|f(S)-\hat{f}(S)|<\epsilon$ for all $S\in D$,
    \begin{align*}
        \mathbb{E}[f(S^*)]\geq \alpha f(\mathrm{OPT})-\delta \epsilon,
    \end{align*}
    where the expectation is over the randomness of the algorithm $\mathcal{A}$.
\end{definition}

Note that the perturbed $\hat{f}$ is not required to be in the same class as $f$ (linear, quadratic, submodular, etc.). %
Thus, this definition is a stronger notion of robustness than one limited to $\hat{f}$ in the same class that have bounded $L_\infty$ distance from $f$.  

For (unstructured) multi-armed bandit problems, one can view the analogous offline algorithm with access to a value oracle for the elements as first evaluating each of the $n$ arms ($D=\{\{1\},\{2\},\dots,\{n\}\}$), using $n$ value queries total, and then evaluating  $\argmax$ over the $k$ values.  That (trivial) algorithm  is a $(1,2)$-robust approximation algorithm. (Wlog, suppose arm 1 is best in expectation, but arm 2 is chosen.  Then $f(2)\geq \hat{f}(2)-\epsilon \geq \hat{f}(1) - \epsilon \geq f(1) - 2\epsilon.$)

\begin{remark} In \cite{niazadeh2021online}, there is a related definition of robustness for offline approximation algorithms.  That definition and the subsequent offline-to-online adaptation procedure is restricted to approximation algorithms with an iterative greedy structure.  The criterion \cref{def:robust} we consider does not require the approximation algorithm to have an iterative greedy structure. 
\end{remark}

\begin{remark}
    When the  range (codomain) of $f$ and  $\hat{f}$ is within $[0,1]$, \cref{def:robust} can also handle multiplicative error $|f(S)-\hat{f}(S)|<\epsilon' f(S)$ by observing that multiplicative error implies additive error as $\epsilon' \hat{f}(S) \leq \epsilon'$. 
\end{remark}
To illustrate the utility of our proposed framework, in \cref{sec:appl-CETC:submod} we will show that several approximation algorithms from submodular maximization literature are $(\alpha, \delta)$-robust, leading to new sublinear $\alpha$-regret algorithms for related stochastic CMAB problems for those settings.

\section{C-ETC Algorithm: Offline to  Stochastic}\label{sec:alg}

In this section, we present our proposed  stochastic CMAB algorithm, \textit{Combinatorial Explore-Then-Commit} (\textsc{C-ETC}).  %
The pseudo-code is shown in \cref{alg:cetc}.   The algorithm takes an offline $(\alpha,\delta)$ robust algorithm $\mathcal{A}$ with an upper bound $N$ on the number of oracle queries by $\mathcal{A}$.  In the exploration phase, when the offline algorithm queries the value oracle for action $A$, C-ETC will play action $A$ for $m$ times, where $m$ is a constant chosen to minimizing regret. C-ETC then computes the empirical mean $\bar{f}$ of rewards for $A$ and feeds $\bar{f}$ 
 back to the offline algorithm $\mathcal{A}$. In the exploitation phase, C-ETC repeatedly  plays the  solution $S$ output from algorithm $\mathcal{A}$.   Thus, the CMAB procedure does not need $\mathcal{A}$ to have any special structure.  No careful construction is needed for the CMAB procedure beyond running $\mathcal{A}$. All that is needed is checking robustness (\cref{def:robust}) and having an upper-bound on the number of queries to the value oracle. Also, there is no over-head in terms of storage and per-round time complexities--- C-ETC is as efficient as the offline algorithm $\mathcal{A}$ itself.

\begin{algorithm}[t]
\caption{Combinatorial Explore-then-Commit}
\label{alg:cetc}
\begin{algorithmic}
    \STATE {\bfseries Input:}  horizon $T$, set $\Omega$ of $n$ base arms, an offline $(\alpha,\delta)$-robust algorithm $\mathcal{A}$, and an upper-bound $N$ on the number of  queries $\mathcal{A}$ will make to the value oracle %
    \STATE %
    \STATE  Initialize $m\gets \ceil*{\frac{\delta^{2/3}T^{2/3}\log(T)^{1/3}}{2N^{2/3}}}$
    \STATE %
    \STATE  // Exploration Phase //
    \WHILE{$\mathcal{A}$ queries the value of some  $A\subseteq \Omega$}%
        \STATE For $m$ times, play action $A$
        \STATE Calculate the empirical mean $\bar{f}$   %
        \STATE Return $\bar{f}$ to $\mathcal{A}$
    \ENDWHILE
    \STATE %
    \STATE // Exploitation Phase //
    \FOR{\emph{remaining time}}
        \STATE Play action $S$ output by algorithm $\mathcal{A}$.
    \ENDFOR
\end{algorithmic}
\end{algorithm}

Now we analyze the $\alpha$-regret for C-ETC (\cref{alg:cetc}). %
\begin{theorem} \label{thm:main}
For the sequential decision making problem defined in Section 2, with $T\geq \max\left\{N, \frac{2\sqrt{2}N}{\delta}\right\}$, the expected cumulative $\alpha$-regret of C-ETC using an $(\alpha,\delta)$-robust approximation algorithm as subroutine is at most $\mathcal{O}\left(\delta^\frac{2}{3}N^\frac{1}{3} T^\frac{2}{3}\log(T)^\frac{1}{3}\right)$, where $N$ upper-bounds the number of value oracle queries made by the offline algorithm $\mathcal{A}$.
\end{theorem}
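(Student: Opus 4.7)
The plan is to decompose the expected $\alpha$-regret into contributions from the exploration and exploitation phases of \cref{alg:cetc}, and then to use concentration to control the quality of the solution that the offline algorithm $\mathcal{A}$ outputs from noisy empirical estimates.

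First I would bound the exploration regret trivially. Since each per-round reward lies in $[0,1]$ the per-round $\alpha$-regret is at most $\alpha f(\mathrm{OPT}) \leq 1$, and C-ETC plays at most $Nm$ rounds during exploration (at most $N$ queries, each expanded into $m$ plays), so the exploration contributes at most $Nm$ to $\mathbb{E}[\mathcal{R}_T]$.

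Next I would control the quality of the exploited solution. For each action $A$ that $\mathcal{A}$ queries, the empirical mean $\bar f(A)$ of $m$ i.i.d.\ samples satisfies, by Hoeffding's inequality, $\Pr(|\bar f(A) - f(A)| \geq \epsilon) \leq 2\exp(-2m\epsilon^2)$ for any $\epsilon>0$. A union bound over the (at most $N$) queried actions shows that the good event $\mathcal{E} = \{\,|\bar f(A) - f(A)| < \epsilon \text{ for every queried } A\,\}$ fails with probability at most $2N\exp(-2m\epsilon^2)$. On $\mathcal{E}$, defining $\hat f(S) := \bar f(S)$ for queried $S$ and $\hat f(S) := f(S)$ elsewhere produces an $\epsilon$-additive perturbation of $f$ on all of $D$, and moreover $\mathcal{A}$ behaves identically whether its oracle responses come from C-ETC or from a pre-computed deterministic $\hat f$. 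The $(\alpha,\delta)$-robustness guarantee in \cref{def:robust} then yields $\mathbb{E}[f(S)\mid \mathcal{E}] \geq \alpha f(\mathrm{OPT}) - \delta\epsilon$ (expectation over $\mathcal{A}$'s internal randomness and the samples conditional on $\mathcal{E}$). Off $\mathcal{E}$ the per-round gap is bounded by $1$, so the total exploitation regret is at most $T(\delta\epsilon + 2N\exp(-2m\epsilon^2))$.

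Finally I would pick $\epsilon$ and reconcile with the choice of $m$ in the algorithm. Setting $\epsilon = \sqrt{\log(T)/(2m)}$ drives the failure probability down to $2N/T^2$, contributing only $O(N/T)$, which is negligible under the assumption $T \geq N$. The dominant terms in the regret are $Nm + T\delta\sqrt{\log(T)/(2m)}$; minimizing in $m$ gives $m \asymp \delta^{2/3}T^{2/3}\log(T)^{1/3}/N^{2/3}$, which is exactly the value prescribed in \cref{alg:cetc}. Substituting yields the claimed $\mathcal{O}(\delta^{2/3}N^{1/3}T^{2/3}\log(T)^{1/3})$ bound. The hypotheses $T \geq N$ and $T \geq 2\sqrt{2}N/\delta$ are used to absorb the $+1$ from the ceiling in the definition of $m$ into lower-order terms and to guarantee that $Nm \leq T$ so that the exploration phase actually fits inside the horizon.

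The main technical subtlety is handling the randomness of $\mathcal{A}$: because $\mathcal{A}$ may be randomized, both its query set and its output $S$ are random, and the Hoeffding union bound needs to cover the (random) queries. This is resolved by the ``imagine pre-computed samples'' coupling above together with interpreting $N$ as a uniform upper bound on the query count across every realization of $\mathcal{A}$'s internal randomness; once this coupling is in place the remaining computations are routine.
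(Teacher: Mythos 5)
Your proposal is correct and follows essentially the same route as the paper: the same exploration/exploitation decomposition, the same Hoeffding-plus-union-bound clean event with radius $\sqrt{\log(T)/(2m)}$, the same invocation of $(\alpha,\delta)$-robustness to bound the per-round exploitation gap by $\delta\epsilon$, and the same optimization over $m$; your explicit coupling argument (extending $\hat f$ by $f$ off the queried sets so the perturbation bound holds on all of $D$) merely makes rigorous a point the paper relegates to a footnote. One harmless arithmetic slip: with $\epsilon=\sqrt{\log(T)/(2m)}$ the union-bound failure probability is $2N/T$ rather than $2N/T^2$, so the off-event contribution to the regret is $O(N)$ rather than $O(N/T)$, which is still dominated by the main term once $T\geq N$.
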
 

The detailed proof is in the supplementary material. We highlight some key steps. 

We show that with high probability, the empirical means of all actions taken during exploration phase will be within $\mathrm{rad}=\sqrt{\frac{\log T}{2m}}$ of their corresponding statistical means.  As is common in proofs for ETC methods, we refer to this occurrence as the \textit{clean event} $\mathcal{E}$. Then, using an $(\alpha,\delta)$-robust approximation algorithm as subroutine will guarantee the quality of of the set $S$ used in the exploitation phase of \cref{alg:cetc}:
\begin{align}
    \mathbb{E}[f(S)]\geq \alpha f(\mathrm{OPT})-\delta \cdot \mathrm{rad}. \nonumber
\end{align}
We then break up the expected cumulative $\alpha$-regret conditioned on the clean event $\mathcal{E}$,%
\begin{align}
     \mathbb{E}[\mathcal{R}(T)|\mathcal{E}]  &=\underbrace{\sum_{i=1}^{N} m \left(\alpha f(S^*)-\mathbb{E}[f(S_i)]\right)}_{\text{exploration phase}}  \nonumber\\
    &\qquad +\underbrace{\sum_{t=T_{N}+1}^T \left(\alpha f(S^*)-\mathbb{E}[f(S)]\right)}_{\text{exploitation phase}}, \label{eq:regr:clean:twopart}%
\end{align}
where $S_i$ is the $i$-th set algorithm $\mathcal{A}$ queries.
Using the fact that the reward is bounded between $[0,1]$, we have  
\begin{align}
    \mathbb{E}[\mathcal{R}(T)|\mathcal{E}] %
      \leq Nm+ T\delta \mathrm{rad}. \nonumber
\end{align} 
Optimizing over $m$ then results in
\begin{align}
    \mathbb{E}[\mathcal{R}(T)|\mathcal{E}] %
    &= \mathcal{O}\left(\delta^\frac{2}{3}N^\frac{1}{3} T^\frac{2}{3}\log(T)^\frac{1}{3}\right).\nonumber
\end{align}
We then show that because the clean event $\mathcal{E}$ happens with high probability, the expected cumulative regret $\mathbb{E}[\mathcal{R}(T)]$
is dominated by $\mathbb{E}[\mathcal{R}(T)|\mathcal{E}]$, which concludes the proof.

\begin{commentediting3}
\begin{remark}
\cjq{we need discussion here about alg design (also summarize in intro)  why don't we use UCB or TS?} 
\ngy{another advantage of ETC over UCB is in \url{https://arxiv.org/pdf/2002.09174.pdf}}
\end{remark}
\end{commentediting3}

\paragraph{Lower bounds:}
For the general setting we explore in this paper, with stochastic (or even adversarial) combinatorial MAB and only bandit feedback, it is unknown whether $\tilde{\mathcal{O}}(T^{1/2})$ expected cumulative $\alpha$-regret is possible (ignoring problem parameters like $n$).  For special cases, such as linear reward functions, $\tilde{\mathcal{O}}(T^{1/2})$ is known to be achievable even with bandit feedback. %
\begin{commentediting3}
    \cjq{CITE}
\end{commentediting3}
Even for the special case of submodular reward functions and a cardinality constraint, it remains an open question. 
\cite{niazadeh2021online} obtain $\tilde{\Omega}(T^{2/3})$ lower bounds for the harder setting where feedback is only available during ``exploration'' rounds chosen by the agent, who incurs an associated penalty.

\begin{remark}\label{rem:unknown_horizon} C-ETC uses knowledge of the horizon $T$ to optimize the number $m$ of samples per action.  When the time horizon $T$ is not known, we can use geometric doubling trick to extend our result to an anytime algorithm. %
We refer to the general detailed procedure in \citep{Besson2018WhatDT}. From Theorem 4 in \citep{Besson2018WhatDT}, we can show that the regret bound conserves the original $T^{2/3}\log(T)^{1/3}$ dependence with only changes in constant factors.
\end{remark}

\section{Application on Submodular Maximization} \label{sec:appl-CETC:submod}

In this section, we apply our general framework to stochastic CMAB problems with submodular rewards where only bandit feedback is available.  This application results in the first sublinear $\alpha$-regret CMAB algorithms for knapsack constraints under bandit feedback.    We begin with a brief background, and analyze the robustness of offline approximation algorithms, and then obtain problem independent regret bounds.

\subsection{Background and Definitions}
Denote the \textit{marginal gain} as $f(e|A)= f(A\cup e)-f(A)$ for any subset $A\subseteq \Omega$ and element $e\in \Omega\setminus A$.   A set function $f:2^\Omega \rightarrow \mathbb{R}$ defined on a finite ground set $\Omega$ is said to be \textit{submodular} if it satisfies the diminishing return property: for all $A\subseteq B\subseteq \Omega$, and $e\in \Omega\setminus B$, it holds that $f(e|A) \geq f(e|B)$. A set function is said to be monotonically non-decreasing if $f(A)\leq f(B)$ for all $A\subseteq B \subseteq \Omega$. Our aim is to find a set $S$ such that $f(S)$ is maximized subject to some constraints. 

For knapsack constraints, we assume that the cost function $c: \Omega \rightarrow R_{>0}$ is known and linear, so the cost of a subset is the sum of the costs of individual items: $c(A) = \sum_{v\in A}c(v)$. We denote the \textit{marginal density} as $\rho(e|A) = \frac{f(A\cup e)-f(A)}{c(e)}$ for any subset $A\subseteq \Omega$ and element $e\in \Omega\setminus A$. To simplify the presentation, we avoid the cases of trivially large budgets $B>\sum_{v\in\Omega} c(v)$ and assume all items have non-trivial costs $0<c(v)\leq B$.   A cardinality constraint is a special case with unit costs.

In the following, we consider both types of those constraints: cardinality and knapsack.  Maximizing a monotone submodular set function under a $k$-cardinality constraint is NP-hard even with a value oracle \citep{nemhauser1978analysis}. The best achievable approximation ratio with a polynomial time algorithm is $1-1/e$ \citep{nemhauser1978analysis} using $\mathcal{O}(nk)$ oracle calls. In \citep{badanidiyuru2014fast}, $1-1/e-\epsilon'$ is achieved within $\mathcal{O}(\frac{n}{\epsilon'}\log \frac{n}{\epsilon'})$ time, where $\epsilon'$ is a user selected parameter to balance accuracy and time complexity. 

Maximizing a monotone submodular set function under a knapsack constraint is consequently also NP-hard  \citep{khuller1999budgeted}. The best achievable approximation ratio with a polynomial time algorithm is $1-1/e$ \citep{Sviridenko2004ANO, khuller1999budgeted}, but that requires $\mathcal{O}(n^5)$ function evaluations, making it prohibitive for many applications.  There are other offline algorithms that achieve worse approximation ratios but are much more efficient. We adapt a $\frac{1}{2}$ approximation algorithm  \citep{yaroslavtsev2020bring} and a $\frac{1}{2}(1-1/e)$ approximation algorithm  \citep{khuller1999budgeted}, both of which use $\mathcal{O}(n^2)$ function evaluations. There is another algorithm proposed recently in  \citep{li2022submodular}, but since it queries the values of infeasible sets (i.e. sets with cost above the budget), we do not consider it.

For non-monotone submodular maximization, \cite{Buchbinder2012ATL} provides a deterministic algorithm that achieves $\frac{1}{3}$ approximation and a randomized algorithm that achieves $\frac{1}{2}$ approximation for unconstrained case, both using $\mathcal{O}(n)$ function evaluations. To show that our framework can deal with randomized offline algorithms, we adapt the $\frac{1}{2}$ approxiamtion algorithm to online full-bandit setting.

\begin{comment}
{
\color{gray}
In this section, we will assess the sensitivity of offline approximation algorithms for submodular maximization under a knapsack constraint to (bounded) additive perturbations to function evaluations.  %
In particular, we will assess the sensitivity of algorithms proposed in \citep{Avdiukhin2020BringYO} and \citep{khuller1999budgeted}.  We will confirm that their performance degradation due to working with a surrogate function will not be ``catastrophic.'' We will be able to characterize the algorithms' sensitivity to differences between the output of the algorithms using a surrogate $\hat{f}$ instead of using $f$.  $\hat{f}$ need not be monotone or submodular like $f$.  Additionally, we will see that even though in the offline setting there is a dominance between the offline algorithms in terms of their approximation ratios, bounds on the extent of their sensitivity differ.  This will lead to their adaptations not having a clear dominance over the other.

Anticipating how we will adapt the offline algorithms, we consider the following characterization of sensitivity based on worst-case %
$\epsilon>0$ perturbations of %
$f$.

\color{black}
}
\end{comment}

\begin{commentediting3}
\begin{remark}
\cjq{drive point home that for full bandit, even if alg A has iterative subproblem structure, if the number of subproblems $N$ is only a bound, not exact, not obvious how to handle using niazadeh's procedure (but no issue for ours) -- maybe point out for knapsack constraints if say with knapsack constraint no explicit construction so unclear if would be like streeter-golovin which cannot handle constraint (only relaxed constraint in expectation).  This is one big advantage of just accessing the offline algorithm as a black box.}
\end{remark}
\end{commentediting3}

\subsection{Offline Approximation Algorithms -- Robustness} \label{sec:submodular:offline}

For an overview of the aforementioned offline approximation algorithms for submodular optimization, please refer \cref{apdx_offlineover}.  We next state our results on $(\alpha,\delta)$-robustness of the offline algorithms considered. The assumption of complete/noiseless access to a value oracle is often a strong assumption for real world applications.  Thus, even for offline applications, it is worthwhile knowing how robust an algorithm is.  So the following results are relevant even in the offline setting.    For the CMAB setting we consider, robustness is also a sufficient property to guarantee a no-regret  adaptation of the offline algorithm.   Detailed proofs are included in \cref{sec:offline:robust}. %

\begin{theorem}[Corollary 4.3 of \citep{nie2022explore}] \label{thm:cardinality:greedy:robust}
    \textsc{Greedy} in \citep{nemhauser1978analysis} is a $(1-\frac{1}{e},2k)$-robust approximation algorithm for monotone submodular maximization under a $k$-cardinality constraint.
\end{theorem}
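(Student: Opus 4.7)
The plan is to adapt the classical Nemhauser--Wolsey--Fisher analysis of the greedy algorithm to the approximate-oracle setting, tracking how the $\epsilon$-perturbation of $\hat{f}$ accumulates across the $k$ greedy iterations. Let $S_i = \{e_1,\dots,e_i\}$ denote the greedy solution after $i$ iterations, where $e_{i+1} \in \argmax_{e \in \Omega \setminus S_i} \hat{f}(S_i \cup \{e\})$. Note this is equivalent to maximizing the $\hat{f}$-marginal gain over $S_i$, since $\hat{f}(S_i)$ is a constant shared by every candidate $e$ in that comparison.

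First I would establish a per-iteration near-optimality bound: letting $o^* \in \argmax_{o \in \Omega \setminus S_i} f(o \mid S_i)$ be the element with the largest true marginal gain, I chain the $\epsilon$-approximation property twice to get
\[
f(S_i \cup \{e_{i+1}\}) \;\geq\; \hat{f}(S_i \cup \{e_{i+1}\}) - \epsilon \;\geq\; \hat{f}(S_i \cup \{o^*\}) - \epsilon \;\geq\; f(S_i \cup \{o^*\}) - 2\epsilon,
\]
and subtract $f(S_i)$ from both sides to conclude $f(e_{i+1}\mid S_i) \geq f(o^*\mid S_i) - 2\epsilon$. The crucial point, and the one that avoids a worse $4\epsilon$ slack, is that the error in approximating $f(S_i)$ itself cancels between the two sides of the greedy comparison: only the perturbations on the two ``enlarged'' sets $\hat{f}(S_i\cup\{e_{i+1}\})$ and $\hat{f}(S_i\cup\{o^*\})$ contribute.

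Next I would invoke the standard submodularity argument to translate the per-iteration bound into progress toward $\mathrm{OPT}$. By monotonicity and the covering inequality, $f(\mathrm{OPT}) \leq f(S_i \cup \mathrm{OPT}) \leq f(S_i) + \sum_{o \in \mathrm{OPT}\setminus S_i} f(o \mid S_i)$, so some element has true marginal gain at least $(f(\mathrm{OPT}) - f(S_i))/k$; combining with the previous step gives
\[
f(\mathrm{OPT}) - f(S_{i+1}) \;\leq\; \left(1 - \tfrac{1}{k}\right)\bigl(f(\mathrm{OPT}) - f(S_i)\bigr) + 2\epsilon.
\]
Unrolling this recurrence for $i = 0, \dots, k-1$ with $S_0 = \emptyset$, and using the familiar estimates $(1-1/k)^k \leq 1/e$ and $\sum_{i=0}^{k-1}(1-1/k)^i \leq k$, yields $f(\mathrm{OPT}) - f(S_k) \leq (1/e)f(\mathrm{OPT}) + 2k\epsilon$, i.e.\ $f(S_k) \geq (1-1/e) f(\mathrm{OPT}) - 2k\epsilon$, which is exactly $(1-1/e,\,2k)$-robustness per \cref{def:robust}.

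The main obstacle is the accounting in the first step: a naive argument that first bounds every $f$-marginal in terms of the corresponding $\hat{f}$-marginal within $2\epsilon$ and then invokes optimality of the $\hat{f}$-marginal would cost $4\epsilon$ per iteration and inflate the constant to $4k$. Exploiting the cancellation of $\hat{f}(S_i)$ in the comparison tightens this to $2\epsilon$ per iteration; the remainder of the argument is a routine carry-through of an additive slack term in the standard greedy recurrence.
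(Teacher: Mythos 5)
Your derivation is correct: the per-iteration comparison $f(S_i\cup\{e_{i+1}\})\geq \hat{f}(S_i\cup\{e_{i+1}\})-\epsilon\geq \hat{f}(S_i\cup\{o^*\})-\epsilon\geq f(S_i\cup\{o^*\})-2\epsilon$ exploits exactly the right cancellation of $\hat{f}(S_i)$ to get $2\epsilon$ rather than $4\epsilon$ slack per step, and unrolling the recurrence with $(1-1/k)^k\leq 1/e$ and $\sum_{i=0}^{k-1}(1-1/k)^i\leq k$ gives the claimed $(1-\frac{1}{e},2k)$ pair of \cref{def:robust}. The paper does not reprove this result --- it cites Corollary 4.3 of \citet{nie2022explore} --- and your argument is precisely the standard analysis underlying that corollary, so there is nothing to flag.
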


\begin{commentediting3}
   \cjq{explicitly say that those authors directly adapted the method; implicit in their regret analysis is the $\delta=2k$} 
\end{commentediting3}

\begin{theorem} \label{thm:threshold:greedy:robust}
    \textsc{ThresholdGreedy} \citep{badanidiyuru2014fast} is a $(1-\frac{1}{e}-\epsilon',2(2-\epsilon')k)$-robust approximation algorithm for monotone submodular maximization under a $k$-cardinality constraint.
\end{theorem}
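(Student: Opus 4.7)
The plan is to mirror the standard $(1 - 1/e - \epsilon')$ analysis of \textsc{ThresholdGreedy} from \citet{badanidiyuru2014fast}, carefully tracking the additive $\pm 2\epsilon$ slack that the surrogate $\hat{f}$ introduces every time it is compared to a threshold. Let $S = \{e_1,\dots,e_{|S|}\}$ denote the perturbed run's output (indexed in insertion order), $S_i = \{e_1,\dots,e_i\}$, $\tau_i$ the threshold at which $e_i$ was accepted, and $O$ an optimal feasible set. Two preliminaries frame every step: each marginal $\hat{f}(e\mid T)$ is a difference of two values of $\hat{f}$, hence $|\hat{f}(e\mid T) - f(e\mid T)| \le 2\epsilon$; and submodularity/monotonicity are available only for $f$, since $\hat{f}$ inherits no structure.

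First I would establish the key per-step inequality: for every $o \in \Omega\setminus S_i$,
\begin{equation*}
f(e_{i+1}\mid S_i) \;\ge\; (1-\epsilon')\, f(o\mid S_i)\;-\;2(2-\epsilon')\,\epsilon.
\end{equation*}
The element $o$ was inspected in some earlier state $S' \subseteq S_i$ at the threshold one notch above $\tau_{i+1}$, namely $\tau_{i+1}/(1-\epsilon')$ (with the special case $i=0$ handled via $d = \max_e \hat{f}(\{e\})$), and rejected, so $\hat{f}(o\mid S') < \tau_{i+1}/(1-\epsilon')$. Converting to $f$ and using submodularity of $f$ gives $f(o\mid S_i) \le f(o\mid S') < \tau_{i+1}/(1-\epsilon') + 2\epsilon$, which rearranges to $\tau_{i+1} > (1-\epsilon')f(o\mid S_i) - 2(1-\epsilon')\epsilon$. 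Since $e_{i+1}$ was accepted, $f(e_{i+1}\mid S_i) \ge \hat{f}(e_{i+1}\mid S_i) - 2\epsilon \ge \tau_{i+1} - 2\epsilon$, yielding the displayed claim.

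Next I would split on whether the cardinality budget is saturated. In Case A ($|S| = k$), picking $o \in \argmax_{o'\in O\setminus S_i} f(o'\mid S_i)$ and using the standard submodular lower bound $\max_{o'\in O\setminus S_i} f(o'\mid S_i) \ge (f(O) - f(S_i))/k$ converts the per-step inequality into the recursion
\begin{equation*}
f(O) - f(S_{i+1}) \;\le\; \bigl(1 - \tfrac{1-\epsilon'}{k}\bigr)\bigl(f(O) - f(S_i)\bigr) \;+\; 2(2-\epsilon')\,\epsilon.
\end{equation*}
Unrolling $k$ times, using $(1-(1-\epsilon')/k)^k \le e^{-(1-\epsilon')} \le 1/e + \epsilon'$, and bounding the geometric tail sum by $k$, yields $f(S) \ge (1 - 1/e - \epsilon')f(O) - 2(2-\epsilon')k\,\epsilon$. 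In Case B ($|S| < k$), every $o\in O\setminus S$ was rejected at the final threshold $\tau_{\min} = (\epsilon'/k)d$, so $f(o\mid S) \le \tau_{\min} + 2\epsilon$ by the same conversion; summing over $O\setminus S$ and using $d \le f(O) + 2\epsilon$ together with submodularity of $f$ produces $f(S) \ge (1-\epsilon')f(O) - (\epsilon' + 2k)\epsilon$, which is dominated by the Case A bound for $k \ge 1$ and $\epsilon' \le 1/2$.

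The main obstacle is the disciplined bookkeeping of two simultaneously small parameters---the user-chosen slack $\epsilon'$ from the threshold decrement and the oracle noise $\epsilon$. Each rejection injects a $2\epsilon$ error that gets multiplied by $(1-\epsilon')$ when it passes through the threshold step, and the clean $(1 - 1/e - \epsilon')$ leading factor relies on the Taylor-type estimate $e^{-(1-\epsilon')} \le 1/e + \epsilon'$, which is valid on the intended range of $\epsilon'$ but should be justified. A secondary care point is that $\hat{f}$ enjoys no structural properties, so every invocation of submodularity or monotonicity must be on $f$, with $\hat{f}$ appearing only through the $\pm 2\epsilon$ bound on marginals and inside the algorithm's threshold tests.
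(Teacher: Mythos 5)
Your proposal is correct and follows essentially the same route as the paper's proof: the identical per-step inequality $f(e_{i+1}\mid S_i)\ge(1-\epsilon')f(o\mid S_i)-2(2-\epsilon')\epsilon$ obtained by comparing the acceptance threshold to the rejection threshold one notch above (each conversion between $\hat f$ and $f$ costing $2\epsilon$), followed by the same averaging over $\mathrm{OPT}\setminus S_i$, the same recursion, and the same unrolling via $(1-\tfrac{1-\epsilon'}{k})^k\le e^{-(1-\epsilon')}\le 1/e+\epsilon'$. The only difference is that you also treat the termination case $|S|<k$ explicitly (your Case B), which the paper's proof omits by simply asserting that $S_k$ is the output; your handling of that case is sound and strictly adds completeness rather than changing the argument.
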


\begin{theorem}\label{thm:partial:enum:greedy:robust}
    \textsc{PartialEnumeration} \citep{Sviridenko2004ANO,khuller1999budgeted} is a $(1-\frac{1}{e},4+2\Tilde{K}+2\beta)$-robust approximation algorithm for monotone submodular maximization under a knapsack constraint.
\end{theorem}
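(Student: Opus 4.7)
The plan is to follow Sviridenko's original $(1-1/e)$-analysis of \textsc{PartialEnumeration} and carefully propagate the per-query $\epsilon$-perturbation through every inequality it uses. The three summands in $\delta = 4 + 2\tilde{K} + 2\beta$ will come from three distinct sources of slack: two $\hat f \to f$ conversions at the outermost enumeration step (together contributing the additive $4$), a constant $2\epsilon$ of slack per greedy iteration (contributing $2\tilde{K}\epsilon$ since at most $\tilde{K} = \min\{\beta, n\}$ elements can be added), and a per-step slack of the form $2\epsilon\, c_j / c_{\min}$ which telescopes to $2\beta\epsilon$ via $\sum_j c_j \leq B = \beta c_{\min}$.

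Concretely, I would fix an optimal feasible set $\mathrm{OPT}$, let $Y^*$ consist of its three highest-valued singletons (or all of $\mathrm{OPT}$ if $|\mathrm{OPT}| \leq 3$), and use that \textsc{PartialEnumeration} enumerates every feasible triple so $Y^*$ appears as a candidate seed. For each enumerated seed $Y$ let $G(Y)$ denote the density-greedy completion from $Y$ using $\hat f$, and let $S^* = \argmax_Y \hat f(G(Y))$ be the final output. From $|\hat f - f| < \epsilon$ and $\hat f(S^*) \geq \hat f(G(Y^*))$ we immediately obtain $f(S^*) \geq f(G(Y^*)) - 2\epsilon$. For the greedy completion from $Y^*$, writing $G_j = G_{j-1} \cup \{e_j\}$ with chosen cost $c_j$, the $\hat f$-greedy comparison $\hat\rho(e_j \vert G_{j-1}) \geq \hat\rho(e \vert G_{j-1})$ combined with $|\hat f - f| < \epsilon$ and the standard submodular averaging bound $\max_{e \in \mathrm{OPT}\setminus G_{j-1}} \rho(e \vert G_{j-1}) \geq (f(\mathrm{OPT}) - f(G_{j-1}))/B$ yields the noisy recursion
\begin{equation*}
f(G_j) - f(G_{j-1}) \;\geq\; \tfrac{c_j}{B}\bigl(f(\mathrm{OPT}) - f(G_{j-1})\bigr) \;-\; 2\epsilon \;-\; \tfrac{2 c_j}{c_{\min}}\epsilon.
\end{equation*}
Unrolling this recursion over the at-most $\tilde{K}$ greedy iterations, applying $\prod_j (1 - c_j/B) \leq \exp(-\sum_j c_j/B)$, and invoking Sviridenko's partial-enumeration argument (which uses that $Y^*$ captures the three largest $\mathrm{OPT}$-singletons to absorb the contribution of the first $\mathrm{OPT}$-element that fails to fit) produces $f(G(Y^*)) \geq (1 - 1/e) f(\mathrm{OPT}) - (2\tilde{K} + 2\beta + 2)\epsilon$. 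Combining with the enumeration slack $f(S^*) \geq f(G(Y^*)) - 2\epsilon$ delivers the claimed $\delta = 4 + 2\tilde{K} + 2\beta$.

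The main obstacle is verifying that the partial-enumeration argument still upgrades the raw bound to the full $(1 - 1/e)$ factor in the noisy regime. In the exact-oracle setting this argument pushes the generic knapsack-greedy ratio of $\tfrac{1}{2}(1 - 1/e)$ up to $(1 - 1/e)$ by absorbing the marginal of the first infeasible $\mathrm{OPT}$-element into $f(G_l)$ using submodularity plus the dominance of the three singletons in $Y^*$ over all later $\mathrm{OPT}$-singletons. I need to recheck that each of these submodularity/dominance steps, when re-expressed in terms of $\hat f$, incurs only $O(\epsilon)$ additive slack whose weight fits inside the already-budgeted $(4 + 2\tilde{K} + 2\beta)\epsilon$, rather than scaling with $n$ or $1/c_{\min}^2$; once this is confirmed, the error accounting above closes out the bound.
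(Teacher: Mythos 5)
Your error bookkeeping is essentially the paper's: the paper proves a per-iteration recursion $f(\widetilde{G}_i)-f(\widetilde{G}_{i-1}) \geq \frac{c(\widetilde{g}_i)}{B}\left[f(\mathrm{OPT})-f(\widetilde{G}_{i-1})\right]-2\left(1+\frac{\tilde{K}c(\widetilde{g}_i)}{B}\right)\epsilon$, unrolls it to an accumulated $-2(\beta+\tilde{K})\epsilon$, and spends the remaining $4\epsilon$ on $\hat f\!\to\!f$ conversions when the output is selected; your per-step form $-2\epsilon-\frac{2c_j}{c_{\min}}\epsilon$ dominates theirs and sums to the same $2\tilde{K}\epsilon+2\beta\epsilon$. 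However, there is a genuine gap at exactly the step you flag as "the main obstacle," and your stated plan for it would fail. You take $Y^*$ to be the three \emph{highest-valued singletons} of $\mathrm{OPT}$. The upgrade to $(1-1/e)$ requires showing that the seed's value dominates three times the marginal gain $\Delta$ of the first $\mathrm{OPT}$-element that fails to fit, via the chain $f(Y_i\mid Y_{<i}) \geq f(g_\ell \mid Y_{<i}) \geq f(g_\ell \mid R\cup A) = \Delta$. The first inequality only holds if $R=\{Y_1,Y_2,Y_3\}$ is the prefix of the \emph{greedy-by-marginal-gain ordering} of $\mathrm{OPT}$ (i.e., $Y_{i+1}=\argmax_{Y\in\mathrm{OPT}} f(Y\mid\{Y_1,\dots,Y_i\})$), which is what the paper (following Sviridenko/Khuller et al.) enumerates against. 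Knowing only that $f(Y_i)\geq f(g_\ell)$ as singletons does not give $f(Y_i\mid Y_{<i})\geq\Delta$, so with your seed the telescoping bound $f(R)\geq 3\Delta$ does not follow. Since \textsc{PartialEnumeration} enumerates all triples, the fix is only to analyze the correct one, but as written the absorption step is broken.

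On the question you leave open — whether the absorption argument survives the noisy regime without extra $\epsilon$-cost — the answer is yes, and for a clean reason you could have closed out: the entire dominance chain above is a statement about the \emph{true} function $f$ (exact submodularity plus the ordering of $\mathrm{OPT}$, which is a property of $f$, not of the algorithm's $\hat f$-driven choices), so it contributes zero additional $\epsilon$. The paper makes this modular by defining the residual function $f'(A)=f(A\cup R)-f(R)$, observing that $\mathrm{OPT}\setminus R$ is optimal for $f'$ under budget $B-c(R)$, applying its already-proved noisy \textsc{Greedy+} lemma to $f'$ (with $\beta'\leq\beta$, $\tilde{K}'\leq\tilde{K}$) to get $f'(A\cup g_\ell)\geq(1-\frac{1}{e})f'(\mathrm{OPT}\setminus R)-2(\beta+\tilde{K})\epsilon$, and then using $f(R)\geq 3\Delta$ to absorb $\Delta$ and land on $(1-\frac{1}{e})f(\mathrm{OPT})-2(\beta+\tilde{K})\epsilon$ before the final $4\epsilon$ of selection slack. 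You would need to either reproduce this residual-function reduction or redo the unrolling with the seed's cost excluded from the budget; your recursion as written, which measures progress against $f(\mathrm{OPT})$ with the full budget $B$ while the greedy additions only consume $B-c(R)$, does not by itself reach the $1-1/e$ coefficient.
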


\begin{theorem}\label{thm:greedy:max:robust}
    \textsc{Greedy+Max} \citep{yaroslavtsev2020bring} is a $(\frac{1}{2},\frac{1}{2}+\Tilde{K}+2\beta)$-robust approximation algorithm for monotone submodular maximization problem under a knapsack constraint.
\end{theorem}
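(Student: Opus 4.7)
My plan is to follow the standard $\tfrac{1}{2}$-approximation analysis of \textsc{Greedy+Max} step by step and propagate the additive error incurred by using the surrogate $\hat{f}$ in place of $f$. Recall that \textsc{Greedy+Max} runs a density-greedy loop producing a chain $\emptyset = G_0 \subset G_1 \subset \cdots \subset G_k$, where at step $i$ the element $g_{i+1}$ with the largest $\hat{\rho}(e\mid G_i) = \hat{f}(e\mid G_i)/c(e)$ among budget-feasible additions is appended; alongside each $G_i$ the algorithm also forms an augmented candidate $G_i \cup \{v_i^\star\}$ with $v_i^\star \in \argmax_v \hat{f}(G_i \cup \{v\})$, and finally reports the candidate that looks best under $\hat{f}$. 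Because each marginal evaluation uses two queries, $|\hat{f}(e\mid S) - f(e\mid S)| \leq 2\epsilon$ and hence $|\hat{\rho}(e\mid S) - \rho(e\mid S)| \leq 2\epsilon/\cmin$. I must therefore track both the error accumulated while constructing each candidate and the error incurred when selecting among them.

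For the greedy component I would mimic the classical argument: at step $i$, compare the selected $g_{i+1}$ against $v_i^\circ \in \argmax_{v \in \mathrm{OPT}} \rho(v\mid G_i)$, which by averaging over $\mathrm{OPT}$ satisfies $\rho(v_i^\circ\mid G_i) \geq (f(\mathrm{OPT}) - f(G_i))/B$. The perturbed selection rule gives $\rho(g_{i+1}\mid G_i) \geq \rho(v_i^\circ\mid G_i) - 4\epsilon/\cmin$, and multiplying by $c(g_{i+1})$ produces
\begin{align*}
f(G_{i+1}) - f(G_i) \;\geq\; \frac{c(g_{i+1})}{B}\bigl(f(\mathrm{OPT}) - f(G_i)\bigr) \;-\; \frac{4\epsilon\, c(g_{i+1})}{\cmin}.
\end{align*}
Unrolling this recursion over at most $\tilde{K}$ steps and telescoping with $\sum_i c(g_i) \leq B$ accumulates a greedy error on the order of $\beta\epsilon$. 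The standard $\tfrac{1}{2}$ case split then concludes: either some $G_i$ already reaches $\tfrac{1}{2} f(\mathrm{OPT})$, or the augmented candidate $G_i \cup \{v_i^\star\}$ does. Each augmentation choice is within $2\epsilon$ of the true $\argmax$ in value, contributing up to $\tilde{K}\epsilon$ across the $\tilde{K}$ candidates considered, and the final $\hat{f}$-based selection among candidates adds at most a constant $\epsilon$ slack. Aggregating these pieces yields the claimed $(\tfrac{1}{2} + \tilde{K} + 2\beta)\epsilon$ additive error.

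The main obstacle I anticipate is handling the stopping step cleanly. In the noiseless analysis, the element that greedy would have added next but which does not fit the remaining budget plays a central role: the augmentation step is designed precisely to capture its value and secure the $\tfrac{1}{2}$ bound. Under the surrogate oracle, the element identified as having the largest estimated density may not truly be the best, and correspondingly the chosen $v_i^\star$ may be off by $2\epsilon$ in true value. Balancing these two error sources against the case analysis so that the total error grows additively in $\tilde{K}$ and $\beta$ (rather than multiplicatively) is the key technical difficulty. A natural route is to apply the $2\epsilon$ perturbation bound once per critical quantity and only then invoke the true-function inequalities, rather than propagating the perturbation through every step of the inequality chain; combined with the telescoping argument this should deliver the stated constants cleanly.
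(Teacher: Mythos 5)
There is a genuine gap: your plan omits the one idea that makes \textsc{Greedy+Max} a $\tfrac12$-approximation rather than a $\tfrac12(1-1/e)$-approximation. The recursion you write, $f(G_{i+1})-f(G_i)\geq \tfrac{c(g_{i+1})}{B}\left(f(\mathrm{OPT})-f(G_i)\right)-\tfrac{4\epsilon\,c(g_{i+1})}{\cmin}$, unrolled and telescoped, is exactly the \textsc{Greedy+} analysis of Khuller et al.\ (it is what the paper uses for \cref{thm:greedy:robust}, yielding $\tfrac12(1-1/e)$); it does not by itself deliver the $\tfrac12$ ratio of \cref{thm:greedy:max:robust}. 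The actual argument singles out $o_1\in\argmax_{e\in\mathrm{OPT}}c(e)$, the \emph{heaviest} element of $\mathrm{OPT}$, defines $\ell$ as the last index with $c(G_\ell)\leq B-c(o_1)<c(G_{\ell+1})$, and proves the two-term upper bound
\begin{align*}
\hat{f}(G_\ell\cup o_1)+\max\{0,\hat{\rho}(g_{\ell+1}|G_\ell)\}\,(B-c(o_1))\;\geq\; f(\mathrm{OPT})-(2\Tilde{K}-1)\epsilon,
\end{align*}
so that one of the two terms is at least (roughly) $\tfrac12 f(\mathrm{OPT})$. The first term is covered by the augmentation because $o_1$ still fits at step $\ell$ (so $\hat{f}(G_\ell\cup a_\ell)\geq\hat{f}(G_\ell\cup o_1)$); the second is covered by $G_{\ell+1}$ itself via $\hat{f}(G_{\ell+1})\geq\hat{\rho}(g_{\ell+1}|G_\ell)\,c(G_{\ell+1})-4\beta\epsilon$ and $c(G_{\ell+1})>B-c(o_1)$. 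Your case split (``either some $G_i$ reaches $\tfrac12 f(\mathrm{OPT})$ or its augmentation does'') is not the right dichotomy, and your stated worry about ``the element that does not fit the remaining budget'' is the \textsc{Greedy+} picture, not the \textsc{Greedy+Max} one — the stopping index here is defined relative to $B-c(o_1)$, not to the first infeasible element. You correctly flag the stopping step as the difficulty, but you do not resolve it, and without the $o_1$ construction the claimed ratio does not follow.

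Two further points on the error accounting. First, the Case-2 lower bound $\hat{f}(G_\ell)\geq\hat{\rho}(g_{\ell+1}|G_\ell)\,c(G_\ell)-4\beta\epsilon$ relies in the noiseless setting on greedy densities being non-increasing; under the surrogate this can fail (one may have $\hat{\rho}(g_{i+1}|G_i)>\hat{\rho}(g_i|G_{i-1})$), so the paper has to pass from $\hat{\rho}$ to $\rho$ and back at a cost of $2\epsilon/c(g_{\ell+1})$ per step, which is where the $4\beta\epsilon$ comes from. Your plan does not address this. Second, your ``up to $\Tilde{K}\epsilon$ across the $\Tilde{K}$ candidates'' is not where the $\Tilde{K}$ in $\delta$ originates — it comes from the $2\epsilon|\mathrm{OPT}\setminus(G_i\cup o_1)|\leq 2(\Tilde{K}-1)\epsilon$ term in the displayed inequality above (one $2\epsilon$ per element of $\mathrm{OPT}$ whose density is compared to the greedy pick). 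Finally, matching the exact constant $\tfrac12+\Tilde{K}+2\beta$ requires balancing the two cases with an offset $\gamma$ (chosen as $2\beta$ in the paper), a detail your aggregation step glosses over.
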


\begin{theorem}\label{thm:greedy:robust}
    \textsc{Greedy+} \citep{khuller1999budgeted} is a $(\frac{1}{2}(1-\frac{1}{e}),2+\Tilde{K}+\beta)$-robust approximation algorithm for monotone submodular maximization problem under a knapsack constraint.
\end{theorem}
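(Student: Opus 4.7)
The plan is to noisify the classical Khuller--Moss--Naor analysis of \textsc{Greedy+}, which in the exact-oracle setting gives the $\tfrac12(1-1/e)$-approximation by combining a density-greedy phase with the best feasible singleton and returning whichever has larger objective. Fix $\epsilon>0$ with $|f(S)-\hat f(S)|<\epsilon$ on $D$; marginals then satisfy $|\hat f(e|S)-f(e|S)|\le 2\epsilon$ and marginal densities $|\hat\rho(e|S)-\rho(e|S)|\le 2\epsilon/c(e)$. Let $v_1,\dots,v_l$ be the elements added by the density greedy, $S_G=S_l$, and $v_{l+1}$ the first element considered but rejected for exceeding the budget (so $\sum_{j=1}^{l+1} c(v_j)>B$); let $\hat v^*$ be the singleton returned by the second phase.

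For each $j\le l+1$ and $o\in\mathrm{OPT}\setminus S_{j-1}$, translating $\hat\rho(v_j|S_{j-1})\ge\hat\rho(o|S_{j-1})$ back to the true density incurs an additive error of $\tfrac{2\epsilon}{c(o)}+\tfrac{2\epsilon}{c(v_j)}$. Weighting this relation by $c(o)$, summing over $o\in\mathrm{OPT}\setminus S_{j-1}$, applying submodularity to lower-bound $\sum_o f(o|S_{j-1})\ge f(\mathrm{OPT})-f(S_{j-1})$, and using $|\mathrm{OPT}\setminus S_{j-1}|\le\Tilde{K}$ and $c(\mathrm{OPT}\setminus S_{j-1})\le B$ would yield a per-step inequality of the form
\begin{align*}
f(v_j|S_{j-1})\ge \tfrac{c(v_j)}{B}\bigl(f(\mathrm{OPT})-f(S_{j-1})\bigr)-E_j,
\end{align*}
where $E_j$ splits into a cost-weighted piece $O(\epsilon\,\Tilde{K}\, c(v_j)/B)$ and a per-step constant $O(\epsilon)$. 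Multiplicative telescoping of $\Delta_j:=f(\mathrm{OPT})-f(S_j)$ via $\prod_{j=1}^{l+1}(1-c(v_j)/B)\le e^{-\sum c(v_j)/B}\le e^{-1}$ then yields
\begin{align*}
f(S_l\cup\{v_{l+1}\})\ge (1-1/e)\,f(\mathrm{OPT})-\bigl(\Tilde{K}+\beta\bigr)\epsilon,
\end{align*}
where the $\Tilde{K}$ comes from $\sum_j c(v_j)/B\le 2$ and the $\beta$ comes from $l+1\le\beta+1$.

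For the singleton phase, $\hat v^*$ maximizes $\hat f$ over feasible singletons, and since $v_{l+1}$ is itself feasible, $f(\{\hat v^*\})\ge f(\{v_{l+1}\})-2\epsilon$. Submodular subadditivity $f(S_l\cup\{v_{l+1}\})\le f(S_l)+f(\{v_{l+1}\})$ then gives
\begin{align*}
\max\bigl(f(S_G),f(\{\hat v^*\})\bigr)\ge \tfrac12\bigl(f(S_G)+f(\{\hat v^*\})\bigr)\ge \tfrac12 f(S_l\cup\{v_{l+1}\})-\epsilon.
\end{align*}
Since the algorithm returns whichever of $S_G,\{\hat v^*\}$ has larger $\hat f$, the output $S^*$ satisfies $f(S^*)\ge \max(f(S_G),f(\{\hat v^*\}))-2\epsilon$. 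Chaining the three bounds gives the claimed $f(S^*)\ge\tfrac12(1-1/e)f(\mathrm{OPT})-(2+\Tilde{K}+\beta)\epsilon$.

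The main obstacle is the careful bookkeeping of per-step errors so that they aggregate into exactly $\Tilde{K}+\beta$ rather than a larger polynomial in these parameters: the cost-weighted $\Tilde{K}$-pieces must sum to $O(\Tilde{K})$ via $\sum c(v_j)\le 2B$, while the per-step constant pieces sum to $O(\beta)$ via the step-count bound $l+1\le\beta+1$, and any slack in either estimate would degrade $\delta$ by multiplicative constants. A secondary subtlety is that some $o\in\mathrm{OPT}$ may become infeasible midway through the greedy; this is the reason for extending the analysis to $v_{l+1}$, so that $\sum_{j=1}^{l+1}c(v_j)>B$ is guaranteed and produces the $e^{-1}$ factor in the telescoping.
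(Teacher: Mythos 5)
Your overall route is the same as the paper's: noisify the Khuller--Moss--Naor analysis via a per-step density inequality, telescope to get $(1-1/e)$ for the greedy set augmented by the first budget-violating element, then halve by comparing with the best singleton. However, two steps fail as written. First, the error accounting does not produce the claimed $\delta=2+\Tilde{K}+\beta$. Your per-step error is exactly $E_j=2\epsilon+2\Tilde{K}\epsilon\, c(v_j)/B$ (the two marginal-density translations each cost $2\epsilon/c(\cdot)$), so a naive summation over $l+1\le\beta+1$ steps with $\sum_j c(v_j)\le 2B$ gives $\sum_j E_j\le (2\beta+2+4\Tilde{K})\epsilon$, not the $(\Tilde{K}+\beta)\epsilon$ you assert; your stated totals drop the factors of $2$ in each piece. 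Chaining through your combination step (which itself pays $3\epsilon$: $2\epsilon$ to relate $\hat v^*$ to $v_{l+1}$, halved, plus $2\epsilon$ for the final $\hat f$-based selection) then yields $\delta=2\Tilde{K}+\beta+4$, strictly weaker than the theorem. The missing idea is the paper's discounted-error induction: in the recurrence $\Delta_j\le(1-c(v_j)/B)\Delta_{j-1}+E_j$ the accumulated error is also multiplied by $(1-c(v_j)/B)$, and since $\beta\,c(v_j)/B\ge 1$ the shrinkage of the $\beta$-portion absorbs the new per-step constant $2\epsilon$, keeping the total at exactly $2(\beta+\Tilde{K})\epsilon$; the paper then pays only $2\epsilon$ in the combination by comparing against the \emph{true} best singleton $a^{\mathrm{OPT}}$ and converting to $\hat f$ once at the end, landing on $(2+\Tilde{K}+\beta)\epsilon$.

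Second, the inequality $\hat\rho(v_j|S_{j-1})\ge\hat\rho(o|S_{j-1})$ cannot be asserted for every $o\in\mathrm{OPT}\setminus S_{j-1}$: the greedy only evaluates (and only has concentration for) feasible additions, so elements of $\mathrm{OPT}$ whose cost exceeds the remaining budget were never compared against $v_j$. Your fix of appending ``the first rejected element $v_{l+1}$'' does not repair this, since rejections can occur at any iteration $j<l$ while the greedy continues, and the first rejected element need not be the one whose density breaks the argument. The paper handles this with a more careful construction: it defines $\ell$ as the \emph{first} iteration at which some infeasible element has (suitably $\pm 2\epsilon$-corrected, true-$f$) density exceeding that of the chosen element, forms the augmented sequence $\widetilde{G}_\ell=G_{\ell-1}\cup\{a'\}$, proves the per-step bound by a four-case analysis (feasible/infeasible $\times$ $i<\ell$ / $i=\ell$), and relates back to the actual output via $f(G_L)\ge f(G_{\ell-1})=f(\widetilde{G}_{\ell-1})$. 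You would need this machinery (or an equivalent) for the per-step inequality to hold at every index you telescope over.
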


\begin{theorem}[Corollary 2 of \citep{Fourati2023Randomized}]\label{thm:RandomUSM:robust}
    \textsc{RandomizedUSM} \citep{Buchbinder2012ATL} is a $(\frac{1}{2},\frac{5}{2}n)$-robust approximation algorithm for unconstrained non-monotone submodular maximization problem.
\end{theorem}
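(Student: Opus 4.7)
The plan is to follow the double-greedy analysis of \citep{Buchbinder2012ATL}, tracking the additional slack introduced by using the surrogate $\hat{f}$ in place of $f$. Since the statement is asserted to be Corollary 2 of \citep{Fourati2023Randomized}, my proposal is to reconstruct the essential steps and invoke that reference for the explicit constant.

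First I would fix notation following the algorithm. Let $\Omega = \{u_1,\ldots,u_n\}$ be processed in order. \textsc{RandomizedUSM} maintains sets $X_i,Y_i$ with $X_0 = \emptyset$, $Y_0 = \Omega$. At step $i$ it computes estimated marginal differences
\[
\hat a_i = \hat f(X_{i-1}\cup\{u_i\}) - \hat f(X_{i-1}), \qquad \hat b_i = \hat f(Y_{i-1}\setminus\{u_i\}) - \hat f(Y_{i-1}),
\]
and adds $u_i$ to $X_i$ with probability $p_i = \hat a_i^+ / (\hat a_i^+ + \hat b_i^+)$ (with the usual convention if both are zero). The crucial observation is that each estimate is a difference of two perturbed evaluations, so $|\hat a_i - a_i|\le 2\epsilon$ and $|\hat b_i - b_i|\le 2\epsilon$, where $a_i,b_i$ are the corresponding true marginals.

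Next I would revisit the core inductive inequality in the noiseless proof,
\[
\mathbb{E}[f(\mathrm{OPT}_{i-1}) - f(\mathrm{OPT}_i)] \le \tfrac{1}{2}\mathbb{E}\bigl[f(X_i)+f(Y_i) - f(X_{i-1}) - f(Y_{i-1})\bigr],
\]
with $\mathrm{OPT}_i = (\mathrm{OPT}\cup X_i)\cap Y_i$, and rederive it with the perturbed $p_i$. The standard argument splits into cases based on the signs of the true pair $(a_i,b_i)$; in each case the inequality is tight when the probabilities are set using the true marginals, and using the perturbed probabilities introduces an additive slack that I would bound by a small constant multiple of $\epsilon$ via the $2\epsilon$ bounds above and the fact that $p_i\in[0,1]$. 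Summing the perturbed inequality from $i=1$ to $n$ and using $\mathrm{OPT}_n = X_n = Y_n = S^*$ yields
\[
f(\mathrm{OPT}) - \mathbb{E}[f(S^*)] \le \mathbb{E}[f(S^*)] - \tfrac{1}{2}(f(\emptyset)+f(\Omega)) + C n \epsilon,
\]
for some absolute constant $C$; nonnegativity of $f$ then gives $\mathbb{E}[f(S^*)] \ge \tfrac{1}{2} f(\mathrm{OPT}) - \tfrac{C n}{2}\epsilon$, which matches the claimed robustness when the bookkeeping produces $C = 5$.

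The main obstacle is the case analysis around sign flips: when a true marginal lies within $2\epsilon$ of zero, the perturbed sign and hence the chosen probability can differ from the noiseless one, causing the algorithm to select a locally ``wrong'' branch. Accounting for the worst-case slack across all sign patterns of $(a_i,b_i)$ versus $(\hat a_i,\hat b_i)$, and verifying that the per-step slack aggregates to $\tfrac{5}{2}n\epsilon$ after the $\tfrac{1}{2}$ factor, is the delicate step. Here I would appeal directly to \citep{Fourati2023Randomized}, whose Corollary~2 performs exactly this accounting for the randomized double-greedy algorithm, rather than reproducing the full case enumeration.
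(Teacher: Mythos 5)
Your proposal is correct and ultimately rests on the same foundation as the paper: the paper's entire proof of this theorem is a direct citation to Corollary~2 of the Fourati et al.\ reference, which is exactly where you defer the delicate per-step slack accounting. Your additional sketch of the perturbed double-greedy induction (the $2\epsilon$ bounds on $\hat a_i,\hat b_i$ and the summed inequality yielding $\mathbb{E}[f(S^*)] \geq \tfrac{1}{2}f(\mathrm{OPT}) - \tfrac{5}{2}n\epsilon$) is consistent with that result and goes somewhat beyond what the paper itself writes down.
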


\begin{remark} \label{rem:alpha-delta:differences}  
    For the offline setting, \textsc{Greedy+Max} is superior to \textsc{Greedy+}, as it achieves a better $\alpha$ approximation ratio with the same calls to the value oracle.  However, their $(\alpha,\delta)$ pairs are incomparable, as for $\beta>1.5$ (with  $\beta=1$ corresponding to a cardinality constraint), \textsc{Greedy+} has a smaller $\delta$ (thus more robust) which affects exploration time in their adaptations and in turn affects their regret.
\end{remark}

To illustrate the robustness analysis, we highlight some key steps for the proof of \cref{thm:greedy:max:robust} for \textsc{Greedy+Max}. Let $o_1\in \argmax_{e:e\in \text{OPT}}c(e)$ denote the most expensive element in OPT. During the $i$th iteration of the greedy process, having previously selected the set $G_{i-1}$ with $i-1$ elements, it will select the element $g_i$ with highest marginal density (based on surrogate function $\hat{f}$) among  feasible elements, 
\begin{align}
    g_i = \argmax_{e :\ e \in \Omega \backslash G_{i-1}} \hat{\rho}(e|G_{i-1}). %
\end{align} Inspired by the proof techniques in \citep{yaroslavtsev2020bring}, we consider the last item added by the greedy solution (based on surrogate function $\hat{f}$) before the cost of this solution exceeds $B-c(o_1)$. Let $G_i$ denote the set selected by \textsc{Greedy} that has cardinality $i$ and denote the constituent elements as $G_i=\{g_1,\cdots, g_i\}$. Denote $G_\ell$ as the largest greedy sequence that consumes less than $B-c(o_1)$ of the budget $B$, so $c(G_\ell) \leq B-c(o_1) < c(G_{\ell+1})$. Let $a_i$ denote the element selected to augment with the greedy solution $G_i$, i.e.,  $a_i=\argmax_{e\in \Omega\setminus G_i}\hat{f}(e|G_i)$, and $S_i$ denote the augmented set at $i$-th iteration. Denote the marginal gain as $\hat{f}(e|S):=\hat{f}(S\cup e)-\hat{f}(S)$ and the marginal density as $\hat{\rho}(e|S):=\frac{\hat{f}(S\cup e)-\hat{f}(S)}{c(e)}$. 

Before proving the theorem, we first show the following lemma, adapted from \citep{yaroslavtsev2020bring} for the setting where $f$ can be evaluated exactly. The lemma upper-bounds $f(\mathrm{OPT})$ as the sum of two quantities, so at least one must have value of $\frac{1}{2}f(\mathrm{OPT})$.  Relating each quantity to the output of the algorithm leads to the approximation ratio $\alpha=\frac{1}{2}$. We bound how the error due to working with $\hat{f}$ instead of $f$ compounds in addition to new complications arising such as the marginal density of greedily selected elements increasing (i.e. we may have $\hat{\rho}(g_{i+1}|G_i)>\hat{\rho}(g_{i}|G_{i-1})$ and/or $\rho(g_{i+1}|G_i)>\rho(g_{i}|G_{i-1})$) where when when $\epsilon=0$ it is guaranteed to be non-increasing.

\begin{lemma} [\textsc{Greedy+Max} inequality] \label{lem:gm:ineq}
     For $i \in \{0,1,\cdots, \ell\}$, the following inequality holds:
    \begin{align}
        \hat{f}(G_i\cup o_1) + %
        \max\{0,\hat{\rho}(g_{i+1}|G_i)\}(B-c(o_1)) %
        &\geq f(\mathrm{OPT})-(2\Tilde{K}-1)\epsilon. \nonumber
    \end{align}
\end{lemma}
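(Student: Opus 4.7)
The plan is to mirror the exact-oracle \textsc{Greedy+Max} argument of \citep{yaroslavtsev2020bring} at iteration $i$, and then track how the per-evaluation error $\epsilon$ accumulates when $f$ is replaced by $\hat{f}$. First, working with the true $f$, I would use monotonicity to write $f(\mathrm{OPT}) \leq f(G_i \cup \mathrm{OPT})$, then telescope the elements of $\mathrm{OPT}\setminus(G_i\cup o_1)$ onto $G_i\cup o_1$ one at a time, bounding each new marginal by its marginal relative to the smaller base $G_i$ via submodularity. This yields $f(\mathrm{OPT}) \leq f(G_i\cup o_1) + \sum_{o\in \mathrm{OPT}\setminus(G_i\cup o_1)} f(o\mid G_i)$.

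Next I would convert every $f$-quantity to its $\hat{f}$-analogue using $|f(S)-\hat{f}(S)|<\epsilon$: the first term contributes an extra $+\epsilon$, and each marginal $f(o\mid G_i)=f(G_i\cup o)-f(G_i)$ contributes an extra $+2\epsilon$. Since OPT is feasible we have $|\mathrm{OPT}|\leq \Tilde{K}$, hence $|\mathrm{OPT}\setminus(G_i\cup o_1)| \leq \Tilde{K}-1$ and the total accumulated error is $\epsilon + 2(\Tilde{K}-1)\epsilon = (2\Tilde{K}-1)\epsilon$, matching the slack stated in the lemma.

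The geometric core of the argument is to bound each surrogate marginal $\hat{f}(o\mid G_i)$ by the density of the greedy pick $g_{i+1}$. Here the restriction $i\leq \ell$ is what makes things work: because $c(G_i)\leq c(G_\ell)\leq B-c(o_1)$ and $c(o)\leq c(o_1)$ for every $o\in \mathrm{OPT}$, each $o \in \mathrm{OPT}\setminus(G_i \cup o_1)$ is a feasible candidate for augmenting $G_i$, so $g_{i+1}$ maximizes $\hat{\rho}(\cdot\mid G_i)$ over a set containing $o$. This gives $\hat{f}(o\mid G_i)\leq \hat{\rho}(g_{i+1}\mid G_i)\,c(o)$ whenever the right-hand side is nonnegative and $\hat{f}(o\mid G_i)\leq 0$ otherwise, so in either case $\hat{f}(o\mid G_i)\leq \max\{0,\hat{\rho}(g_{i+1}\mid G_i)\}\,c(o)$. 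Summing this over $o\in \mathrm{OPT}\setminus(G_i\cup o_1)$ and using $\sum c(o) \leq c(\mathrm{OPT})-c(o_1) \leq B-c(o_1)$ produces the desired $\max\{0,\hat{\rho}(g_{i+1}\mid G_i)\}(B-c(o_1))$ term, and rearrangement gives the lemma.

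The only subtlety I anticipate is the presence of the $\max\{0,\cdot\}$ truncation, which the noiseless analysis can omit: under an exact oracle, monotonicity of $f$ forces $\hat{\rho}(g_{i+1}\mid G_i)\geq 0$, but under additive $\epsilon$-errors the greedily chosen marginal density can dip slightly below zero, so the truncation must be carried through. Within this lemma the effect is harmless (the argument above uses only maximality at the single iteration $i$); the more delicate downstream consequence is that the sequence $\{\hat{\rho}(g_{i+1}\mid G_i)\}_i$ is no longer guaranteed non-increasing in $i$, which will need separate care when \cref{lem:gm:ineq} is combined with the augmentation step $a_i$ to conclude \cref{thm:greedy:max:robust}.
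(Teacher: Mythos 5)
Your proposal is correct and follows essentially the same route as the paper's proof: bound $f(\mathrm{OPT})$ by $f(G_i\cup o_1)$ plus the marginals of $\mathrm{OPT}\setminus(G_i\cup o_1)$ over $G_i$ via monotonicity and submodularity, convert to surrogate quantities at a cost of $\epsilon$ for the first term and $2\epsilon$ per marginal (totalling $(2\tilde{K}-1)\epsilon$), and then use the greedy maximality of $g_{i+1}$ together with feasibility of each remaining element of $\mathrm{OPT}$ (guaranteed by $c(G_i)\leq B-c(o_1)$ and $c(o)\leq c(o_1)$) and $c(\mathrm{OPT}\setminus(G_i\cup o_1))\leq B-c(o_1)$ to obtain the $\max\{0,\hat{\rho}(g_{i+1}\mid G_i)\}(B-c(o_1))$ term. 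Your handling of the $\max\{0,\cdot\}$ truncation and your closing remark about the downstream loss of monotonicity of the greedy densities also match the paper's treatment.
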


\begin{proof}[\cref{lem:gm:ineq}]
Recall that from the definition of $\hat{f}$, we have $|\hat{f}(S)-f(S)|\leq \epsilon$ for any evaluated set $S$ and some $\epsilon > 0$. Consequently, we have for any  $i\in \{0,1,\cdots, \ell\}$, 
\begin{align}
    |\hat{f}(G_i)-f(G_i)| \leq \epsilon. \label{eq:gdiff}
\end{align}
Now we evaluate the set $G_i\cup o_1$. 

\begin{itemize}
    \item \textbf{Case 1:} If $o_1$ has already been added, $o_1\in G_i$, then
        \begin{align}
            |\hat{f}(G_i\cup o_1)-f(G_i\cup o_1)| = |\hat{f}(G_i)-f(G_i)| \leq \epsilon. \nonumber
        \end{align}

    \item \textbf{Case 2}: If $o_1 \notin G_i$, then $\hat{f}(G_i\cup o_1)$ is evaluated in iteration $i+1$. This iteration $i+1$ does exist\footnote{\label{foot:supp:evalclean} For $(\alpha,\delta)$ robustness alone, this point is not necessary due to the assumption of $|f(S) - \hat{f}(S)|\leq \epsilon$ for all $S\subseteq \Omega$.  For the regret bound proof of our proposed C-ETC method in \cref{sec:apdx:prf:main-thm}, the ``clean event'' (corresponding to concentration of empirical mean of set values around their statistical means) will only imply concentration for those actions taken and thus for which empirical estimates exist.} because for any $i\in \{0,1,\cdots, \ell\}$, we only used less than $B-c(o_1)$ budget. For the remaining budget, at least $o_1$ can still fit into the budget so $G_i\cup o_1$ will be evaluated in iteration $i+1$. In this case, we still have
        \begin{align}
            |\hat{f}(G_i\cup o_1)-f(G_i\cup o_1)| \leq \epsilon. \nonumber
        \end{align}
\end{itemize}

Combining these two cases, we have
\begin{align}
    |\hat{f}(G_i\cup o_1)-f(G_i\cup o_1)| \leq \epsilon. \label{eq:Go1diff}
\end{align}
Also, for any evaluated action in iteration $i+1$, namely the actions $\{G_i \cup e|e\in \Omega\setminus G_i \text{ and } c(e)+c(G_i)\leq B\}$, we have
\begin{align}
    \rho(e|G_i) &= \frac{f(G_i \cup e)-f(G_i)}{c(e)} \nonumber\\
    &\leq \frac{\hat{f}(G_i \cup e)-\hat{f}(G_i)}{c(e)} + \frac{2\epsilon}{c(e)} \nonumber\\
    &= \hat{\rho}(e|G_i)+\frac{2\epsilon}{c(e)}. \label{eq:rho:concentration}
\end{align}
Then we have
\begin{align}
    f(\mathrm{OPT}) &\leq f(G_i\cup \mathrm{OPT})  \tag{Monotonicity of $f$}\\
    &\leq f(G_i\cup o_1) + f(\mathrm{OPT}\setminus(G_i\cup o_1)|G_i\cup o_1) \nonumber\\
    & \leq f(G_i\cup o_1) + \sum_{e\in \mathrm{OPT}\setminus (G_i\cup o_1)} f(e|G_i\cup o_1) \tag{Submodularity of $f$}\\
    & \leq \hat{f}(G_i\cup o_1)+ \epsilon +\sum_{e\in \mathrm{OPT}\setminus (G_i\cup o_1)} c(e)\rho(e|G_i\cup o_1) . \label{eq:prf:rbst:fopt:4}
\end{align} where \eqref{eq:prf:rbst:fopt:4} uses \eqref{eq:Go1diff}.
Since we picked iteration $i$ such that $c(G_i)\leq B-c(o_1)$, then all items in $\mathrm{OPT}\setminus (G_i\cup o_1)$ still fit, as $o_1$ is the largest item in OPT. Since the greedy algorithm always selects the item with the largest marginal density with respect to the surrogate function $\hat{f}$, $g_i = \argmax_{e\in \Omega\setminus G_i} \hat{\rho}(e|G_i)$, thus we have  %
\begin{align}
    \hat{\rho}(g_{i+1}|G_i) =\max_{e\in \Omega\setminus G_i} \hat{\rho}(e|G_i) \geq \max_{e\in \Omega\setminus (G_i\cup o_1)} \hat{\rho}(e|G_i). \label{eq:hprime}
\end{align}
Hence, continuing with \eqref{eq:prf:rbst:fopt:4}, 
\begin{align}
    f(\mathrm{OPT}) &\leq \hat{f}(G_i\cup o_1)+ \epsilon  +\left(\sum_{e\in \mathrm{OPT}\setminus (G_i\cup o_1)} c(e)\rho(e|G_i\cup o_1) \right) \nonumber\\
    &\leq \hat{f}(G_i\cup o_1)+\epsilon +\sum_{e\in \mathrm{OPT}\setminus (G_i\cup o_1)} c(e)\rho(e|G_i)  \tag{Submodularity}\\
    &\leq \hat{f}(G_i\cup o_1)+\epsilon +\sum_{e\in \mathrm{OPT}\setminus (G_i\cup o_1)} c(e)\left(\hat{\rho}(e|G_i)+\frac{2\epsilon}{c(e)}\right)  \tag{using \eqref{eq:rho:concentration}}\\
    &\leq \hat{f}(G_i\cup o_1)+\epsilon +\sum_{e\in \mathrm{OPT}\setminus (G_i\cup o_1)}\bigg( c(e)\hat{\rho}(e|G_i)\bigg)+2\epsilon |\mathrm{OPT}\setminus (G_i\cup o_1)|  \nonumber\\
    &\leq \hat{f}(G_i\cup o_1) +\epsilon+ \hat{\rho}(g_{i+1}|G_i)\sum_{e\in \mathrm{OPT}\setminus (G_i\cup o_1)}\bigg( c(e)\bigg)+2\epsilon |\mathrm{OPT}\setminus (G_i\cup o_1)|  \tag{Using \eqref{eq:hprime}}\\
    &\leq \hat{f}(G_i\cup o_1) +\epsilon+ \hat{\rho}(g_{i+1}|G_i)c(\mathrm{OPT}\setminus (G_i\cup o_1))+2\epsilon |\mathrm{OPT}\setminus (G_i\cup o_1)|  \nonumber\\
    &\leq \hat{f}(G_i\cup o_1)+\epsilon + \max\{0,\hat{\rho}(g_{i+1}|G_i)\}c(\mathrm{OPT}\setminus (G_i\cup o_1))+2\epsilon |\mathrm{OPT}\setminus (G_i\cup o_1)|  \nonumber\\
    &\leq \hat{f}(G_i\cup o_1)+\epsilon + \max\{0,\hat{\rho}(g_{i+1}|G_i)\}(B-c(o_1))+2\epsilon |\mathrm{OPT}\setminus (G_i\cup o_1)|  \nonumber\\
    &\leq \hat{f}(G_i\cup o_1) + \max\{0,\hat{\rho}(g_{i+1}|G_i)\}(B-c(o_1))+(2 \Tilde{K}-1)\epsilon . \nonumber
\end{align} 
Rearranging terms gives the desired result. 
\end{proof}
Now we are ready to prove \cref{thm:greedy:max:robust}. 
\begin{proof}[\cref{thm:greedy:max:robust}] 
Applying \cref{lem:gm:ineq} (\textsc{Greedy+Max} inequality) for $i=\ell$, and recalling that $\ell$ is chosen as the index of the last greedy set such that $c(G_\ell) \leq B-c(o_1) < c(G_{\ell+1})$,
\begin{align}
    \hat{f}(G_\ell \cup o_1) + \max\{0,\hat{\rho}(g_{\ell+1}|G_\ell)\}(B-c(o_1)) \geq f(\mathrm{OPT})-(2\Tilde{K}-1)\epsilon. \label{eq:case:commonsum}
\end{align}
From \eqref{eq:case:commonsum}, we will next argue at least one of the terms in the left hand side must be large.  We will consider cases for the two terms being large.  To minimize the worst-case additive error term from the  cases, we will split the cases into whether $\hat{f}(G_\ell \cup o_1)$ is larger than or equal to $\frac{1}{2}f(\mathrm{OPT})-(\Tilde{K}-\frac{1}{2}+\gamma)\epsilon$, or $\max\{0,\hat{\rho}(g_{\ell+1}|G_\ell\}(B-c(o_1))$ is larger than or equal to $\frac{1}{2}f(\mathrm{OPT})-(\Tilde{K}-\frac{1}{2}-\gamma)\epsilon$, where $\gamma$ will be selected later to minimize the additive error $\delta$ coefficient.

\textbf{Case 1:} If $\hat{f}(G_\ell \cup o_1)\geq\frac{1}{2}f(\mathrm{OPT})-(\Tilde{K}-\frac{1}{2}+\gamma)\epsilon$, recall that $a_\ell$ as the element selected to augment with the greedy solution $G_\ell$, $a_\ell=\argmax_{e\in \Omega\setminus G_\ell}\hat{f}(e|G_\ell)$, then
\begin{align}
    \hat{f}(G_\ell \cup a_\ell) &\geq \hat{f}(G_\ell \cup o_1) \nonumber\\
    &\geq \frac{1}{2}f(\mathrm{OPT})-\left(\Tilde{K}-\frac{1}{2}+\gamma\right)\epsilon. \label{eq:case1:1}
\end{align}
The set $S$ that the algorithm selects in the end will be the set with the highest mean (based on surrogate function $\hat{f}$) among all those evaluated (both sets in the greedy process and their augmentations).  Also, its observed value $\hat{f}(S_\ell)$ is at most $\epsilon$ above $f(S)$.  Thus %
\begin{align}
    f(S) &\geq \hat{f}(S)-\epsilon \nonumber\\
    &\geq \hat{f}(G_\ell \cup a_\ell) - \epsilon \nonumber\\
    &\geq \frac{1}{2}f(\mathrm{OPT})-\left(\Tilde{K}+\frac{1}{2}+\gamma\right)\epsilon. \tag{using \eqref{eq:case1:1}}
\end{align}
\textbf{Case 2(a):} If $\max\{0,\hat{\rho}(g_{\ell+1}|G_\ell)\}(B-c(o_1)) \geq \frac{1}{2}f(\mathrm{OPT})-(\Tilde{K}-\frac{1}{2}-\gamma)\epsilon$ and $\hat{\rho}(g_{\ell+1}|G_\ell)>0$, rearranging we have
\begin{align}
    \hat{\rho}(g_{\ell+1}|G_\ell) \geq \frac{f(\mathrm{OPT})}{2(B-c(o_1))}-\frac{(\Tilde{K}-\frac{1}{2}-\gamma)\epsilon}{B-c(o_1)}. \label{eq:hprime:error}
\end{align}
Then, 
\begin{align}
    \hat{f}(G_\ell) &= \hat{f}(G_\ell) - \hat{f}(G_{\ell-1}) + \hat{f}(G_{\ell-1}) +\cdots-\hat{f}(G_1)+\hat{f}(G_1)-\hat{f}(G_0)\tag{telescoping sum; $G_0=\emptyset$, $\hat{f}(G_0):=0$}\\
    &= \sum_{j=1}^{\ell-1} \hat{f}(g_{j+1}|G_j) \tag{Definition of $\hat{f}(\cdot | \cdot)$}\\
    &= \sum_{j=0}^{\ell-1}\hat{\rho}(g_{j+1}|G_j)c(g_{j+1}) \tag{Definition of $\hat{\rho}(\cdot | \cdot)$}\\
    &\geq \sum_{j=0}^{\ell-1}\hat{\rho}(g_{\ell+1}|G_j)c(g_{j+1}) \tag{greedy choice of $g_{j+1}$}\\
    &\geq \sum_{j=0}^{\ell-1}\bigg(\rho(g_{\ell+1}|G_j) - \frac{2\epsilon}{c(g_{\ell+1})} \bigg)c(g_{j+1}) \nonumber\\
    &\geq \sum_{j=0}^{\ell-1}\bigg(\rho(g_{\ell+1}|G_\ell) - \frac{2\epsilon}{c(g_{\ell+1})} \bigg)c(g_{j+1}) \tag{submodularity of $f$}\\
    &= \bigg(\rho(g_{\ell+1}|G_\ell) - \frac{2\epsilon}{c(g_{\ell+1})} \bigg)c(G_\ell) \tag{simplifying}\\
    &\geq \bigg(\hat{\rho}(g_{\ell+1}|G_\ell) - \frac{4\epsilon}{c(g_{\ell+1})} \bigg)c(G_\ell) \nonumber\\
    &\geq \hat{\rho}(g_{\ell+1}|G_\ell)c(G_\ell)- 4\beta\epsilon.\label{eq:case2:h:mid}
\end{align}
Recalling that $\ell$ is chosen as the index of the last greedy set that has a remaining budget as big as the cost of the heaviest element in $\mathrm{OPT}$,  $c(G_\ell) \leq B-c(o_1) < c(G_{\ell+1})$,
\begin{align}
    \hat{f}(G_{\ell+1}) &= \hat{f}(G_\ell \cup g_{\ell+1}) \nonumber\\
    &= \hat{f}(G_\ell) + c(g_{\ell+1})\hat{\rho}(g_{\ell+1}|G_\ell) \nonumber\\
    &\geq \bigg(\hat{\rho}(g_{\ell+1}|G_\ell)c(G_\ell)- 4\beta\epsilon\bigg)+ c(g_{\ell+1})\hat{\rho}(g_{\ell+1}|G_\ell) \tag{ from \eqref{eq:case2:h:mid}}\\
    &= \hat{\rho}(g_{\ell+1}|G_\ell) c(G_{\ell+1})   - 4\beta\epsilon \tag{simplifying}\\
    &\geq \frac{\frac{1}{2}f(\mathrm{OPT})-(\Tilde{K}-\frac{1}{2}-\gamma)\epsilon}{B-c(o_1)}  c(G_{\ell+1})   - 4\beta\epsilon \tag{case 2 condition}\\
    &\geq \frac{1}{2}f(\mathrm{OPT})-(\Tilde{K}-\frac{1}{2}-\gamma)\epsilon  - 4\beta\epsilon \tag{$\ell$ chosen so that $c(G_{\ell+1})>B-c(o_1)$}\\
    &= \frac{1}{2}f(\mathrm{OPT})-\left(\Tilde{K}-\frac{1}{2}-\gamma + 4\beta\right)\epsilon. \label{eq:case2:1}
\end{align}
For the output set $S$ selected by the algorithm, we have
\begin{align}
    f(S) &\geq \hat{f}(S)-\epsilon \nonumber\\
    &\geq \hat{f}(G_{\ell+1})-\epsilon \nonumber\\ 
    &\geq \frac{1}{2}f(\mathrm{OPT})-\left(\Tilde{K}+\frac{1}{2}-\gamma + 4\beta\right)\epsilon. \tag{using \eqref{eq:case2:1}}
\end{align}
\textbf{Case 2(b):} If $\max\{0,\hat{\rho}(g_{\ell+1}|G_\ell)\}(B-c(o_1)) \geq \frac{1}{2}f(\mathrm{OPT})-(\Tilde{K}-\frac{1}{2}-\gamma)\epsilon$ and $\hat{\rho}(g_{\ell+1}|G_\ell)\leq 0$, then the set $S$ that the algorithm selects at the end satisfies
\begin{align}
    f(S) &\geq 0 \nonumber\\
    &\geq \frac{1}{2}f(\mathrm{OPT})-(\Tilde{K}-\frac{1}{2}-\gamma)\epsilon \tag{Case 2(b) condition}\\
    &\geq \frac{1}{2}f(\mathrm{OPT})-(\Tilde{K}-\frac{1}{2}-\gamma + 4\beta)\epsilon. \nonumber
\end{align}
Thus, combining cases 1 and 2, and selecting $\gamma = 2\beta$, the additive $\frac{1}{2}$-approximation error we get by the modified Greedy+Max algorithm is at most $(\frac{1}{2}+\Tilde{K}+2\beta)\epsilon$, which concludes the proof. 
\end{proof}

\subsection{CMAB algorithms for Online Submodular Maximization}
Now that we have analyzed the robustness of several offline algorithms, we can  invoke \cref{thm:main} to bound the expected cumulative $\alpha$ regret for stochastic CMAB adaptations that rely only on bandit feedback.  We name the adapted algorithms as C-ETC-N, C-ETC-Ba for cardinality constraint, C-ETC-S, \etckgk\ , \etckga\ for knapsack constraint, and C-ETC-Bu for unconstrained, respectively, based on which offline algorithm it is adapted from (using the first author's last name); which are in order \citep{nemhauser1978analysis, badanidiyuru2014fast, Sviridenko2004ANO, khuller1999budgeted, yaroslavtsev2020bring, Buchbinder2012ATL}.  \textsc{PartialEnumeration} was first proposed and analyzed by \citet{khuller1999budgeted} for maximum coverage problems and then analyzed by \citet{Sviridenko2004ANO} for monotone submodular functions.  To distinguish CMAB adaptations of \textsc{Greedy+} and \etckgk, both proposed in \citep{khuller1999budgeted}, we use C-ETC-S for the adaption of \textsc{PartialEnumeration}. %
The following corollaries hold immediately: 
\begin{corollary} 
    For an online monotone submodular maximization problem under a cardinality constraint, the expected cumulative $(1-1/e)$-regret of C-ETC-N is at most $\mathcal{O}\left(k n^\frac{1}{3} T^\frac{2}{3}\log(T)^\frac{1}{3}\right)$ for $T\geq \max\left\{k,\sqrt{2}\right\}n$, using $N=kn$ as an upper bound of the number of function evaluations for the corresponding offline algorithm.  
\end{corollary}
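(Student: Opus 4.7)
The plan is to derive this corollary as a direct instantiation of \cref{thm:main} with the robustness parameters supplied by \cref{thm:cardinality:greedy:robust}. Since C-ETC-N is just C-ETC equipped with the offline algorithm \textsc{Greedy} of \citep{nemhauser1978analysis} as its $(\alpha,\delta)$-robust subroutine, the corollary follows once we identify the correct $\alpha$, $\delta$, and $N$ for this subroutine and verify the horizon condition of \cref{thm:main}.

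First, I would invoke \cref{thm:cardinality:greedy:robust}, which certifies that \textsc{Greedy} under a $k$-cardinality constraint is $(\alpha,\delta)$-robust with $\alpha = 1-1/e$ and $\delta = 2k$. Next, I would bound the number $N$ of value-oracle queries \textsc{Greedy} makes. The standard greedy algorithm proceeds in exactly $k$ rounds, and in each round it evaluates the marginal gain of every not-yet-selected element; this uses at most $n$ evaluations per round, so we may take $N = kn$ as an upper bound.

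With these constants identified, I would check the precondition of \cref{thm:main}, namely $T \geq \max\{N, 2\sqrt{2}N/\delta\}$. Substituting gives $\max\{kn,\, 2\sqrt{2}kn/(2k)\} = \max\{kn,\, \sqrt{2}\,n\} = n\max\{k,\sqrt{2}\}$, which is exactly the hypothesis $T \geq \max\{k,\sqrt{2}\}\,n$ stated in the corollary. Then \cref{thm:main} applies and yields
\begin{align*}
\mathbb{E}[\mathcal{R}_T] \;=\; \mathcal{O}\!\left(\delta^{2/3} N^{1/3} T^{2/3} \log(T)^{1/3}\right) \;=\; \mathcal{O}\!\left((2k)^{2/3} (kn)^{1/3} T^{2/3} \log(T)^{1/3}\right).
\end{align*}
Simplifying the exponents, $(2k)^{2/3}(kn)^{1/3} = 2^{2/3}\, k^{2/3+1/3}\, n^{1/3} = 2^{2/3}\,k\,n^{1/3}$, so the bound collapses to $\mathcal{O}(k\,n^{1/3}\,T^{2/3}\,\log(T)^{1/3})$, which matches the claim.

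There is essentially no technical obstacle here; the only thing to be careful about is the accounting for $N$. One might worry whether a value-oracle-efficient implementation of \textsc{Greedy} (e.g., with lazy evaluations) still fits the $N = kn$ bound, but since the definition of C-ETC only needs an upper bound on the number of queries and the plain greedy implementation indeed makes at most $kn$ calls, the simple bound suffices. The corollary is therefore a routine arithmetic specialization of the general framework.
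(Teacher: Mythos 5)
Your proposal is correct and matches the paper's intended derivation exactly: the paper presents this corollary as following "immediately" from \cref{thm:main} with the parameters $\alpha=1-1/e$, $\delta=2k$ from \cref{thm:cardinality:greedy:robust} and $N=kn$, and your arithmetic for both the regret exponents and the horizon condition $T\geq\max\{N,2\sqrt{2}N/\delta\}=n\max\{k,\sqrt{2}\}$ is accurate.
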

\begin{remark}
This result improves upon the result from \citep{nie2022explore} by a factor of $k^{\frac{1}{3}}$ despite our use of a generic framework.
\end{remark}
\begin{corollary} \label{cor:threshold:greedy}
    For an online monotone submodular maximization problem under a cardinality constraint, the expected cumulative $(1-1/e-\epsilon')$-regret of C-ETC-Ba is at most $\mathcal{O}\left(k^\frac{2}{3}n^\frac{1}{3}(\epsilon')^\frac{1}{3}(\log \frac{n}{\epsilon'})^\frac{1}{3} T^\frac{2}{3}\log(T)^\frac{1}{3}\right)$ for $T\geq \max\left\{\frac{n}{\epsilon'}, \frac{\sqrt{2}n}{(2-\epsilon')\epsilon' k}\right\}\log \frac{n}{\epsilon'}$, using $N=\frac{n}{\epsilon'} \log \frac{n}{\epsilon'}$ as an upper bound of the number of function evaluations for the corresponding offline algorithm.
\end{corollary}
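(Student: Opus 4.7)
The plan is to obtain the corollary as an essentially mechanical consequence of two earlier results in the paper: the general regret guarantee for C-ETC in \cref{thm:main}, and the robustness guarantee for \textsc{ThresholdGreedy} stated in \cref{thm:threshold:greedy:robust}. Since C-ETC treats the offline approximation algorithm as a black box, the only things I need to establish are (i) the approximation ratio $\alpha$ and robustness constant $\delta$ of \textsc{ThresholdGreedy}, (ii) an upper bound $N$ on its number of value-oracle queries, and (iii) that the horizon condition of \cref{thm:main} specializes to the condition stated in the corollary.

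First I would recall from \cref{thm:threshold:greedy:robust} that \textsc{ThresholdGreedy} is $(1-\tfrac{1}{e}-\epsilon',\,2(2-\epsilon')k)$-robust, so that setting $\mathcal{A}$ equal to \textsc{ThresholdGreedy} in \cref{alg:cetc} gives $\alpha=1-\tfrac{1}{e}-\epsilon'$ and $\delta=2(2-\epsilon')k$. Next I would quote the well-known query-complexity bound for \textsc{ThresholdGreedy} from \citet{badanidiyuru2014fast}: the number of function evaluations is at most $\tfrac{n}{\epsilon'}\log\tfrac{n}{\epsilon'}$, which is precisely the $N$ used in the statement. These two facts are all the structural information C-ETC-Ba needs.

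Then I would verify that the horizon hypothesis of \cref{thm:main}, namely $T\geq \max\{N,\,2\sqrt{2}N/\delta\}$, reduces to the one in the corollary. With $N=\tfrac{n}{\epsilon'}\log\tfrac{n}{\epsilon'}$ and $\delta=2(2-\epsilon')k$, the ratio $2\sqrt{2}N/\delta$ simplifies to $\tfrac{\sqrt{2}\,n}{(2-\epsilon')\epsilon'\,k}\log\tfrac{n}{\epsilon'}$, and the max with $N$ gives exactly the stated condition. Finally, I would substitute $\alpha$, $\delta$, and $N$ into the bound $\mathcal{O}\bigl(\delta^{2/3}N^{1/3}T^{2/3}\log(T)^{1/3}\bigr)$ from \cref{thm:main}. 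Treating $(2-\epsilon')$ as a constant factor, $\delta^{2/3}=\mathcal{O}(k^{2/3})$ and $N^{1/3}=\mathcal{O}\bigl((n/\epsilon')^{1/3}(\log(n/\epsilon'))^{1/3}\bigr)$, so the product matches (up to the $\epsilon'$ exponent bookkeeping) the expression in the corollary statement.

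There is no real obstacle here: the corollary is a direct plug-in. The only thing that requires a moment's care is the bookkeeping of the $\epsilon'$-dependence when combining $\delta=\Theta((2-\epsilon')k)$ with $N=\Theta\!\bigl(\tfrac{n}{\epsilon'}\log\tfrac{n}{\epsilon'}\bigr)$, and making sure the horizon condition is stated consistently with these choices; all of the conceptual work is already done inside \cref{thm:main} and \cref{thm:threshold:greedy:robust}.
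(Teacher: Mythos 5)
Your proposal is correct and is exactly the paper's route: the corollary is obtained by plugging $\alpha=1-\tfrac{1}{e}-\epsilon'$, $\delta=2(2-\epsilon')k$ from \cref{thm:threshold:greedy:robust} and $N=\tfrac{n}{\epsilon'}\log\tfrac{n}{\epsilon'}$ into \cref{thm:main}, and your simplification of the horizon condition matches the stated one. The only point worth flagging is the one you already hedged on: the substitution yields $(\epsilon')^{-1/3}$, so the $(\epsilon')^{1/3}$ in the corollary's displayed bound appears to be a typo rather than an error in your derivation.
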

\begin{corollary}
    For an online monotone submodular maximization problem under a knapsack constraint, the expected cumulative $(1-1/e)$-regret of C-ETC-S is at most $\mathcal{O}\left(\beta^\frac{2}{3}\Tilde{K}^\frac{1}{3} n^\frac{4}{3} T^\frac{2}{3}\log(T)^\frac{1}{3}\right)$ for $T\geq \Tilde{K}n^4$, using $N=\Tilde{K}n^4$ as an upper bound of the number of function evaluations for the corresponding offline algorithm.
\end{corollary}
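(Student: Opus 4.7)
The corollary is a direct instantiation of the general regret bound in \cref{thm:main} using the robustness guarantee for \textsc{PartialEnumeration} in \cref{thm:partial:enum:greedy:robust}. The plan is to identify $\alpha$, $\delta$, and $N$ for this application, simplify $\delta$ using the relation between $\Tilde{K}$ and $\beta$, check that the horizon condition in \cref{thm:main} is implied by $T \geq \Tilde{K}n^4$, and then substitute into the generic bound.

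First, I would invoke \cref{thm:partial:enum:greedy:robust} to conclude that the offline subroutine \textsc{PartialEnumeration} used inside C-ETC-S is $(1-\tfrac{1}{e},\,4+2\Tilde{K}+2\beta)$-robust, so in the notation of \cref{thm:main} we take $\alpha = 1-1/e$ and $\delta = 4+2\Tilde{K}+2\beta$. Because $\Tilde{K} = \min\{\beta,n\} \leq \beta$, we have $\delta = \mathcal{O}(\beta)$, and moreover $\delta \geq 4 > 2\sqrt{2}$, which is the one bookkeeping fact that will let me collapse the hypothesis $T \geq \max\{N, 2\sqrt{2}N/\delta\}$ from \cref{thm:main} down to $T \geq N$.

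Next I would justify the bound $N = \Tilde{K} n^4$ on the number of value-oracle queries made by \textsc{PartialEnumeration}. Recall that the algorithm enumerates all feasible subsets of size at most 3 (there are $\mathcal{O}(n^3)$ such seeds) and, for each such seed, runs a greedy density-based augmentation that considers at most $\Tilde{K}$ additions, each requiring scanning at most $n$ candidate elements; multiplying gives the stated $\mathcal{O}(\Tilde{K} n^4)$ query count, which is precisely the $N$ used in the hypothesis. Consequently $T \geq \Tilde{K} n^4 = N \geq 2\sqrt{2} N/\delta$, so the precondition of \cref{thm:main} is satisfied.

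Finally, I would plug $(\alpha,\delta,N)$ into the generic bound
\[
\mathbb{E}[\mathcal{R}_T] \;\leq\; \mathcal{O}\!\left(\delta^{2/3} N^{1/3} T^{2/3} \log(T)^{1/3}\right)
\]
and simplify: $\delta^{2/3} = \mathcal{O}(\beta^{2/3})$ and $N^{1/3} = \Tilde{K}^{1/3} n^{4/3}$, giving the claimed $\mathcal{O}\bigl(\beta^{2/3} \Tilde{K}^{1/3} n^{4/3} T^{2/3} \log(T)^{1/3}\bigr)$ bound. There is no real obstacle here beyond the two routine sanity checks above (bounding $\delta$ by $\beta$ and confirming the $\Tilde{K}n^4$ query count of \textsc{PartialEnumeration}); the heavy lifting — the robustness analysis and the ETC regret decomposition — has already been done in \cref{thm:partial:enum:greedy:robust,thm:main}.
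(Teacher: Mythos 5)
Your proposal is correct and matches the paper's (implicit) argument: the paper presents this corollary as following immediately from \cref{thm:main} combined with the $(1-\tfrac{1}{e},\,4+2\Tilde{K}+2\beta)$-robustness of \textsc{PartialEnumeration} in \cref{thm:partial:enum:greedy:robust} and the $\mathcal{O}(\Tilde{K}n^4)$ oracle-call bound stated in \cref{apdx_offlineover}. Your two sanity checks (using $\Tilde{K}\leq\beta$ to get $\delta=\mathcal{O}(\beta)$, and $\delta\geq 4>2\sqrt{2}$ to reduce the horizon condition to $T\geq N$) are exactly the right bookkeeping.
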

\begin{corollary}
    For an online monotone submodular maximization problem under a knapsack constraint, the expected cumulative $\frac{1}{2}$-regret of \etckga\ is at most $\mathcal{O}\left(\beta^\frac{2}{3}\Tilde{K}^\frac{1}{3} n^\frac{1}{3} T^\frac{2}{3}\log(T)^\frac{1}{3}\right)$ for $T\geq \Tilde{K}n$, %
    using $N=\Tilde{K}n$ as an upper bound of the number of function evaluations for the corresponding offline algorithm.
\end{corollary}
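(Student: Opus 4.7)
The plan is to obtain this corollary as an immediate specialization of the general regret bound in \cref{thm:main} to the \textsc{Greedy+Max} subroutine, whose robustness was established in \cref{thm:greedy:max:robust}. The high-level idea is: instantiate \cref{thm:main} with $\alpha=\tfrac{1}{2}$, with $\delta=\tfrac{1}{2}+\Tilde{K}+2\beta$ coming from \cref{thm:greedy:max:robust}, and with $N=\Tilde{K}n$ as the oracle-query upper bound for \textsc{Greedy+Max}, and then simplify the resulting expression.

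First I would justify the bound $N=\Tilde{K}n$. The greedy phase of \textsc{Greedy+Max} can have at most $\Tilde{K}=\min\{\beta,n\}$ iterations, since each added element has cost at least $\cmin$ and hence at most $\beta=B/\cmin$ elements can fit (and obviously no more than $n$). Within each iteration, scoring marginal densities and computing the best augmentation each costs $O(n)$ oracle calls, so the total number of queries is at most a constant times $\Tilde{K}n$, which is absorbed into $\mathcal{O}(\cdot)$.

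Next I would invoke \cref{thm:main} directly. Since \textsc{Greedy+Max} is $(\tfrac{1}{2},\tfrac{1}{2}+\Tilde{K}+2\beta)$-robust by \cref{thm:greedy:max:robust}, and since $\Tilde{K}\le \beta$ by definition, we have $\delta=\mathcal{O}(\beta)$. Plugging $\delta=\mathcal{O}(\beta)$ and $N=\mathcal{O}(\Tilde{K}n)$ into the bound $\mathcal{O}\bigl(\delta^{2/3}N^{1/3}T^{2/3}\log(T)^{1/3}\bigr)$ from \cref{thm:main} yields
\begin{align*}
\mathbb{E}[\mathcal{R}_T] \;=\; \mathcal{O}\!\left(\beta^{2/3}\Tilde{K}^{1/3}n^{1/3}\,T^{2/3}\log(T)^{1/3}\right),
\end{align*}
which is exactly the claimed rate.

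The only remaining bookkeeping is to verify the horizon hypothesis $T\ge \max\{N,2\sqrt{2}N/\delta\}$ required by \cref{thm:main}. Because $\delta=\tfrac{1}{2}+\Tilde{K}+2\beta\ge 2\sqrt{2}$ whenever $\beta\ge 1$ (and in any nontrivial instance $\beta\ge 1$, else no element fits), the ratio $2\sqrt{2}N/\delta\le N$, so the condition collapses to $T\ge N=\Tilde{K}n$, matching the hypothesis stated in the corollary. I do not expect any conceptual obstacle here: the corollary is really a plug-in calculation, and the only place where one has to be slightly careful is bounding $N$ correctly for \textsc{Greedy+Max} and using $\Tilde{K}\le \beta$ to combine $(\tfrac{1}{2}+\Tilde{K}+2\beta)^{2/3}$ into the claimed $\beta^{2/3}\Tilde{K}^{1/3}$-free form up to constants absorbed by $\mathcal{O}(\cdot)$.
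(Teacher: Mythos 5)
Your proposal is correct and matches the paper's approach exactly: the paper states these corollaries "hold immediately" by plugging the $(\tfrac{1}{2},\tfrac{1}{2}+\Tilde{K}+2\beta)$-robustness of \textsc{Greedy+Max} and the query bound $N=\Tilde{K}n$ into \cref{thm:main}, using $\Tilde{K}\le\beta$ to simplify $\delta^{2/3}$ to $\mathcal{O}(\beta^{2/3})$. Your additional check that the horizon condition collapses to $T\ge\Tilde{K}n$ because $\delta\ge 2\sqrt{2}$ is a correct piece of bookkeeping the paper leaves implicit.
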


\begin{corollary}
    For an online monotone submodular maximization problem under a knapsack constraint, the expected cumulative $\frac{1}{2}(1-\frac{1}{e})$-regret of \etckgk\ is at most $\mathcal{O}\left(\beta^\frac{2}{3}\Tilde{K}^\frac{1}{3} n^\frac{1}{3} T^\frac{2}{3}\log(T)^\frac{1}{3}\right)$ for $T\geq \Tilde{K}n$, %
    using $N=\Tilde{K}n$ as an upper bound of the number of function evaluations for the corresponding offline algorithm.
\end{corollary}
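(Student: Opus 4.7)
The plan is to invoke \cref{thm:main} with the offline algorithm \textsc{Greedy+} and the robustness parameters supplied by \cref{thm:greedy:robust}. From \cref{thm:greedy:robust}, \textsc{Greedy+} is $(\alpha,\delta)$-robust with $\alpha = \frac{1}{2}(1-\frac{1}{e})$ and $\delta = 2 + \Tilde{K} + \beta$. This immediately plugs into the generic $\alpha$-regret bound $\mathcal{O}(\delta^{2/3}N^{1/3}T^{2/3}\log(T)^{1/3})$, so the only real work is (i) deriving a valid oracle-query budget $N$ for \textsc{Greedy+} and (ii) simplifying $\delta$ down to the stated $\beta^{2/3}$ form.

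For step (i), I would argue that in each greedy iteration \textsc{Greedy+} scans at most $n$ candidate elements to compute marginal densities, and every iteration adds some element whose cost is at least $\cmin$. Since the total budget is $B$, the number of iterations is at most $B/\cmin = \beta$, which combined with the trivial cardinality bound $n$ gives at most $\Tilde{K} = \min\{\beta,n\}$ iterations. After the greedy loop, the algorithm also compares against the best singleton (an additional $n$ queries), which is dominated by $\Tilde{K}n$. Hence $N \leq \Tilde{K}n$ is a valid upper bound on the number of value-oracle queries.

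For step (ii), substitute $\delta = 2+\Tilde{K}+\beta$ and $N = \Tilde{K}n$ into \cref{thm:main} to obtain
\begin{align*}
\mathbb{E}[\mathcal{R}_T]
  \leq \mathcal{O}\!\left((2+\Tilde{K}+\beta)^{2/3}\,(\Tilde{K}n)^{1/3}\,T^{2/3}\log(T)^{1/3}\right).
\end{align*}
Since $\Tilde{K} = \min\{\beta,n\} \leq \beta$, we have $2+\Tilde{K}+\beta = \mathcal{O}(\beta)$, so $(2+\Tilde{K}+\beta)^{2/3} = \mathcal{O}(\beta^{2/3})$, yielding the advertised $\mathcal{O}(\beta^{2/3}\Tilde{K}^{1/3}n^{1/3}T^{2/3}\log(T)^{1/3})$ bound.

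The horizon condition is a sanity check: \cref{thm:main} requires $T \geq \max\{N, 2\sqrt{2}N/\delta\}$. With $\delta = 2+\Tilde{K}+\beta \geq 2\sqrt{2}$ (assuming any nontrivial instance), the binding constraint is $T \geq N = \Tilde{K}n$, matching the corollary's hypothesis. I do not anticipate a genuine obstacle here; the only subtlety is confirming that the particular form of \textsc{Greedy+} (standard cost-weighted greedy together with a singleton-augmentation comparison) has its query count captured by $\Tilde{K}n$, and that $\Tilde{K} \leq \beta$ suffices to collapse $\delta$ into a pure $\beta$-dependence.
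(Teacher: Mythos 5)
Your proposal is correct and follows exactly the route the paper takes: the paper derives this corollary by plugging $\alpha=\frac{1}{2}(1-\frac{1}{e})$ and $\delta=2+\Tilde{K}+\beta$ from \cref{thm:greedy:robust} together with the query bound $N=\Tilde{K}n$ (justified in \cref{apdx_offlineover}, where the singleton comparison is absorbed into the first greedy pass) into \cref{thm:main}, and collapses $\delta^{2/3}$ to $\mathcal{O}(\beta^{2/3})$ via $\Tilde{K}\leq\beta$. Your handling of the horizon condition ($\delta\geq 4>2\sqrt{2}$ so $T\geq N$ binds) matches as well; there is no gap.
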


\begin{corollary}
    For an online unconstrained non-monotone submodular maximization problem, the expected cumulative $\frac{1}{2}$-regret of C-ETC-Bu is at most $\mathcal{O}\left(nT^\frac{2}{3}\log(T)^\frac{1}{3}\right)$ for $T\geq 4n$, using $N=4n$ as an upper bound of the number of function evaluations for the corresponding offline algorithm. %
\end{corollary}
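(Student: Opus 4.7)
The plan is to obtain this corollary as a direct instantiation of the master regret bound in \cref{thm:main}, using the robustness guarantee for \textsc{RandomizedUSM} stated in \cref{thm:RandomUSM:robust}. Since the framework treats the offline algorithm as a black box, the only inputs I need to supply are the pair $(\alpha,\delta)$ certifying robustness, together with an upper bound $N$ on the number of value-oracle queries made by \textsc{RandomizedUSM}.

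First, I would invoke \cref{thm:RandomUSM:robust} to record that \textsc{RandomizedUSM} is $(\tfrac{1}{2},\tfrac{5}{2}n)$-robust, so here $\alpha=\tfrac{1}{2}$ and $\delta=\tfrac{5n}{2}$. Next, I would account the queries made by \textsc{RandomizedUSM} of \citet{Buchbinder2012ATL}: the algorithm maintains two sets initialized to $\emptyset$ and $\Omega$, and processes the $n$ elements one at a time, each iteration computing two marginal values, which gives at most $2n$ queries; including the two evaluations used to initialize the sets (and any book-keeping queries), $N=4n$ is a safe upper bound. With $(\alpha,\delta,N)=(\tfrac{1}{2},\tfrac{5n}{2},4n)$ fixed, C-ETC-Bu is precisely \cref{alg:cetc} instantiated with \textsc{RandomizedUSM} as the offline subroutine.

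Then I would plug these constants into \cref{thm:main}. The horizon condition $T\geq \max\{N,\,2\sqrt{2}\,N/\delta\}$ becomes $T\geq \max\{4n,\,2\sqrt{2}\cdot 4n/(5n/2)\}=\max\{4n,\,16\sqrt{2}/5\}$, which reduces to $T\geq 4n$ for any $n\geq 1$, matching the statement. The regret bound specializes to
\[
\mathcal{O}\!\left(\delta^{2/3}N^{1/3}T^{2/3}\log(T)^{1/3}\right)
=\mathcal{O}\!\left(\left(\tfrac{5n}{2}\right)^{2/3}(4n)^{1/3}T^{2/3}\log(T)^{1/3}\right)
=\mathcal{O}\!\left(n\,T^{2/3}\log(T)^{1/3}\right),
\]
which is exactly the claimed bound.

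There is no real obstacle here: the only mildly subtle point is justifying $N=4n$ rather than, say, $2n+2$, but any constant multiple of $n$ is absorbed into the $\mathcal{O}(\cdot)$ (and only affects the threshold at which the horizon condition kicks in). Everything else is a mechanical substitution into the framework's master theorem, which is the whole design goal of the black-box reduction.
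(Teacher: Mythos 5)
Your proposal is correct and matches the paper's own (implicit) argument exactly: the paper derives this corollary, like the others in that subsection, by direct substitution of the robustness pair $(\alpha,\delta)=(\tfrac{1}{2},\tfrac{5}{2}n)$ from \cref{thm:RandomUSM:robust} and the query bound $N=4n$ into \cref{thm:main}, which is precisely what you do. The only negligible quibble is that at $n=1$ the threshold $\max\{4n,\,16\sqrt{2}/5\}$ slightly exceeds $4n$, but this edge case is immaterial (and the paper states the condition the same way).
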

\begin{remark}
    This result recovers the result from \citep{Fourati2023Randomized}.
\end{remark}
\textbf{Storage and Per-Round Time Complexities:} C-ETC-Y and C-ETC-K have low storage complexity and per-round time-complexity.  During exploitation, only the indices of at most $\Tilde{K}$ base arms are needed in memory and does not need any computation. During exploration, they just need to update the empirical mean for the current action at time $t$, which can be done in $\mathcal{O}(1)$ time.  It additionally stores the highest empirical density so far in the current iteration of the greedy routine and its associated base arm (\etckgk\ needs to store one more arm and \etckga\ an additional $\mathcal{O}(\Tilde{K})$ storage is needed to store the augmented set). Thus, C-ETC-Y and C-ETC-K have $\mathcal{O}(\Tilde{K})$ storage complexity  and $\mathcal{O}(1)$ per-round time complexity. For comparison, the algorithm proposed by \cite{streeter2008online} for an averaged knapsack constraint  in the adversarial setting uses $\mathcal{O}(n\Tilde{K})$ storage complexity and $\mathcal{O}(n)$ per-round time complexity. Some comments on lower bound are given in Appendix \ref{apdx_lb_sub}.

\section{Experiments}\label{sec:exp}

In this section, we conduct experiments on real world data with a Budgeted Influence Maximization (BIM) and Song Recommendation (SR). Both of these are applications of stochastic CMAB with submodular rewards under a knapsack constraint. For experiments with monotone submodular functions with cardinality canstraints, we refer to \citep{nie2022explore}. For experiments with non-monotone submodular functions, we refer to \citep{Fourati2023Randomized}. Those algorithms are the same as appying our framework to the corresponding problems except for minor  changes in the choice of parameters.   There are three adaptions we considered in \cref{sec:appl-CETC:submod} for knapsack constraint. Since the time complexity for \textsc{PartialEnumeration} is much larger than the other two offline algorithms we consider, it will use at least $T\approx 10^8$ for C-ETC-S to finish exploration. For this reason, we do not consider C-ETC-S in the experiments. To our knowledge, our work is the first to consider these applications with only bandit feedback available. 

\paragraph{Baseline:}
The only other algorithm designed for combinatorial MAB with general submodular rewards, under a knapsack constraint, and using full-bandit feedback is \textbf{Online Greedy with opaque feedback model (OG$^\text{o}$)} proposed by \citet{streeter2008online} for the adversarial setting. 
However, OG$^\text{o}$ only satisfies the knapsack constraint in expectation, while our algorithms C-ETC-K ands C-ETC-Y satisfies a strict constraint (i.e. every action $A_t$ must be under budget).  See Appendix \ref{implimentation:ogo} for more details about OG$^\text{o}$ %
and its implementation. 

In \cref{sec:appl-CETC:submod}, we used $N=\Tilde{K}n$ as an upper bound on the number of function evaluations for both C-ETC-K and C-ETC-Y, where $n$ is the number of base arms and $\Tilde{K}$ is an upper bound of the cardinality of any feasible set. When the time horizon $T$ is small, it is possible that the exploration phase will not finish due to the formula being optimized for $m$ (the number of plays for each action queried by $\mathcal{A}$) uses a loose bound on the exploitation time.   When this is the case, we select the largest $m$ (closest to the formula) for which we can guarantee that exploration will finish. For details, see \cref{apdx:smallT}.

\begin{figure}[t]%
    \centering
    \subfloat[]{\label{fig:im:a}{\includegraphics[width=0.35\linewidth]{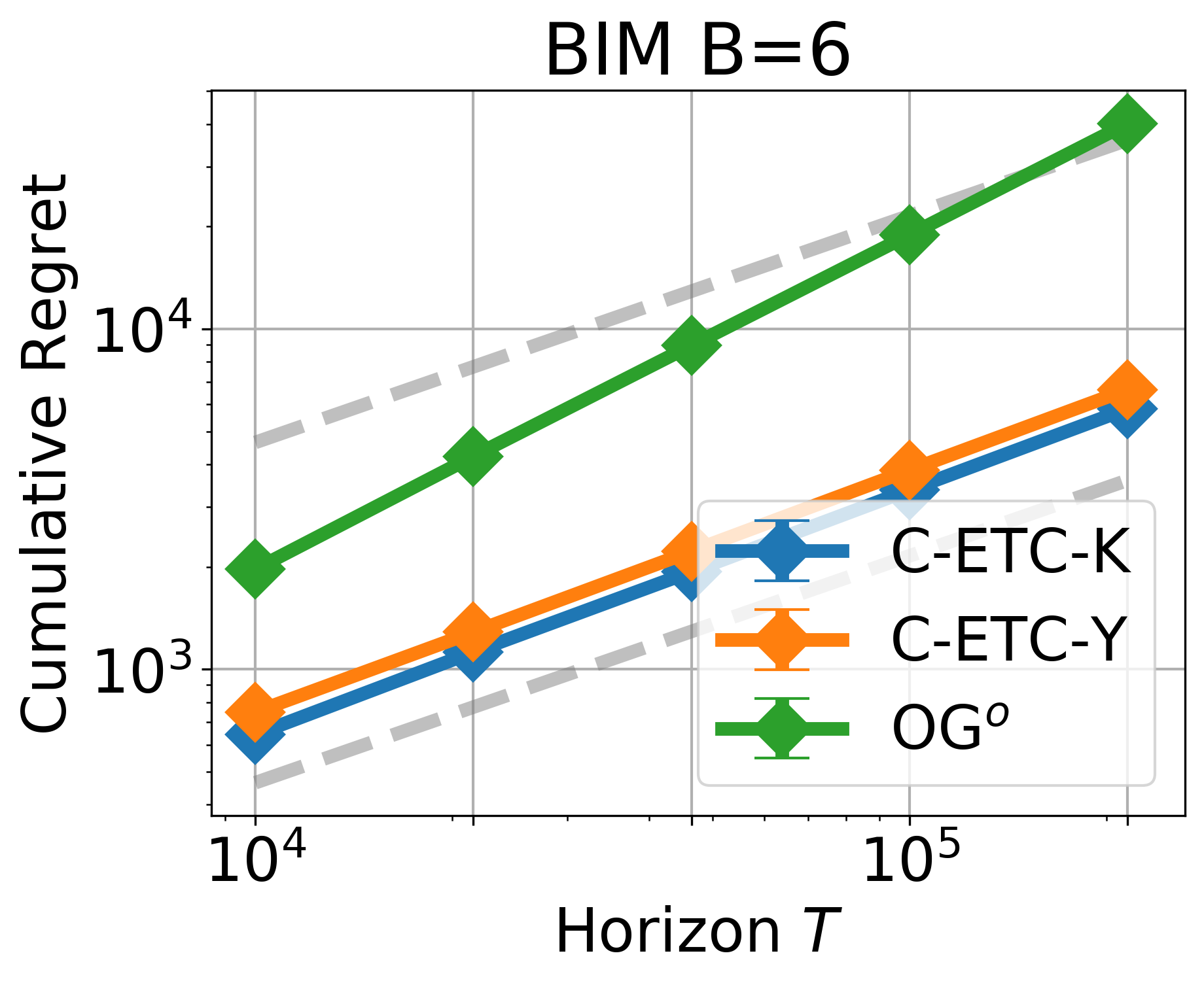} }}\qquad%
    \subfloat[]{\label{fig:im:b}{\includegraphics[width=0.35\linewidth]{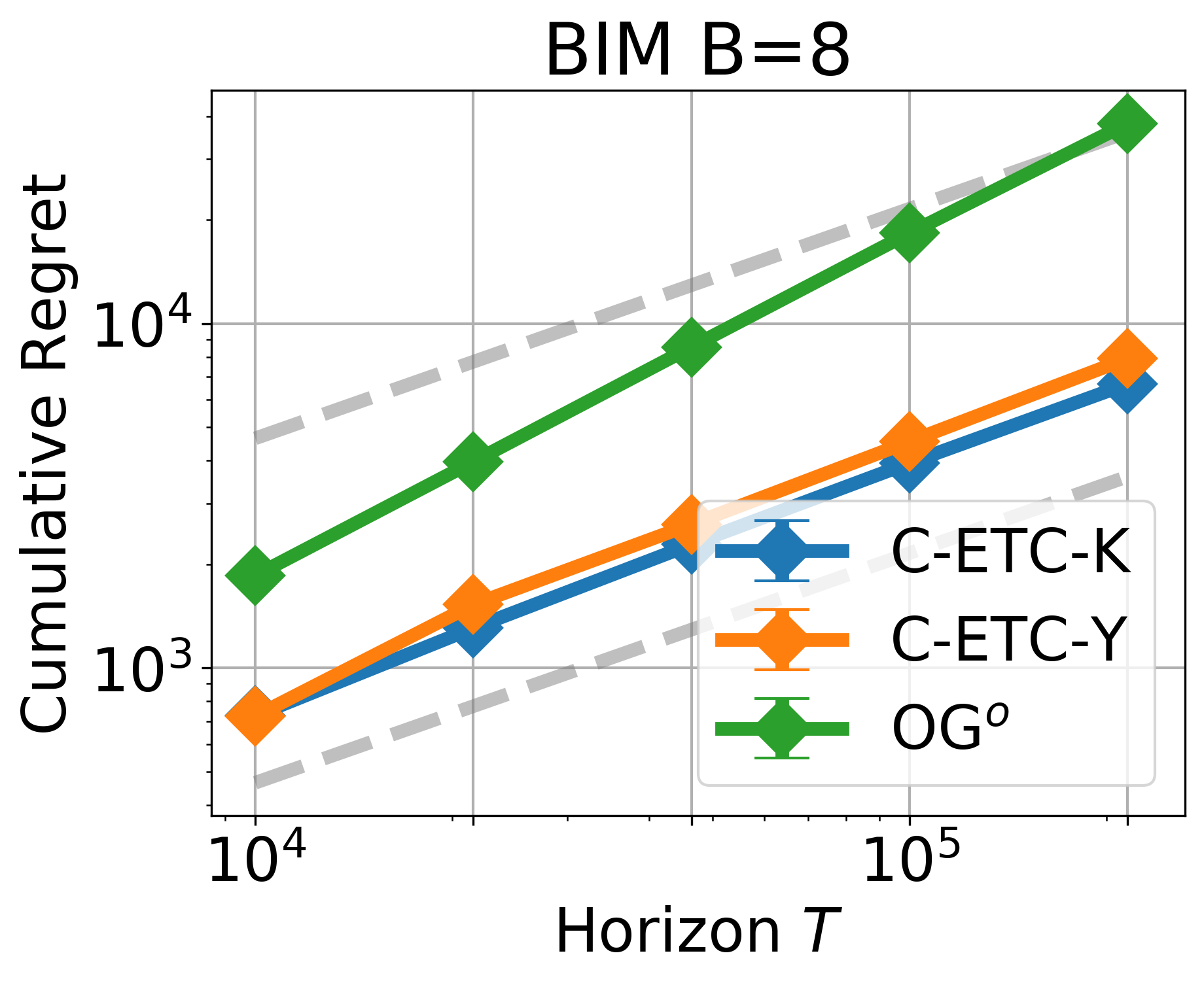} }}%
    \vspace{.5cm}\\
    \subfloat[]{\label{fig:im:c}{\includegraphics[width=0.35\linewidth]{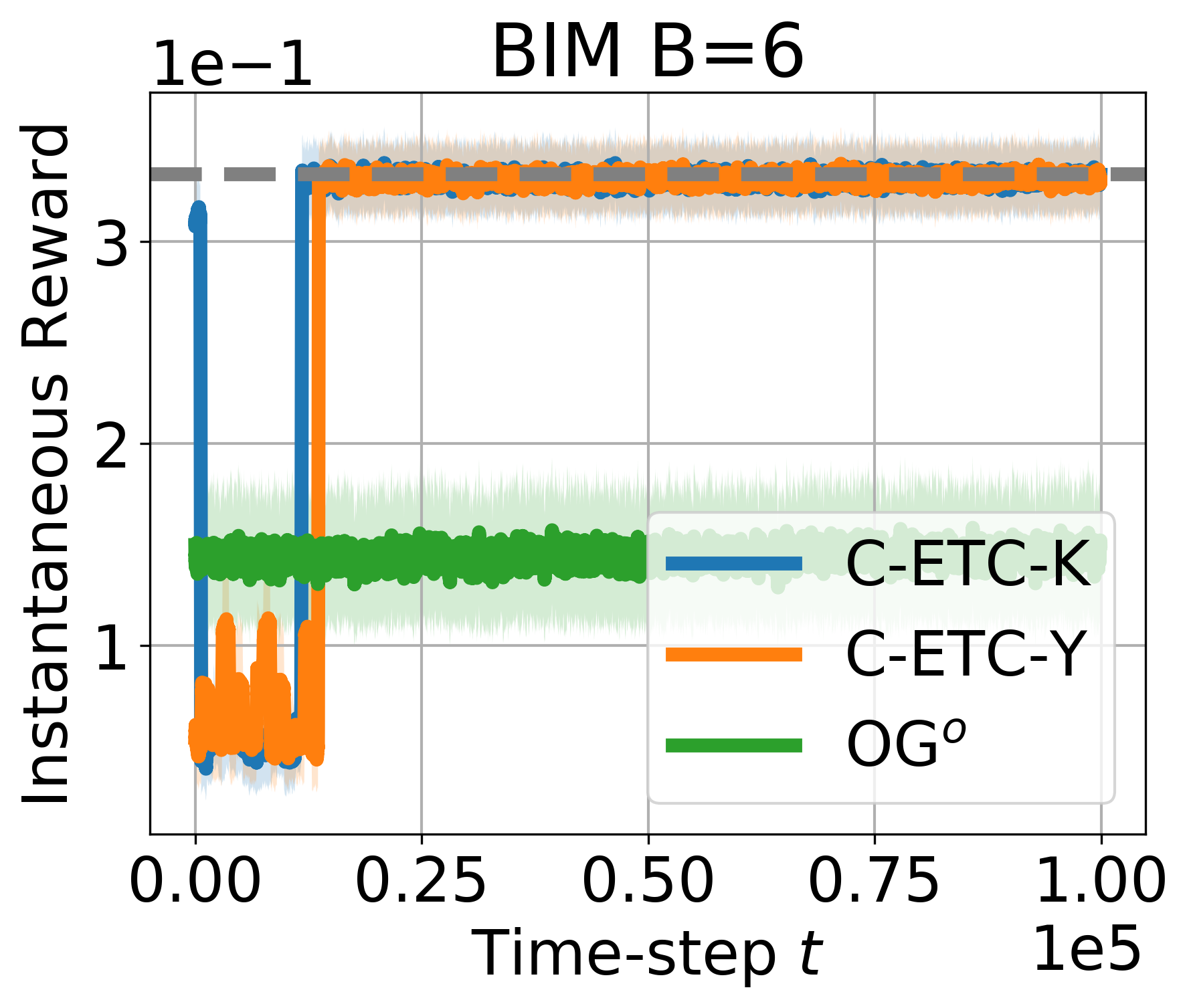} }}\qquad%
    \subfloat[]{\label{fig:im:d}{\includegraphics[width=0.35\linewidth]{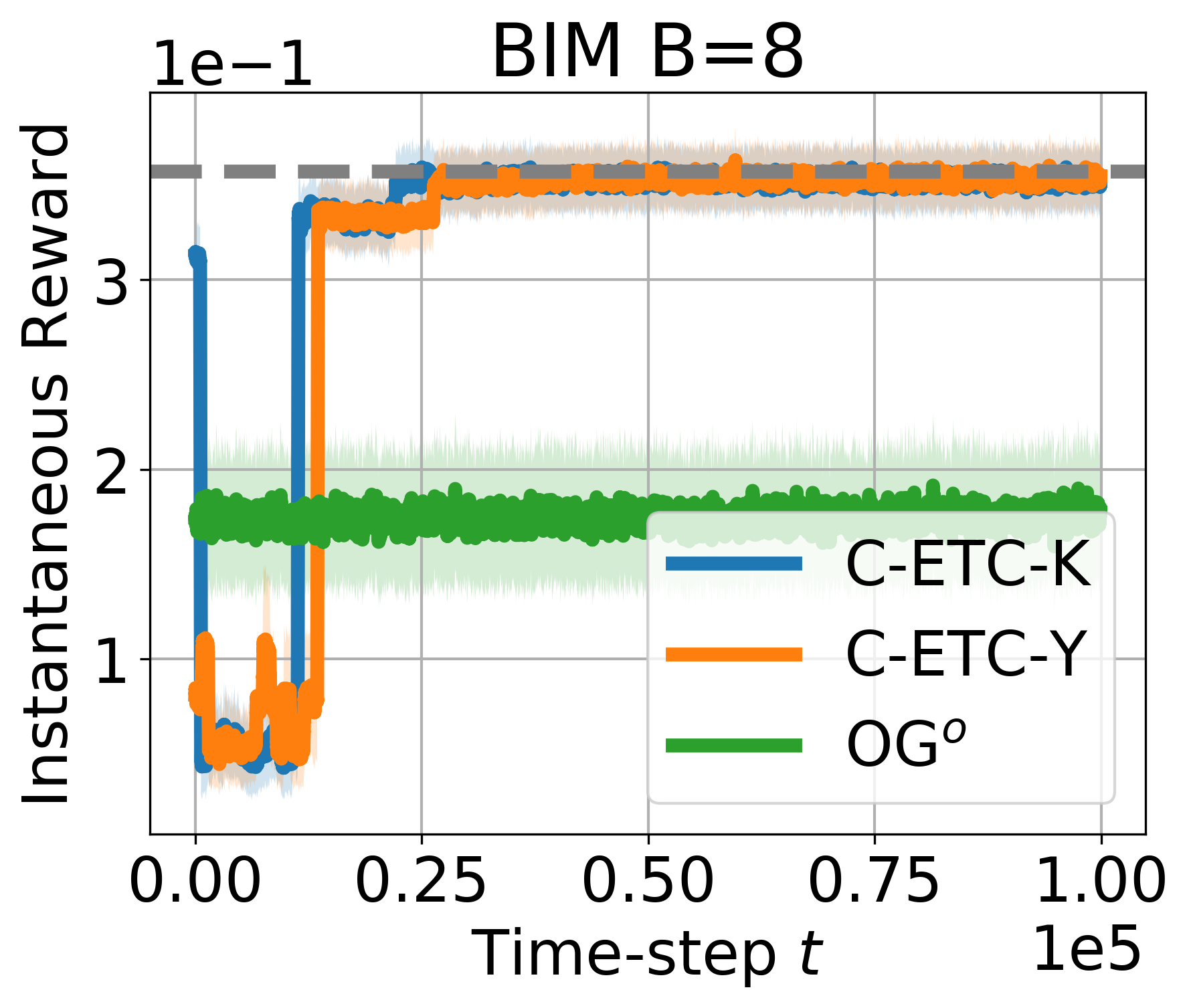} }}%
    \caption{\small Plots for budgeted influence maximization (BIM) example. (a) and (b) are results for cumulative regret as a function of the time horizon $T$ for budgets $B=6$ and $B=8$ respectively. (c) and (d) are instantaneous reward plots (smoothed with a moving average over a window size of 100) as a function of time $t$. The gray dashed lines in (a) and (b) represent $y = aT^{2/3}$ for various values of $a$ for visual reference. The gray dashed lines in  (c) and (d) represent expected rewards for the action chosen by an offline greedy algorithm.}%
    \label{fig:im}%
\end{figure}

\subsection{Experiments with Budgeted Influence Maximization}
We first conduct experiments for the application of budgeted influence maximization (BIM) on a portion of the Facebook network graph. BIM models the problem of identifying a low-cost subset (seed set) of nodes in a (known) social network that can influence the maximum number of nodes in a network. 

While there are prior works proposing algorithms for budgeted online influence maximization problems, the state of the art (e.g., \citep{perrault2020budgeted}) presumes knowledge of the diffusion model (such as independent cascade) and, more importantly, extensive semi-bandit feedback on individual diffusions, such as which specific nodes became active or along which edges successful infections occurred, in order to estimate diffusion parameters. For social networks with user privacy, this information is not available. 

\textbf{Data Set Description and Experiment Details:} The Facebook network dataset was introduced in \citep{NIPS2012_7a614fd0}. To facilitate running multiple experiments for different horizons, we used the community detection method proposed by \cite{Blondel2008FastUO} to detect a community with 354 nodes and 2853 edges. We further changed the network to be directed by replacing every undirected edge by two directed edges with opposite directions, yielding a directed network with 5706 edges. The diffusion process is simulated using the independent cascade model \citep{kempe2003maximizing}, where in each discrete step, an active node (that was inactive at the previous time step) independently attempts to infect each of its inactive neighbors. Following existing work of \citet{tang2015influence, tang2018online, Bian2020Efficient}, we set the probability of each edge $(u,v)$  as $1/d_{\mathrm{in}}(v)$, where $d_{\mathrm{in}}(v)$ is the in-degree of node $v$. %
In our experiment, we only consider users with high out-degrees as potential seeds, to spend our budget more efficiently. We pick the users with out-degrees that are above $95^{\mathrm{th}}$ percentile (18 users). Denote this set as $\mathcal{I}$, then for a user $u\in \mathcal{I}$, the cost is defined as %
    $c(u)=0.01d_{\mathrm{out}}(u) +1,$ %
similar to \citep{wu2022budgeted}. For each time horizon that was used, we ran each method ten times.

For this set of experiments, instead of cumulative $\frac{1}{2}$-regret, which requires knowing $\mathrm{OPT}$, we empirically compare the cumulative rewards achieved by C-ETC and OG$^\text{o}$ against $Tf(S^\mathrm{grd})$, where $S^\mathrm{grd}$ is the solution returned by the offline $\frac{1}{2}$-approximation algorithm proposed by \cite{yaroslavtsev2020bring}. $Tf(S^\mathrm{grd})\geq \frac{1}{2}Tf(\mathrm{OPT})$, so $Tf(S^\mathrm{grd})$ is a more challenging reference value.

\textbf{Results and Discussion: } \cref{fig:im:a,fig:im:b} show average cumulative regret curves for \etckgk\ (in blue), \etckga\ (in orange) and OG$^\text{o}$ (in green) for different horizon $T$ values when the budget constraint $B$ is 6 and 8, respectively. For $B=8$, the turning point is $T=21544$. Standard errors of means are presented as error bars, but might be too small to be noticed. \cref{fig:im:c,fig:im:d} are the instantaneous reward plots. The peaks at the very beginning of exploration phase correspond to the time step that the single person with highest influence is sampled. 

C-ETC significantly outperforms OG$^\text{o}$ for all time horizons and budget considered. To evaluate the gap between the empirical performance and the theoretical guarantee, we estimated the slope for both methods on log-log scale plots. Over the horizons tested,  OG$^\text{o}$'s cumulative regret (averaged over ten runs) has a growth rate of $0.98$. The growth rates of \etckgk\ for budgets 6 and 8 are $0.76$ and $0.68$, respectively. The growth rates of \etckga\ for budgets 6 and 8 are $0.75$ and $0.69$, respectively. The slopes are close to the $2/3 \approx 0.67$ theoretical guarantee, and notably, the performance for larger $B$ is better.

\subsection{Experiments with Song Recommendation}\label{apdx_song}
We then test our methods on the application of song recommendation on the Million Song Dataset \cite{Bertin-Mahieux2011}. In this problem, the agents aims to recommend a bundle of songs to users such that they are liked by as many users as possible. 

\begin{figure}[t]%
    \centering
    \subfloat[]{\label{fig:sr:a}{\includegraphics[width=0.35\linewidth]{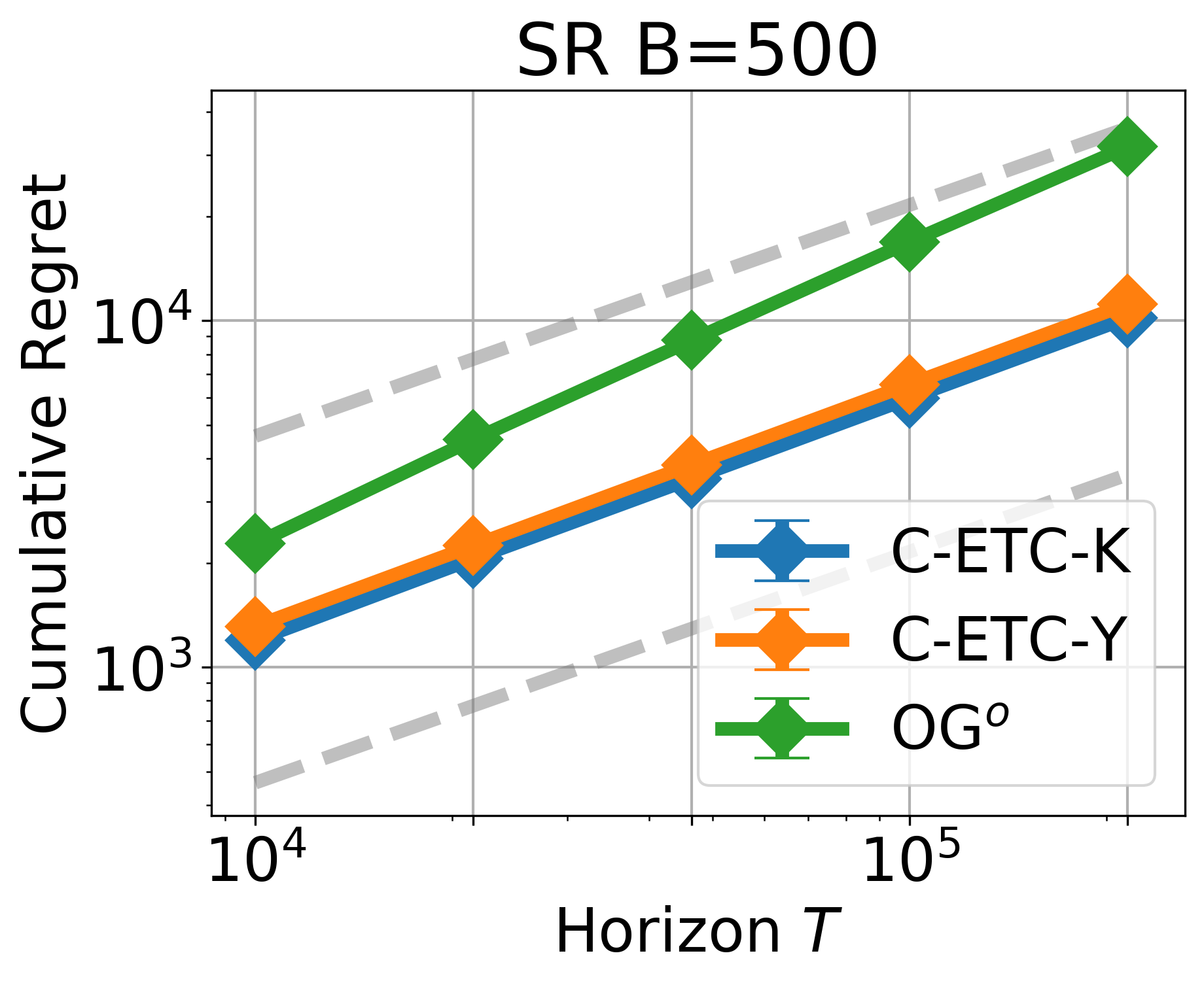} }}\qquad%
    \subfloat[]{\label{fig:sr:b}{\includegraphics[width=0.35\linewidth]{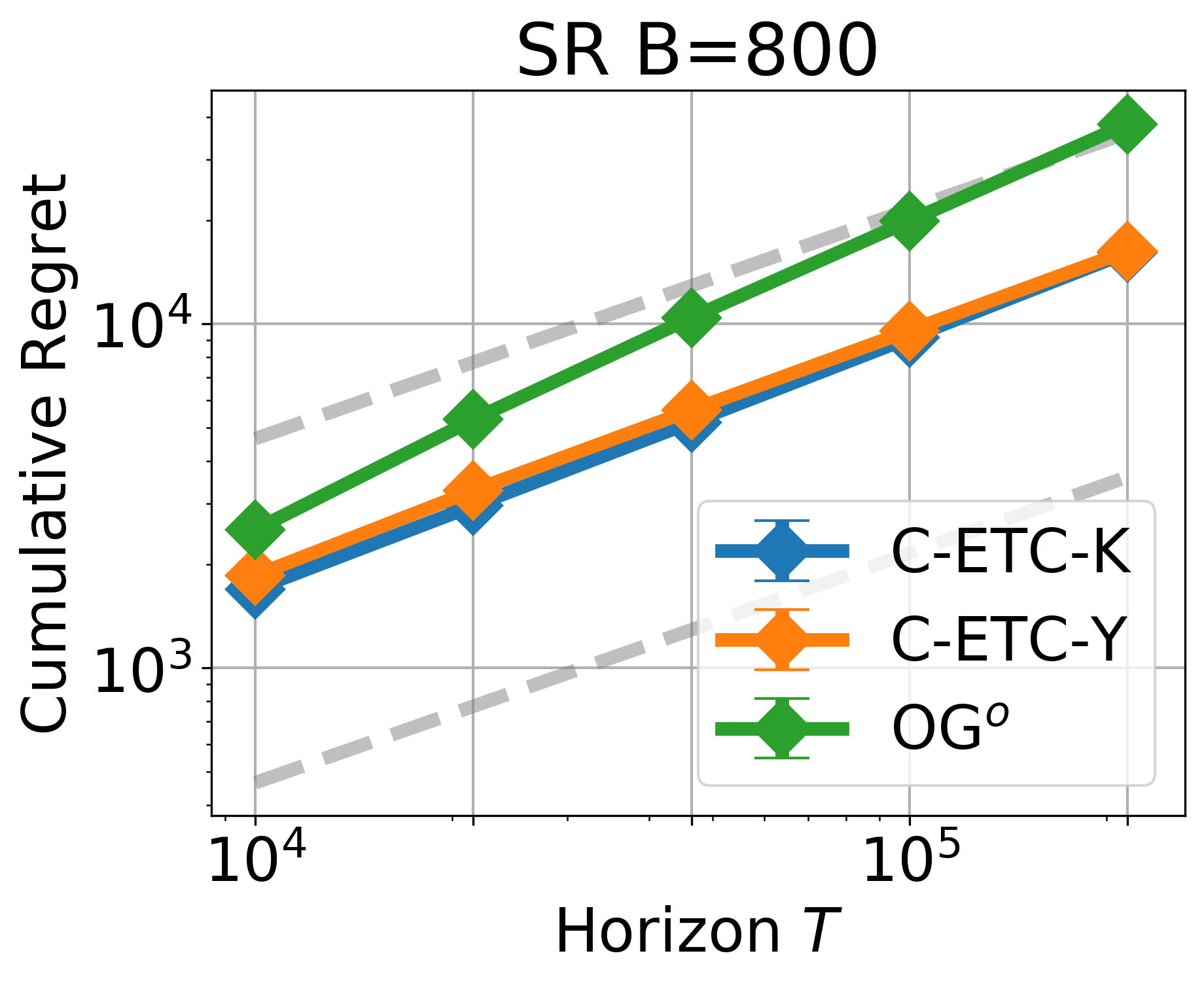} }}%
    \vspace{.5cm}\\
    \subfloat[]{\label{fig:sr:c}{\includegraphics[width=0.35\linewidth]{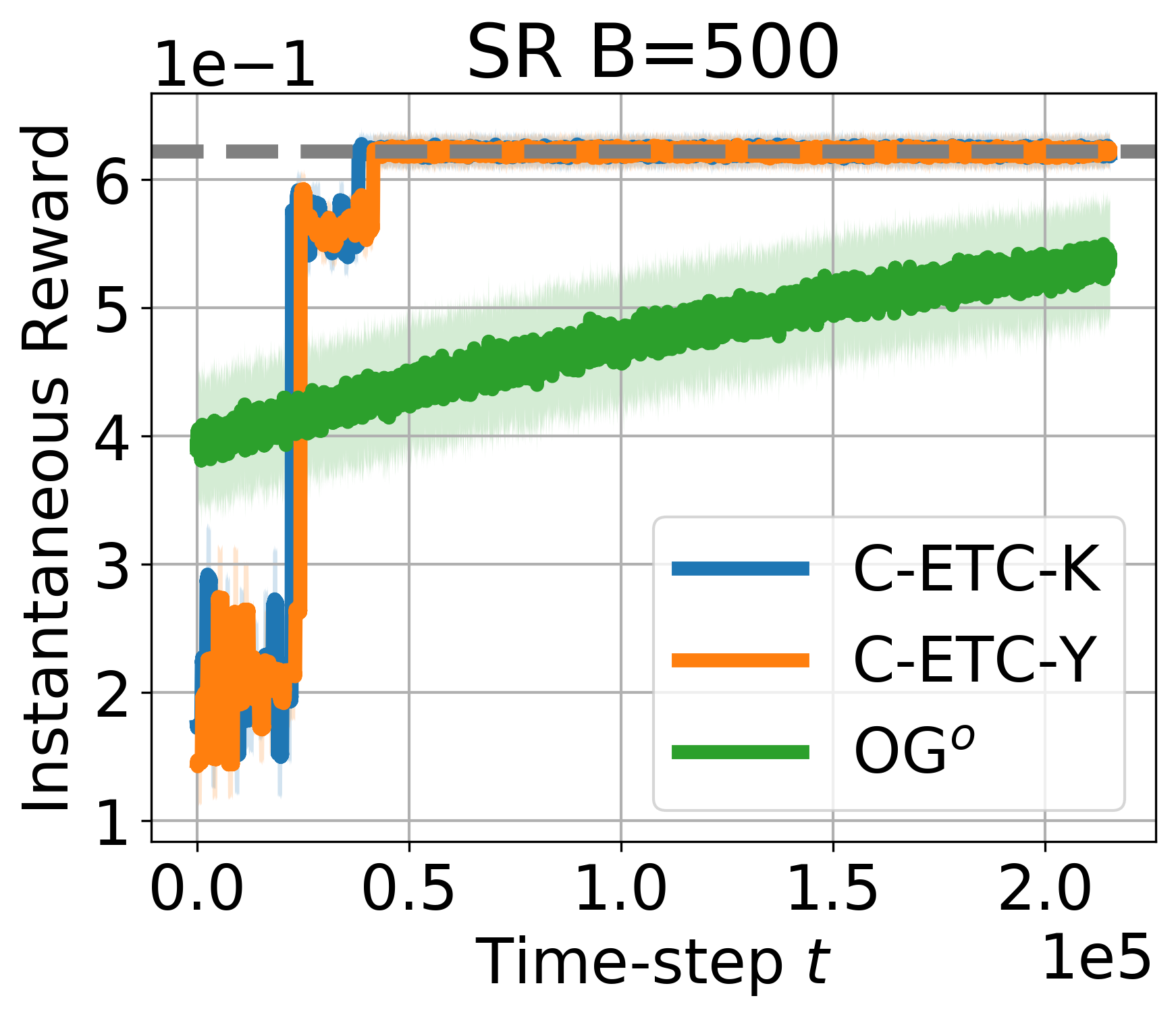} }}\qquad%
    \subfloat[]{\label{fig:sr:d}{\includegraphics[width=0.35\linewidth]{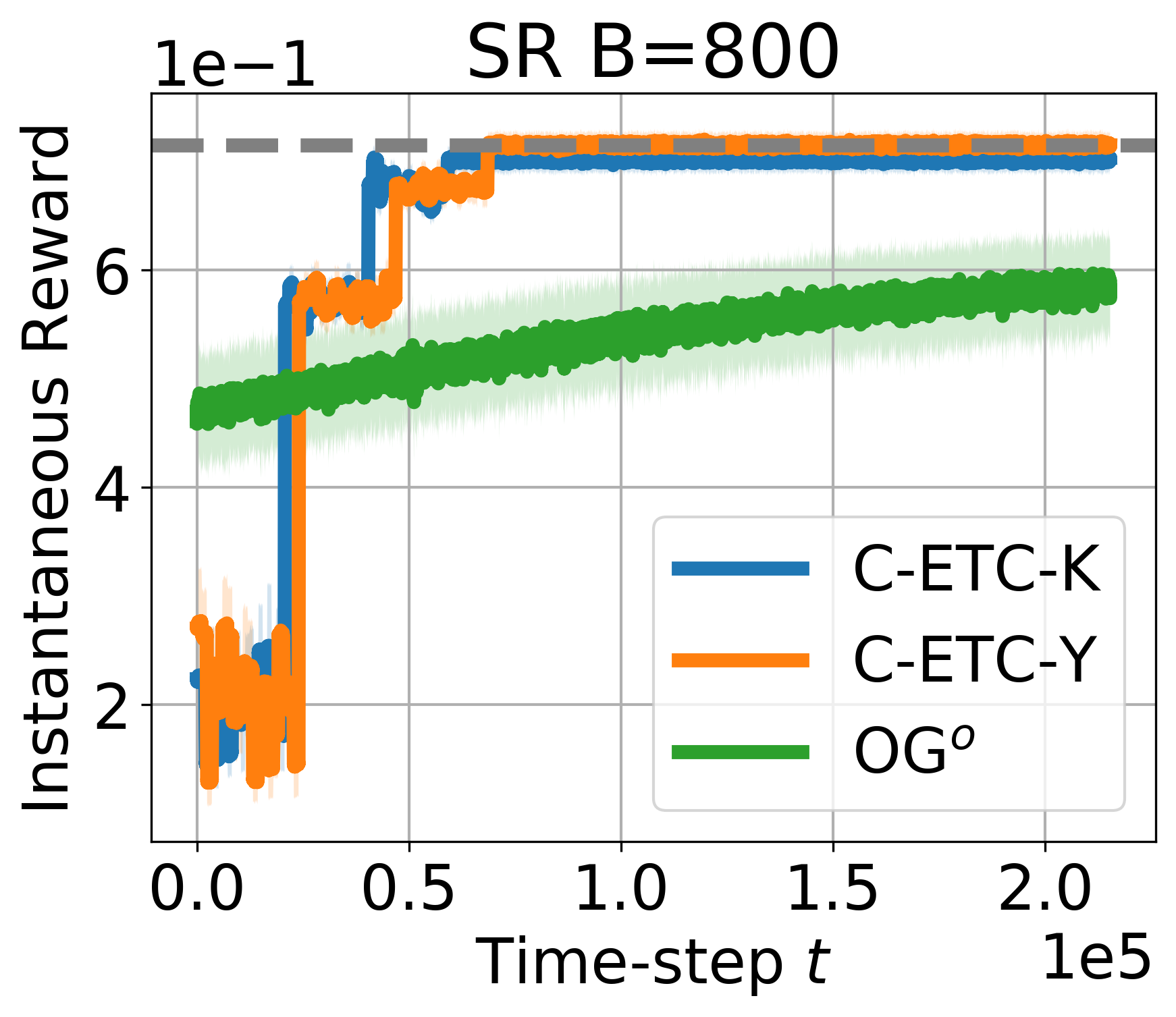} }}%
    \caption{Plots for song recommendation example. (a) and (b) are  results for cumulative regret as a function of time horizon $T$ for budgets $B=500$ and $B=800$ respectively. (c) and (d) are  instantaneous reward plots (smoothed with moving averages using a window size of 100) as a function of $t$ for budgets $B=500$ and $B=800$ respectively. The gray dashed lines in (a) and (b) represent $y = aT^{2/3}$ for various values of $a$ for visual reference.  The gray dashed lines in  (c) and (d) represent expected rewards for the action chosen by an offline greedy algorithm.}%
    \label{fig:sr}%
\end{figure}

\textbf{Data Set Description and Experiment Details}
From the Million Song Dataset \cite{Bertin-Mahieux2011}, we extract most popular 20 songs and 100 most active users. 

As in \citep{yue2011linear}, we model the system as  having a set of topics (or genres) $\mathcal{G}$ with $|\mathcal{G}|=d$ and for each item $e \in \Omega$, there is a feature vector $x(e):=\left(P_g(e)\right)_{g \in \mathcal{G}} \in \mathbb{R}^d$ that represents the information coverage on different genres. For each genre $g$, we define the probabilistic coverage function $f_g(S)$ by $1-\prod_{e \in S}\left(1-P_g(e)\right)$ and define the reward function $f(S)=\sum_i w_i f_i(S)$ with linear coefficients $w_i$. The vector $w:=$ $\left[w_1, \ldots, w_d\right]$ represents user preference on genres. %
In calculating $P_g(e)$ and $w$, we use the same formula for calculating $\bar{w}(e, g)$ and $\theta^*$ in \citep{pmlr-v115-hiranandani20a}. Like \citep{pmlr-v124-takemori20a}, we  define the cost of a song by its length (in seconds). For each user, the stochastic rewards of set $S$ are sampled from a Bernoulli distribution with parameter $f(S)$. For the total reward, we take the average over all users. When making the plots, we use statistics taken from 10 runs.

\textbf{Results and Discussion}
\cref{fig:sr:a,fig:sr:b} show average cumulative regret curves for \etckgk\ (in blue), \etckga\ (in orange) and OG$^\text{o}$ (in green) for different horizon $T$ values when the budget constraint $B$ is 500 and 800, respectively. \cref{fig:sr:c,fig:sr:d} are the instantaneous reward plots over a single horizon $T= 215,443$. Again, C-ETC significantly outperforms OG$^\text{o}$ for all time horizons and budget considered. We again estimated the slopes for both methods on log-log scale plots. Over the horizons tested,  OG$^\text{o}$'s cumulative regret (averaged over ten runs) has a growth rate above $0.85$. The growth rates of \etckgk\ for budgets 500 and 800 are $0.70$ and $0.73$, respectively. The growth rates of \etckga\ for budgets 500 and 800 are $0.70$ and $0.71$, respectively.

\section{Conclusions}\label{sec:concl}

This paper provides a general framework for adapting discrete offline approximation algorithms into sublinear $\alpha$-regret methods for stochastic CMAB problems where only bandit feedback is available with provable regret guarantees. %
We applied this framework to diverse applications in submodular maximization.  Exploring the applications in other domains where this general framework can be used is an interesting future direction.

\begin{commentediting3}
\clearpage
\end{commentediting3}

\bibliography{refs.bib}

\begin{thebibliography}{35}
\providecommand{\natexlab}[1]{#1}
\providecommand{\url}[1]{\texttt{#1}}
\expandafter\ifx\csname urlstyle\endcsname\relax
  \providecommand{\doi}[1]{doi: #1}\else
  \providecommand{\doi}{doi: \begingroup \urlstyle{rm}\Url}\fi

\bibitem[Arora et~al.(2012)Arora, Hazan, and Kale]{arora2012multiplicative}
Sanjeev Arora, Elad Hazan, and Satyen Kale.
\newblock The multiplicative weights update method: a meta-algorithm and applications.
\newblock \emph{Theory of Computing}, 8\penalty0 (1):\penalty0 121--164, 2012.

\bibitem[Badanidiyuru and Vondr{\'a}k(2014)]{badanidiyuru2014fast}
Ashwinkumar Badanidiyuru and Jan Vondr{\'a}k.
\newblock Fast algorithms for maximizing submodular functions.
\newblock In \emph{Proceedings of the Twenty-Fifth Annual ACM-SIAM Symposium on Discrete Algorithms}, pages 1497--1514. SIAM, 2014.

\bibitem[Bertin-Mahieux et~al.(2011)Bertin-Mahieux, Ellis, Whitman, and Lamere]{Bertin-Mahieux2011}
Thierry Bertin-Mahieux, Daniel~P.W. Ellis, Brian Whitman, and Paul Lamere.
\newblock The million song dataset.
\newblock In \emph{{Proceedings of the 12th International Conference on Music Information Retrieval ({ISMIR} 2011)}}, 2011.

\bibitem[Besson and Kaufmann(2018)]{Besson2018WhatDT}
Lilian Besson and Emilie Kaufmann.
\newblock What doubling tricks can and can't do for multi-armed bandits.
\newblock \emph{ArXiv}, abs/1803.06971, 2018.

\bibitem[Bian et~al.(2020)Bian, Guo, Wang, and Yu]{Bian2020Efficient}
Song Bian, Qintian Guo, Sibo Wang, and Jeffrey~Xu Yu.
\newblock Efficient algorithms for budgeted influence maximization on massive social networks.
\newblock \emph{Proceedings of the VLDB Endowment}, 13\penalty0 (9):\penalty0 1498--1510, 2020.

\bibitem[Blondel et~al.(2008)Blondel, Guillaume, Lambiotte, and Lefebvre]{Blondel2008FastUO}
Vincent~D. Blondel, Jean-Loup Guillaume, Renaud Lambiotte, and Etienne Lefebvre.
\newblock Fast unfolding of communities in large networks.
\newblock \emph{Journal of Statistical Mechanics: Theory and Experiment}, 2008:\penalty0 10008, 2008.

\bibitem[Buchbinder et~al.(2012)Buchbinder, Feldman, Naor, and Schwartz]{Buchbinder2012ATL}
Niv Buchbinder, Moran Feldman, Joseph Naor, and Roy Schwartz.
\newblock A tight linear time (1/2)-approximation for unconstrained submodular maximization.
\newblock \emph{2012 IEEE 53rd Annual Symposium on Foundations of Computer Science}, pages 649--658, 2012.

\bibitem[Fourati et~al.(2023)Fourati, Aggarwal, Quinn, and Alouini]{Fourati2023Randomized}
Fares Fourati, Vaneet Aggarwal, Christopher Quinn, and Mohamed-Slim Alouini.
\newblock Randomized greedy learning for non-monotone stochastic submodular maximization under full-bandit feedback.
\newblock In \emph{International Conference on Artificial Intelligence and Statistics}, pages 7455--7471. PMLR, 2023.

\bibitem[Golovin et~al.(2014)Golovin, Krause, and Streeter]{golovin2014online}
Daniel Golovin, Andreas Krause, and Matthew Streeter.
\newblock Online submodular maximization under a matroid constraint with application to learning assignments.
\newblock \emph{arXiv preprint arXiv:1407.1082}, 2014.

\bibitem[Hiranandani et~al.(2020)Hiranandani, Singh, Gupta, Burhanuddin, Wen, and Kveton]{pmlr-v115-hiranandani20a}
Gaurush Hiranandani, Harvineet Singh, Prakhar Gupta, Iftikhar~Ahamath Burhanuddin, Zheng Wen, and Branislav Kveton.
\newblock Cascading linear submodular bandits: Accounting for position bias and diversity in online learning to rank.
\newblock In Ryan~P. Adams and Vibhav Gogate, editors, \emph{Proceedings of The 35th Uncertainty in Artificial Intelligence Conference}, volume 115 of \emph{Proceedings of Machine Learning Research}, pages 722--732. PMLR, 22--25 Jul 2020.

\bibitem[Kempe et~al.(2003)Kempe, Kleinberg, and Tardos]{kempe2003maximizing}
David Kempe, Jon Kleinberg, and {\'E}va Tardos.
\newblock Maximizing the spread of influence through a social network.
\newblock In \emph{Proceedings of the Ninth ACM SIGKDD International Conference on Knowledge Discovery and Data Mining}, pages 137--146, 2003.

\bibitem[Khuller et~al.(1999)Khuller, Moss, and Naor]{khuller1999budgeted}
Samir Khuller, Anna Moss, and Joseph~Seffi Naor.
\newblock The budgeted maximum coverage problem.
\newblock \emph{Information Processing Letters}, 70\penalty0 (1):\penalty0 39--45, 1999.

\bibitem[Krause and Guestrin(2005)]{Krause2005Anote}
Andreas Krause and Carlos Guestrin.
\newblock A note on the budgeted maximization on submodular functions.
\newblock Technical Report CMU-CALD-05-103, Carnegie Mellon University, 2005.

\bibitem[Leskovec and Mcauley(2012)]{NIPS2012_7a614fd0}
Jure Leskovec and Julian Mcauley.
\newblock Learning to discover social circles in ego networks.
\newblock In \emph{Advances in Neural Information Processing Systems}, volume~25. Curran Associates, Inc., 2012.

\bibitem[Leskovec et~al.(2007)Leskovec, Krause, Guestrin, Faloutsos, VanBriesen, and Glance]{Leskovec2007CosteffectiveOD}
Jure Leskovec, Andreas Krause, Carlos Guestrin, Christos Faloutsos, Jeanne VanBriesen, and Natalie Glance.
\newblock Cost-effective outbreak detection in networks.
\newblock In \emph{Proceedings of the 13th ACM SIGKDD International Conference on Knowledge Discovery and Data Mining}, page 420–429. Association for Computing Machinery, 2007.

\bibitem[Li et~al.(2022)Li, Feldman, Kazemi, and Karbasi]{li2022submodular}
Wenxin Li, Moran Feldman, Ehsan Kazemi, and Amin Karbasi.
\newblock Submodular maximization in clean linear time.
\newblock In Alice~H. Oh, Alekh Agarwal, Danielle Belgrave, and Kyunghyun Cho, editors, \emph{Advances in Neural Information Processing Systems}, 2022.

\bibitem[Lin et~al.(2015)Lin, Li, and Chen]{lin2015stochastic}
Tian Lin, Jian Li, and Wei Chen.
\newblock Stochastic online greedy learning with semi-bandit feedbacks.
\newblock In \emph{Proceedings of the 29th International Conference on Neural Information Processing Systems}, pages 352--360, 2015.

\bibitem[Nemhauser et~al.(1978)Nemhauser, Wolsey, and Fisher]{nemhauser1978analysis}
George~L Nemhauser, Laurence~A Wolsey, and Marshall~L Fisher.
\newblock An analysis of approximations for maximizing submodular set functions—i.
\newblock \emph{Mathematical Programming}, 14\penalty0 (1):\penalty0 265--294, 1978.

\bibitem[Nguyen and Zheng(2013)]{Nguyen2013OnBI}
Huy Nguyen and Rong Zheng.
\newblock On budgeted influence maximization in social networks.
\newblock \emph{IEEE Journal on Selected Areas in Communications}, 31:\penalty0 1084--1094, 2013.

\bibitem[Niazadeh et~al.(2021)Niazadeh, Golrezaei, Wang, Susan, and Badanidiyuru]{niazadeh2021online}
Rad Niazadeh, Negin Golrezaei, Joshua~R Wang, Fransisca Susan, and Ashwinkumar Badanidiyuru.
\newblock Online learning via offline greedy algorithms: Applications in market design and optimization.
\newblock In \emph{Proceedings of the 22nd ACM Conference on Economics and Computation}, pages 737--738, 2021.

\bibitem[Nie et~al.(2022)Nie, Agarwal, Umrawal, Aggarwal, and Quinn]{nie2022explore}
Guanyu Nie, Mridul Agarwal, Abhishek~Kumar Umrawal, Vaneet Aggarwal, and Christopher~John Quinn.
\newblock An explore-then-commit algorithm for submodular maximization under full-bandit feedback.
\newblock In \emph{Uncertainty in Artificial Intelligence}, pages 1541--1551. PMLR, 2022.

\bibitem[Nie et~al.(2023)Nie, Nadew, Zhu, Aggarwal, and Quinn]{nie23framework}
Guanyu Nie, Yididiya~Y. Nadew, Yanhui Zhu, Vaneet Aggarwal, and Christopher~John Quinn.
\newblock A framework for adapting offline algorithms to solve combinatorial multi-armed bandit problems with bandit feedback.
\newblock In Andreas Krause, Emma Brunskill, Kyunghyun Cho, Barbara Engelhardt, Sivan Sabato, and Jonathan Scarlett, editors, \emph{Proceedings of the 40th International Conference on Machine Learning}, volume 202 of \emph{Proceedings of Machine Learning Research}, pages 26166--26198. PMLR, 23--29 Jul 2023.
\newblock URL \url{https://proceedings.mlr.press/v202/nie23b.html}.

\bibitem[Perrault et~al.(2020)Perrault, Healey, Wen, and Valko]{perrault2020budgeted}
Pierre Perrault, Jennifer Healey, Zheng Wen, and Michal Valko.
\newblock Budgeted online influence maximization.
\newblock In Hal~Daumé III and Aarti Singh, editors, \emph{Proceedings of the 37th International Conference on Machine Learning}, volume 119 of \emph{Proceedings of Machine Learning Research}, pages 7620--7631. PMLR, 13--18 Jul 2020.

\bibitem[Slivkins(2019)]{MAL-068}
Aleksandrs Slivkins.
\newblock Introduction to multi-armed bandits.
\newblock \emph{Foundations and Trends® in Machine Learning}, 12\penalty0 (1-2):\penalty0 1--286, 2019.

\bibitem[Streeter and Golovin(2008)]{streeter2008online}
Matthew Streeter and Daniel Golovin.
\newblock An online algorithm for maximizing submodular functions.
\newblock In \emph{Proceedings of the 21st International Conference on Neural Information Processing Systems}, page 1577–1584, 2008.

\bibitem[Sviridenko(2004)]{Sviridenko2004ANO}
Maxim Sviridenko.
\newblock A note on maximizing a submodular set function subject to a knapsack constraint.
\newblock \emph{Operations Research Letters}, 32:\penalty0 41--43, 2004.

\bibitem[Takemori et~al.(2020{\natexlab{a}})Takemori, Sato, Sonoda, Singh, and Ohkuma]{pmlr-v124-takemori20a}
Sho Takemori, Masahiro Sato, Takashi Sonoda, Janmajay Singh, and Tomoko Ohkuma.
\newblock Submodular bandit problem under multiple constraints.
\newblock In Jonas Peters and David Sontag, editors, \emph{Proceedings of the 36th Conference on Uncertainty in Artificial Intelligence (UAI)}, volume 124 of \emph{Proceedings of Machine Learning Research}, pages 191--200. PMLR, 03--06 Aug 2020{\natexlab{a}}.

\bibitem[Takemori et~al.(2020{\natexlab{b}})Takemori, Sato, Sonoda, Singh, and Ohkuma]{takemori2020submodular}
Sho Takemori, Masahiro Sato, Takashi Sonoda, Janmajay Singh, and Tomoko Ohkuma.
\newblock Submodular bandit problem under multiple constraints.
\newblock In \emph{Conference on Uncertainty in Artificial Intelligence}, pages 191--200. PMLR, 2020{\natexlab{b}}.

\bibitem[Tang et~al.(2018)Tang, Tang, Xiao, and Yuan]{tang2018online}
Jing Tang, Xueyan Tang, Xiaokui Xiao, and Junsong Yuan.
\newblock Online processing algorithms for influence maximization.
\newblock In \emph{Proceedings of the 2018 International Conference on Management of Data}, pages 991--1005, 2018.

\bibitem[Tang et~al.(2015)Tang, Shi, and Xiao]{tang2015influence}
Youze Tang, Yanchen Shi, and Xiaokui Xiao.
\newblock Influence maximization in near-linear time: A martingale approach.
\newblock In \emph{Proceedings of the 2015 ACM SIGMOD International Conference on Management of Data}, pages 1539--1554, 2015.

\bibitem[Wang et~al.(2020)Wang, Yang, Xu, and Truong]{Wang2020FastTS}
Shatian Wang, Shuoguang Yang, Zhen Xu, and Van{-}Anh Truong.
\newblock {Fast Thompson sampling algorithm with cumulative oversampling: Application to budgeted influence maximization}.
\newblock \emph{CoRR}, abs/2004.11963, 2020.

\bibitem[Wu et~al.(2022)Wu, Gao, Zhu, and Zhang]{wu2022budgeted}
Jianshe Wu, Junjun Gao, Hongde Zhu, and Zulei Zhang.
\newblock Budgeted influence maximization via boost simulated annealing in social networks.
\newblock \emph{arXiv preprint arXiv:2203.11594}, 2022.

\bibitem[Yaroslavtsev et~al.(2020)Yaroslavtsev, Zhou, and Avdiukhin]{yaroslavtsev2020bring}
Grigory Yaroslavtsev, Samson Zhou, and Dmitrii Avdiukhin.
\newblock {“Bring your own greedy”+max: Near-optimal 1/2-approximations for submodular knapsack}.
\newblock In Silvia Chiappa and Roberto Calandra, editors, \emph{Proceedings of the Twenty Third International Conference on Artificial Intelligence and Statistics}, volume 108 of \emph{Proceedings of Machine Learning Research}, pages 3263--3274. PMLR, 26--28 Aug 2020.

\bibitem[Yu et~al.(2016)Yu, Fang, and Tao]{yu2016linear}
Baosheng Yu, Meng Fang, and Dacheng Tao.
\newblock Linear submodular bandits with a knapsack constraint.
\newblock In \emph{Proceedings of the AAAI Conference on Artificial Intelligence}, volume~30, 2016.

\bibitem[Yue and Guestrin(2011)]{yue2011linear}
Yisong Yue and Carlos Guestrin.
\newblock Linear submodular bandits and their application to diversified retrieval.
\newblock \emph{Advances in Neural Information Processing Systems}, 24, 2011.

\end{thebibliography}
\newpage
\appendix
\onecolumn
\section{Proof for Regret of C-ETC}
In this section, we prove \cref{thm:main} in \cref{sec:robust} of the main paper. We restate the theorem:
For the sequential decision making problem defined in Section 2 and $T\geq \frac{2\sqrt{2}N}{\delta}$, the expected cumulative $\alpha$-regret of C-ETC using an $(\alpha,\delta)$-robust approximation algorithm as subroutine is at most $\mathcal{O}\left(\delta^\frac{2}{3}N^\frac{1}{3} T^\frac{2}{3}\log(T)^\frac{1}{3}\right)$, where $N$ upper-bounds the number of value oracle queries made by the offline algorithm $\mathcal{A}$.

\subsection{Overview and Notations}

We will separate the proof into two cases.  The first case is for when the clean event $\mathcal{E}$  
happens, which we will show in \cref{lem:probcleanevents} happens with high probability.  Under the clean event, using the fact that the offline algorithm is an $(\alpha,\delta)$-robust approximation, C-ETC's chosen set $S$ for the exploitation phase will nonetheless be near-optimal. The second case is when the complementary event happens, which occurs with low probability.

The proof structure analyzing a high-probability ``clean event'' where empirical estimates are sufficiently concentrated around their means is analogous to that for the unstructured non-combinatorial setting (see for instance, Section 1.2 in \citep{MAL-068}). 
However, unlike the ETC procedure for non-combinatorial MAB problems, C-ETC makes sequences of decisions during exploration.  Furthermore, the combinatorial action space, non-linearity of the reward function, and lack of extra feedback (like marginal gains) make the problem challenging.  Even in the special setting of deterministic rewards, the standard MAB problem becomes trivial (finding the largest of $n$ base arms) while the problem we considered are NP-hard.

Recap that for any (feasible) action $A$, $f_t(A)$ denotes a (random) reward at time $t$ for the agent taking that action,   $f(A)$ denotes the expected value for action $A$. Let $\bar{f}_t(A)$ denote the empirical mean of rewards received from playing action $A$ up to and including time $t$. In the following, we will drop the subscript $t$ from the empirical mean, writing $\bar{f}(A)$ when it is clear from context that action $A$ has been played $m$ times. Also, we use $A_i$, $i\in \{1,\cdots, N\}$ denotes the i-th action the algorithm samples. We further denote $T_i, i\in \{1,\dots,N\}$ as the time step when the sampling of the i-th action has been determined, or $A_i$ has been played $m$ times. For notation consistency, we also denote $T_0=0$ and $T_{N+1}=T$.

\subsection{Probability of the Clean Event}
\label{sec:appd:proof:clean-event}
Now we define events that are important in our analysis. Recall that for each action $A$ being explored, the $m$ rewards  are i.i.d. with mean $f(A)$ and bounded in $[0,1]$. Thus, we can bound the deviation of the (unbiased) empirical mean $\bar{f}(A_i)$ from the expected value $f(A_i)$ for each action played.
Specifically, we can use a two-sided Hoeffding bound for bounded variables.

\begin{remark}
For convenience, we assume the reward function bounded in $[0,1]$, but the result can be generalized to the case where the deviation of the true reward and the expected reward has a light tailed distribution (e.g., sub-Gaussian). 
\end{remark}
\begin{lemma} [Hoeffding's inequality]
\label{lem:hoeffding}
Let $X_1, \cdots, X_n$ be independent random variables bounded in the interval $[0, 1]$, and let $\bar{X}$ denote their empirical mean. Then we have for any $\epsilon >0$,
\begin{align}
    \mathbb{P}\left( \big|\bar{X} -  \mathbb{E}[\bar{X}] \big| \geq \epsilon  \right) \leq 2 \mathrm{exp} \left( - 2 n \epsilon^2  \right). 
\end{align}
\end{lemma}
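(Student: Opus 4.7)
The plan is to prove Hoeffding's inequality via the classical Chernoff-bound approach, reducing the concentration statement to a moment-generating-function (MGF) estimate for bounded random variables and then optimizing over the Chernoff parameter. First I would handle the one-sided tail $\mathbb{P}(\bar{X}-\mathbb{E}[\bar{X}]\geq \epsilon)$ and then obtain the two-sided bound by applying the same argument to $-X_i$ and taking a union bound, which produces the factor of $2$.

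For the one-sided tail, I would let $Y_i = X_i - \mathbb{E}[X_i]$ so that each $Y_i$ is a zero-mean random variable supported on an interval of length at most $1$ (since $X_i \in [0,1]$). For any $s>0$, Markov's inequality applied to $e^{s \sum_i Y_i}$ gives
\begin{align*}
\mathbb{P}\bigl(\bar{X}-\mathbb{E}[\bar{X}]\geq \epsilon\bigr)
= \mathbb{P}\!\left(\textstyle\sum_i Y_i \geq n\epsilon\right)
\leq e^{-sn\epsilon}\,\mathbb{E}\!\left[e^{s\sum_i Y_i}\right]
= e^{-sn\epsilon}\prod_{i=1}^{n}\mathbb{E}\!\left[e^{sY_i}\right],
\end{align*}
where the last equality uses independence of the $X_i$'s (and hence of the $Y_i$'s).

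The key step, which I expect to be the main technical obstacle, is the MGF bound (Hoeffding's lemma): for a zero-mean random variable $Y$ supported in an interval $[a,b]$, one has $\mathbb{E}[e^{sY}] \leq \exp\!\bigl(s^2(b-a)^2/8\bigr)$. I would prove this as an auxiliary lemma by writing $Y$ as a convex combination $Y = \lambda b + (1-\lambda) a$ with $\lambda = (Y-a)/(b-a)$, applying convexity of $y\mapsto e^{sy}$ to bound $e^{sY}$, taking expectations (using $\mathbb{E}[Y]=0$), and then showing via a second-derivative argument on $\varphi(u):=\log\bigl(\frac{b}{b-a}e^{ua} - \frac{a}{b-a}e^{ub}\bigr)$ with $u=s$ that $\varphi(u)\leq u^2(b-a)^2/8$; this follows from $\varphi(0)=\varphi'(0)=0$ and $\varphi''(u)\leq (b-a)^2/4$ (the variance of a Bernoulli-type random variable is bounded by a quarter of its range squared). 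In our setting each $Y_i$ lies in an interval of length $1$, so $\mathbb{E}[e^{sY_i}] \leq e^{s^2/8}$.

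Plugging back yields
\begin{align*}
\mathbb{P}\bigl(\bar{X}-\mathbb{E}[\bar{X}]\geq \epsilon\bigr) \leq e^{-sn\epsilon}\,e^{ns^2/8}.
\end{align*}
Optimizing over $s>0$ by differentiating the exponent $-sn\epsilon + ns^2/8$ gives $s^* = 4\epsilon$, yielding the one-sided bound $\exp(-2n\epsilon^2)$. Applying the identical argument to the variables $1-X_i$ (which are also in $[0,1]$) produces the symmetric bound on $\mathbb{P}(\bar{X}-\mathbb{E}[\bar{X}]\leq -\epsilon)$, and a union bound combines them into the stated factor of $2$ in front of $\exp(-2n\epsilon^2)$. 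The main subtlety throughout is the MGF/variance argument in the auxiliary lemma; the rest is standard Chernoff bookkeeping.
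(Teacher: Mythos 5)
Your proof is correct and is the standard Chernoff--Hoeffding argument: the reduction to a one-sided tail, the MGF bound via Hoeffding's lemma (with the convexity and second-derivative argument on $\varphi$), the optimization $s^*=4\epsilon$ giving the exponent $-2n\epsilon^2$, and the union bound for the factor of $2$ are all as they should be. The paper does not prove this lemma at all --- it states it as a classical fact and uses it as a black box --- so there is nothing to diverge from; your write-up simply supplies the canonical textbook proof.
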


By C-ETC, each sampled action will be played the same number of times, denoted by $m$, so we consider bounding the probabilities of equal-sized confidence radii $\mathrm{rad} := \sqrt{\log(T)/2m}$ for all the actions played during exploration.

We next analyze the probability of the event that the empirical means of all actions played during exploration are concentrated around their statistical means within a radius $\mathrm{rad}$. Denote the corresponding events for each action played having empirical means concentrated around their respective statistical means as $\mathcal{E}_{i}$, 
\begin{align}
    \mathcal{E}_{i} &:=\bigcap\{\big|\bar{f}(A_i)-f(A_i) \big|< \mathrm{rad}\},\quad  
    i \in \{1,\cdots,N\}. \label{eq:iteration_event}
\end{align}
Define the \textit{clean event} $\mathcal{E}$ to be the event that the empirical means of all actions played in the exploration phase are within $\mathrm{rad}$ of their corresponding statistical means:
\begin{align}
    \mathcal{E} := \mathcal{E}_{1}\cap \dots \cap \mathcal{E}_{N}. \label{eq:clean_event}
\end{align}

\begin{lemma} \label{lem:probcleanevents}
The probability of the clean event $\mathcal{E}$ \eqref{eq:clean_event} satisfies:
\begin{align}
    \mathbb{P}(\mathcal{E}) %
    & \geq 1 - \frac{2N}{T}. \nonumber
\end{align}
\end{lemma}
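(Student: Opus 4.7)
The plan is to bound $\mathbb{P}(\mathcal{E}_i^c)$ for each $i$ using Hoeffding's inequality and then combine via a union bound. By construction, during the exploration phase C-ETC plays each queried action $A_i$ exactly $m$ consecutive times, and the rewards $f_t(A_i)$ are i.i.d.\ in $[0,1]$ conditional on $A_i$ being played, with mean $f(A_i)$. Thus $\bar{f}(A_i)$ is a sample mean of $m$ i.i.d.\ bounded variables with the correct mean, and \cref{lem:hoeffding} applied with $\epsilon = \mathrm{rad} = \sqrt{\log(T)/(2m)}$ gives
\begin{align}
    \mathbb{P}\bigl(\mathcal{E}_i^c\bigr) \;\leq\; 2\exp\bigl(-2m \cdot \mathrm{rad}^2\bigr) \;=\; 2\exp(-\log T) \;=\; \frac{2}{T}. \nonumber
\end{align}

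The main subtlety I would address carefully is that the action $A_i$ is not fixed in advance: it is chosen by the offline algorithm $\mathcal{A}$ as a (possibly randomized) function of the previously returned empirical means $\bar{f}(A_1),\dots,\bar{f}(A_{i-1})$. I would handle this by conditioning on the $\sigma$-algebra $\mathcal{F}_{i-1}$ generated by all rewards observed through time $T_{i-1}$ (and the internal randomness of $\mathcal{A}$ up to that point). Given $\mathcal{F}_{i-1}$, the identity of $A_i$ is determined, and the next $m$ rewards are independent draws of $f_t(A_i)$ with mean $f(A_i)$ and range $[0,1]$. So Hoeffding applies conditionally, giving $\mathbb{P}(\mathcal{E}_i^c \mid \mathcal{F}_{i-1}) \leq 2/T$ pointwise, and hence unconditionally $\mathbb{P}(\mathcal{E}_i^c)\leq 2/T$.

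Finally, I would apply a union bound over the at most $N$ queries made by $\mathcal{A}$:
\begin{align}
    \mathbb{P}(\mathcal{E}^c) \;=\; \mathbb{P}\Bigl(\bigcup_{i=1}^{N} \mathcal{E}_i^c\Bigr) \;\leq\; \sum_{i=1}^{N} \mathbb{P}(\mathcal{E}_i^c) \;\leq\; \frac{2N}{T}, \nonumber
\end{align}
so $\mathbb{P}(\mathcal{E}) \geq 1 - 2N/T$. If $\mathcal{A}$ adaptively makes fewer than $N$ queries on some sample paths, the same bound still holds since $N$ is only an upper bound on the number of queries and the extra terms in the union bound only loosen the inequality. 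The only real obstacle is the adaptivity issue, which the conditioning argument above dispatches cleanly.
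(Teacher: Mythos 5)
Your proposal is correct and follows essentially the same route as the paper: a Hoeffding bound with $\epsilon = \mathrm{rad} = \sqrt{\log(T)/2m}$ giving $\mathbb{P}(\bar{\mathcal{E}}_i) \leq 2/T$ per action, followed by a union bound over the at most $N$ queries. Your explicit conditioning on $\mathcal{F}_{i-1}$ to handle the adaptivity of the query sequence is a welcome extra precaution that the paper's proof leaves implicit, but it does not change the substance of the argument.
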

\begin{proof}
Applying the Hoeffding bound \cref{lem:hoeffding} to  the empirical mean $\bar{f}(A_i)$ of $m$ rewards for action $A_i$  and choosing $\epsilon=\mathrm{rad}=\sqrt{\log(T)/2m}$ gives 

\begin{align}
    \mathbb{P} (\bar{\mathcal{E}}_i)&=\mathbb{P}\left[\big|\bar{f}(A_i)-f(A_i) \big| \geq \mathrm{rad} \right] \nonumber\\
    &\leq 2 \mathrm{exp} \left( - 2 m \mathrm{rad}^2  \right) \nonumber\\
    &= 2 \mathrm{exp} \left( - 2 m (\log(T)/2m ) \right) \nonumber\\
    &= 2 \mathrm{exp} \left( -  \log(T)  \right) \nonumber\\
    &= \frac{2}{T}. \label{eq:probbnd:single}
\end{align}
Then, we can bound the probability of clean events
\begin{align}
    \mathbb{P}(\mathcal{E}) &= \mathbb{P}(\mathcal{E}_1\cap \dots \cap \mathcal{E}_{N}) \nonumber\\
    &=1-\mathbb{P}(\bar{\mathcal{E}}_1\cup \dots \cup \bar{\mathcal{E}}_{N}) \tag{De Morgan's Law}\\
    &\geq 1-\sum_{i=1}^{N} \mathbb{P}(\bar{\mathcal{E}}_{i})\tag{union bounds} \\
    &\geq 1-\frac{2 N}{T}.  \tag{using \eqref{eq:probbnd:single}}
\end{align}
\end{proof}

\subsection{Near Optimality of the final $S$ (Exploitation Phase Action)}\label{sec:appd:proof:greedy-marginal-gains}

In \cref{lem:probcleanevents}, we showed that the clean event $\mathcal{E}$ will happen with high probability. When the clean event $\mathcal{E}$ happens, we have $|\bar{f}(A)-f(A)|\leq \mathrm{rad}$ for all evaluated action $A$. For an online algorithm (with output $S$) using an $(\alpha, \delta)$-robust approximation as subroutine, we have
\begin{align}
    \mathbb{E}[f(S)]\geq \alpha f(\mathrm{OPT})-\delta \cdot \mathrm{rad}. \label{eq:S:optimality}
\end{align}

\subsection{Final Regret}\label{sec:apdx:prf:main-thm}

Now we are ready to show the regret of C-ETC (\cref{thm:main} in \cref{sec:robust} of the main paper).

\subsubsection*{Case 1: clean event $\mathcal{E}$ happens}

In the first case we analyse the expected regret under the condition that the clean event $\mathcal{E}$ happens. In this section, all expectations will be conditioned on $\mathcal{E}$, but to simplify notation we will write $\mathbb{E}[\cdot]$ instead of $\mathbb{E}[\cdot|\mathcal{E}]$ in some cases.

First we can break up the expected $\alpha$-regret conditioned on $\mathcal{E}$ into two parts, one for the first $L$ exploration iterations, and the second for the exploitation iteration. Although the number of actions taken per iteration and the number of iterations of the greedy is not known a priori, we can upper bound the duration.   Also recall that $f_t(A_t)$ is the random reward for taking action $A_t$, which itself is random, depending on empirical means of actions in earlier iterations.  
\begin{align}
    \mathbb{E}[\mathcal{R}(T)|\mathcal{E}] 
    & = \alpha T f( \mathrm{OPT} ) -\sum_{t=1}^T \mathbb{E}[f_t(A_t)] \nonumber\\
    & = \alpha T f( \mathrm{OPT} ) -\sum_{t=1}^T \mathbb{E}[   \mathbb{E}[ f_t(A_t)  |A_t  ]] \tag{law of total expectation}\\
    & = \alpha T f( \mathrm{OPT} ) -\sum_{t=1}^T \mathbb{E}[f(A_t)] \tag{$f(\cdot)$ defined as expected reward}\\
    & = \sum_{t=1}^T \left(\alpha f(\mathrm{OPT})-\mathbb{E}[f(A_t)]\right) \tag{rearranging} \\
    & = \underbrace{\sum_{i=1}^{N} m \left(\alpha f(\mathrm{OPT})-\mathbb{E}[f(A_i)]\right)}_{\text{Exploration phase}} + \underbrace{\sum_{t=T_{N}+1}^T \left(\alpha f(\mathrm{OPT})-\mathbb{E}[f(A_t)]\right)}_{\text{Exploitation phase}} %
    \nonumber\\
    &=\sum_{i=1}^{N} m \left(\alpha f(\mathrm{OPT})-\mathbb{E}[f(A_i)]\right)+\sum_{t=T_{N}+1}^T \left(\alpha f(\mathrm{OPT})-\mathbb{E}[f(S)]\right). \label{eq:prf:mainthm:case1:60}
\end{align}

\paragraph{Case 1 (clean event): Bounding exploration regret:} We will separately bound the regret incurred from the exploration and exploitation.  We begin with bounding regret from exploration,

\begin{align}
    & \hspace{-2cm} \sum_{i=1}^{N} m \left(\alpha f(\mathrm{OPT})-\mathbb{E}[f(A_i)]\right) \nonumber \\ %
    &\leq  \sum_{i=1}^{N} m \left(\alpha-0\right) \tag{rewards are bounded in $[0,1]$}\\
    &\leq Nm. \label{eq:cum-regr-explor}
\end{align}

\paragraph{Case 1 (clean event): Bounding exploitation regret:}
We next bound the regret incurred during the exploitation iteration.  Since the set $S$ used during exploitation is a random variable, we can take the expectation of \eqref{eq:S:optimality} (conditioned on event $\mathcal{E}$), to bound the expected instantaneous regret for each time step of the exploitation iteration, 
\begin{align}
    \alpha f(\mathrm{OPT})-\mathbb{E}[f(S)]\leq \delta \mathrm{rad}. \label{eq:final_exp_reward}
\end{align}
Using a loose bound for the duration of the exploitation iteration, $ T-T_{L}+1 < T $,
\begin{align}
    \sum_{t=T_{N}+1}^T \left(\alpha f(\mathrm{OPT})-\mathbb{E}[f(S)]\right) %
    & \leq \sum_{t=T_{N}+1}^T \delta \mathrm{rad}  \tag{using \eqref{eq:final_exp_reward}} \nonumber \\
    & \leq T\delta \mathrm{rad}. \label{eq:cum-regr-exploit}
\end{align}
\paragraph{Case 1 (clean event): Bounding total regret:}
Then the expected cumulative regret \eqref{eq:prf:mainthm:case1:60} can be bounded  as
\begin{align}
    \mathbb{E}[\mathcal{R}(T)|\mathcal{E}] 
    &= \sum_{i=1}^{N} m \left(\alpha f(\mathrm{OPT})-\mathbb{E}[f(A_i)]\right)+\sum_{t=T_{N}+1}^T \left(\alpha f(\mathrm{OPT})-\mathbb{E}[f(S)]\right) \tag{copying \eqref{eq:prf:mainthm:case1:60}} \\
    & \leq Nm + T\delta \mathrm{rad} \tag{using \eqref{eq:cum-regr-explor}, \eqref{eq:cum-regr-exploit}}
\end{align}
Plugging in the formula for the confidence radius $\mathrm{rad} = \sqrt{\log(T)/2m}$, we have 

\begin{align}
    \mathbb{E}[\mathcal{R}(T)|\mathcal{E}] 
    & \leq Nm + T\delta \sqrt{\log(T)/2m} \nonumber \label{eq:final_regret1}
\end{align}
We want to optimize  $m$, the number of times each action is played.  Denoting the regret bound \eqref{eq:final_regret1} as a function of $m$ 
\begin{align}
    g(m) = Nm+ T\delta\sqrt{\log(T)/2m},
\end{align}
then 
\begin{align}
    g'(m) = N - \frac{1}{2}T\delta \sqrt{\log(T)/2} m^{-3/2}.
\end{align}
Setting $g'(m)=0$ and solving for $m$, %
\begin{align}
    \mopt= \frac{\delta^{2/3}T^{2/3}\log(T)^{1/3}}{2N^{2/3}}. \label{eq:value:m}
\end{align}
We next check the second derivative,
\begin{align}
    g''(m) = \frac{3}{4}\delta T\sqrt{\log(T)/2}m^{-5/2}. \label{eq:g2ndder}
\end{align}
For positive values of $m$, %
$g''(m) > 0$, thus $g(m)$ reaches a minimum at \eqref{eq:value:m}.
Since $m$ is the number of times actions are played, we (trivially) need $m\geq 1$ and $m$ to be an integer. We choose 
\begin{align}
    \moptrnd= \ceil*{\frac{\delta^{2/3}T^{2/3}\log(T)^{1/3}}{2N^{2/3}}}. \label{eq:int:value:m}
\end{align}
Since from \eqref{eq:g2ndder} we have that  $g''(m) >0$ for positive $m$, $g(\mopt)\leq g(\moptrnd)$. 
For $T\geq \frac{2\sqrt{2}N}{\delta}$, we have $\mopt \geq 1$. Further, to explore each action selected by the offline algorithm at least once, we need $T\geq N$. Overall, we require $T\geq \max\left\{N, \frac{2\sqrt{2}N}{\delta}\right\}$.

Plugging \eqref{eq:int:value:m} back in to \eqref{eq:final_regret1},  

\begin{align}
    \mathbb{E}[\mathcal{R}(T)|\mathcal{E}]
    &\leq \moptrnd N+ T\delta \sqrt{\log(T)/2\moptrnd} \tag{ \eqref{eq:final_regret1} with $\moptrnd$ samples for each action} \\
    &= \ceil*{\mopt} N+ T\delta \sqrt{\log(T)/2\ceil*{\mopt}} \nonumber\\
    &\leq  \ceil*{\mopt} N+ T\delta  \sqrt{\log(T)/2\mopt}  \tag{Since $\ceil{\mopt} \geq \mopt$} \\
    &\leq 2\mopt N+ T\delta \sqrt{\log(T)/2\mopt}  \tag{Since $\mopt \geq 1$, $\ceil{\mopt} \leq 2\mopt$} \\
    &= 2\frac{\delta^{2/3}T^{2/3}\log(T)^{1/3}}{2N^{2/3}} N \nonumber\\
    &\qquad + T\delta \sqrt{\log(T)/2}\left(\frac{\delta^{2/3}T^{2/3}\log(T)^{1/3}}{2N^{2/3}}\right)^{-1/2}\tag{using \eqref{eq:value:m}}\\
    & = 3\delta^{2/3}N^{1/3}T^{2/3}\log(T)^{1/3} \label{eq:final_regret_clean}\\
    &= \mathcal{O}\left(\delta^\frac{2}{3}N^\frac{1}{3} T^\frac{2}{3}\log(T)^\frac{1}{3}\right). \nonumber
\end{align} %
In conclusion, the expected $\alpha$-regret of C-ETC using an $(\alpha, \delta)$-robust approximation as subroutine is upper bounded by \eqref{eq:final_regret_clean} if the clean event $\mathcal{E}$ happens.

\subsubsection*{Case 2: clean event $\mathcal{E}$ does not happen}
We next derive an upper bound for the expected $\alpha$-regret for case that the event $\mathcal{E}$ does not happen. By \cref{lem:probcleanevents}, \[\mathbb{P}(\bar{\mathcal{E}}) = 1-\mathbb{P}(\mathcal{E}) \leq \frac{2N}{T}.\] Since the reward function $f_t(\cdot)$ is upper bounded by 1, the  expected $\alpha$-regret incurred under $\bar{\mathcal{E}}$ for a horizon of $T$ is at most $T$, %
\begin{align}
    \mathbb{E}[\mathcal{R}(T)|\bar{\mathcal{E}}] \leq T. \label{eq:badevent:regretbnd}
\end{align}

\subsubsection*{Putting it all together}

Combining Cases 1 and 2 we have, 
\begin{align}
    \mathbb{E}[\mathcal{R}(T)] &= \mathbb{E}[\mathcal{R}(T)|\mathcal{E}] \cdot \mathbb{P}(\mathcal{E}) +\mathbb{E}[\mathcal{R}(T)|\bar{\mathcal{E}}]\cdot \mathbb{P}(\bar{\mathcal{E}}) \tag{Law of total expectation}\\
    &\leq 3\delta^{2/3}N^{1/3}T^{2/3}\log(T)^{1/3}  \cdot 1 +T\cdot \frac{2N}{T} \tag{using \eqref{eq:final_regret_clean}, \cref{lem:probcleanevents}, and \eqref{eq:badevent:regretbnd}}\\
    &= \mathcal{O}\left(\delta^\frac{2}{3}N^\frac{1}{3} T^\frac{2}{3}\log(T)^\frac{1}{3}\right).
    \nonumber
\end{align}
This concludes the proof.

\section{Basic Facts} \label{apdx:facts}
\begin{fact} \label{fact:submod-basics:set-diff}
For a monotonically non-decreasing submodular set function $f$ defined over subsets of $\Omega$, we have for arbitrary subsets $A,B\subseteq \Omega$, 
\begin{align}
    f(B)-f(A) \leq \sum_{ j \in B \setminus A } \left[f(A\cup \{j\})-f(A)\right]. \nonumber
\end{align}
\end{fact}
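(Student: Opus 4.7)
The plan is to reduce the claim to the standard submodularity telescoping argument. First I would use monotonicity to replace $B$ by $A \cup B$ on the left: since $B \subseteq A \cup B$, monotonicity gives $f(B) \leq f(A \cup B)$, hence
\[
f(B) - f(A) \;\leq\; f(A \cup B) - f(A).
\]
This reduces the claim to bounding $f(A \cup B) - f(A)$ by the sum of singleton marginal gains over $B \setminus A$.

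Next I would enumerate $B \setminus A = \{j_1, j_2, \ldots, j_r\}$ in some arbitrary order and write $f(A \cup B) - f(A)$ as a telescoping sum. Setting $A_0 := A$ and $A_i := A \cup \{j_1, \ldots, j_i\}$ for $i = 1, \ldots, r$, we have $A_r = A \cup B$, so
\[
f(A \cup B) - f(A) \;=\; \sum_{i=1}^{r} \bigl[\,f(A_i) - f(A_{i-1})\,\bigr] \;=\; \sum_{i=1}^{r} f(j_i \mid A_{i-1}),
\]
using the marginal-gain notation $f(e \mid S) = f(S \cup \{e\}) - f(S)$.

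Then I would apply submodularity termwise. Since $A = A_0 \subseteq A_{i-1}$ for every $i$, the diminishing-returns property gives
\[
f(j_i \mid A_{i-1}) \;\leq\; f(j_i \mid A) \;=\; f(A \cup \{j_i\}) - f(A).
\]
Summing over $i = 1, \ldots, r$ and combining with the monotonicity step yields the claimed inequality. The proof is essentially a textbook exercise; the only minor point worth being careful about is that the telescoping is over $B \setminus A$ (not over all of $B$), which is exactly why the sum on the right-hand side of the statement is indexed by $j \in B \setminus A$. There is no substantive obstacle.
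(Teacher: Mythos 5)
Your proof is correct and is the standard argument (monotonicity to pass from $B$ to $A\cup B$, telescoping over $B\setminus A$, then diminishing returns termwise); the paper states this as a basic fact without proof, and your argument is exactly the one it implicitly relies on. No issues.
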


\begin{fact} \citep{khuller1999budgeted}  \label{fact:khullerbnd}
For $x_1, \cdots, x_n \in \mathbb{R}^+$ such that $\sum x_i = A$, the function $[1-\prod_{i=1}^{n} (1-\frac{x_i}{A})]$ achieves its minimum at $x_1=x_2=\cdots=x_n=A/n$. 
\end{fact}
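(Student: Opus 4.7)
The plan is to convert the minimization of $1-\prod_{i=1}^n(1-x_i/A)$ into a maximization of $\prod_{i=1}^n(1-x_i/A)$ and then apply the AM--GM inequality. Specifically, since minimizing $1-\prod_{i=1}^n(1-x_i/A)$ over the simplex $\{x_i>0,\ \sum_i x_i = A\}$ is equivalent to maximizing $\prod_{i=1}^n(1-x_i/A)$ over the same set, I would substitute $y_i := 1-x_i/A$ and compute $\sum_i y_i = n - \tfrac{1}{A}\sum_i x_i = n-1$. The constraint $x_i>0$ with $\sum_i x_i=A$ forces $0 < x_i < A$ (for $n\geq 2$), so each $y_i\in(0,1)$; hence all factors $y_i$ are strictly positive and AM--GM applies.

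Applying AM--GM to the positive numbers $y_1,\ldots,y_n$ yields
\begin{align*}
\prod_{i=1}^n y_i \ \leq\ \left(\frac{1}{n}\sum_{i=1}^n y_i\right)^{\!n} \ =\ \left(\frac{n-1}{n}\right)^{\!n},
\end{align*}
with equality if and only if $y_1=\cdots=y_n$, i.e., if and only if $x_1=\cdots=x_n=A/n$. Translating back, the product $\prod_i(1-x_i/A)$ is maximized precisely at the uniform point $x_i=A/n$, so $1-\prod_i(1-x_i/A)$ is minimized there, with minimum value $1-\bigl(\tfrac{n-1}{n}\bigr)^n$. The case $n=1$ is trivial since the unique feasible point is $x_1=A$.

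There is no real obstacle here: the only mild subtlety is verifying that each $y_i$ lies in $(0,1)$ so that AM--GM can be invoked on positive reals, which follows immediately from $x_i>0$ and $\sum_j x_j = A$. One could alternatively prove the same claim via Lagrange multipliers on $g(x)=\sum_i\log(1-x_i/A)$, observing that the stationarity conditions $\frac{1}{A-x_i}=\lambda$ force all $x_i$ equal, and checking that the Hessian of $g$ is negative definite on the constraint (so $g$ is concave and its critical point is a maximum), but the AM--GM route is cleaner and avoids boundary analysis.
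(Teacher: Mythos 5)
Your proof is correct. Note that the paper itself gives no proof of this statement: it is listed in the ``Basic Facts'' appendix and attributed to \citet{khuller1999budgeted}, so there is no in-paper argument to compare against. Your AM--GM route is clean and complete: the substitution $y_i = 1-x_i/A$ with $\sum_i y_i = n-1$ and $y_i\in(0,1)$ immediately gives $\prod_i y_i \leq \left(\frac{n-1}{n}\right)^n$ with equality exactly at the uniform point, which is precisely the statement (and also yields the explicit minimum value $1-\left(\frac{n-1}{n}\right)^n$, which is what the paper subsequently lower-bounds by $1-1/e$ via Fact~\ref{fact:submod-basics:1minus1ebnd}).
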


\begin{fact} \label{fact:submod-basics:1minus1ebnd} For $k\geq 1$, %
\begin{align}
    1- \left(1 - \frac{1}{k}\right)^k \geq 1-\frac{1}{e}. \nonumber
\end{align} 
\end{fact}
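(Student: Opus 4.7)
\textbf{Proof proposal for Fact \ref{fact:submod-basics:1minus1ebnd}.} The statement to prove is that $(1-1/k)^k \le 1/e$ for every real $k \ge 1$, which is equivalent to the displayed inequality after subtracting both sides from $1$. My plan is to reduce this to the elementary inequality $1-x \le e^{-x}$, which holds for all real $x$.

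First, I would establish the auxiliary inequality $1 - x \le e^{-x}$ for all $x \in \mathbb{R}$. The cleanest way is to define $h(x) = e^{-x} - (1 - x)$, observe $h(0) = 0$, and compute $h'(x) = 1 - e^{-x}$, which is nonpositive for $x \le 0$ and nonnegative for $x \ge 0$, so $h$ attains its global minimum $0$ at $x = 0$ and is therefore nonnegative everywhere. (Alternatively, one could cite this as a standard convexity consequence of $e^{-x}$ lying above its tangent line at $x=0$.)

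Next, I would apply this inequality with $x = 1/k$, which is legitimate since $k \ge 1$ makes $1/k \in (0,1]$ so $1 - 1/k \ge 0$. This yields
\begin{equation*}
    1 - \frac{1}{k} \le e^{-1/k}.
\end{equation*}
Raising both sides to the $k$-th power (valid because both sides are nonnegative and $k \ge 1$) gives
\begin{equation*}
    \left(1 - \frac{1}{k}\right)^k \le e^{-1} = \frac{1}{e}.
\end{equation*}
Rearranging delivers the desired inequality $1 - (1 - 1/k)^k \ge 1 - 1/e$.

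There is no real obstacle here; the only small subtlety is ensuring that the base $1 - 1/k$ is nonnegative before raising to the $k$-th power, which is why the hypothesis $k \ge 1$ is invoked. The proof is essentially a one-line application of $1-x \le e^{-x}$, which is why most papers cite it as a standard fact rather than proving it.
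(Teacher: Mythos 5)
Your proof is correct: the paper states this as a basic fact without proof, and your argument via $1 - x \le e^{-x}$ applied at $x = 1/k$, followed by raising to the $k$-th power (with the nonnegativity of the base justified by $k \ge 1$), is the standard and complete derivation.
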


\section{Offline Approximation Algorithms for Submodular Maximization  -- Overview}\label{apdx_offlineover}
We  give a brief overview of the offline approximation algorithms  which we will analyze $(\alpha,\delta)$ robustness for. 

For a $k$-cardinality constraint, the  greedy algorithm \textsc{Greedy} proposed in \citet{nemhauser1978analysis} starts from an empty set $G\leftarrow \emptyset$. Then it repeatedly add the element with highest marginal gain $f(e|G)$ until the cardinality $|G|$ reaches $k$. \textsc{ThresholdGreedy}, proposed in \citet{badanidiyuru2014fast},  considers a sequence of decreasing thresholds: $\{\tau=d; \tau \geq \frac{\epsilon'}{n}d; \tau\leftarrow (1-\epsilon')\tau\}$ where $d=\max_{e\in \Omega}f(e)$. Then starting from empty set $G=\emptyset$, the algorithm includes any element $e \notin G$ such that $f(e|G)\geq \tau$ whenever the cardinality is smaller than $k$. %
The algorithm then repeats using a lower threshold.  %
\citet{badanidiyuru2014fast} showed that \textsc{ThresholdGreedy} can achieve $1-1/e-\epsilon'$ approximation.

For a knapsack constraint, several algorithms run the following greedy subroutine, which we refer to as \textsc{Greedy} (cardinality is a special case of this routine with budget $k$ and unit cost, so we keep the same name without confusion).  Start with empty set $G\gets\emptyset$.  Repeatedly add the element $e$ with the highest marginal density $\rho(e|G)$ that fits into the budget.  Let $G_i$ denote the set selected by \textsc{Greedy} that has cardinality $i$ 
and denote the constituent elements as $G_i=\{g_1,\cdots, g_i\}$. Let $L$ denote the cardinality of the final greedy set (i.e. when no more elements remain that are under budget), so $G_L$ is output by \textsc{Greedy}.  Note that $L$ can only be bounded ahead of time---there could be maximal subsets (to which no other elements could be added without violating the budget) of different cardinalities.

\textsc{Greedy} can have an unbounded approximation ratio \citet{khuller1999budgeted} for knapsack constraint. \citet{khuller1999budgeted} proposed \textsc{Greedy+}, which outputs the better of the best individual element $a^*\in \argmax_{e\in \Omega}f(e)$ and the output of \textsc{Greedy}, $\argmax_{S\in\{G_L,a^*\}} f(S)$.  \citet{khuller1999budgeted} proved that \textsc{Greedy+} achieves a $\frac{1}{2}(1-\frac{1}{e})$ approximation ratio. Then, \citet{Sviridenko2004ANO, khuller1999budgeted} proposed \textsc{PartialEnumeration}.  It first enumerate all sets with cardinality up to three. For each enumerated triplets, it build the rest of the solution set greedily. Then it outputs the set with largest value among all evaluated sets. They showed that \textsc{PartialEnumeration} can achieve $1-1/e$ approximation ratio. 

Greedy+Max generalizes \textsc{Greedy+} by augmenting each set $\{G_i\}_{i=1}^L$ in the nested sequence produced by \textsc{Greedy} with another element.  For $0\leq i \leq L-1$, define $G'_i \gets  G_i \cup \argmax_{e\in \Omega: c(G_i)+c(e)\leq B} f(G_i\cup e)$. By construction, $G'_0 = \{a^*\}$, the best individual element.  For $i=L$, $G'_L \gets G_L$.   \textsc{Greedy+Max} then outputs the best set in the augmented sequence, $\argmax_{S\in\{G'_0,\dots,G'_L\}} f(S)$.   \cite{yaroslavtsev2020bring} proposed \textsc{Greedy+Max} and proved it achieves an approximation ratio of $\frac{1}{2}$.  

A bound on the number of value oracle calls will be important in adapting offline methods. Denote $\beta := B/c_{\min}$ and $\Tilde{K} := \min \{n, \beta\}$ as an upper bound of the number of items in any feasible set.   We note here that while \textsc{PartialEnumeration} uses $\mathcal{O}(\Tilde{K}n^4)$ function evaluations, both \textsc{Greedy+Max} and \textsc{Greedy+} use $\mathcal{O}(\Tilde{K}n)$ oracle calls, same as \textsc{Greedy}. We use $N=\Tilde{K}n$ in the analysis for \textsc{Greedy+Max} and \textsc{Greedy+}.

\section{Proof for Robustness of Offline Algorithms for Submodular Maximization }\label{sec:offline:robust}

In this section, we prove the $(\alpha,\delta)$ robustness of algorithms considered in \cref{sec:appl-CETC:submod} of the main paper.
\subsection{Notation}
We first review notations used in the analysis. Recall that we are only able to evaluate the surrogate function $\hat{f}$ such that $|\hat{f}(S)-f(S)|\leq \epsilon$ for any feasible set $S$ and some $\epsilon > 0$, we further denote $\hat{f}(e|S)=\hat{f}(S\cup e)-\hat{f}(S)$ and $\hat{\rho}(e|S)=\frac{\hat{f}(S\cup e)-\hat{f}(S)}{c(e)}$. Let $G_i$ denote the set selected by basic \textsc{Greedy} (based on surrogate function $\hat{f}$) as described in Section 3 up until $i$th item and $G_i=\{g_1,\cdots, g_i\}$ in the order of each item is selected. 
Without loss of generality, define $G_0=\emptyset$ and $f(G_0)=\hat{f}(G_0)=0$. Denote $c_{\min} = \min_{e\in\Omega}c(e)$ be the item with lowest individual cost. Let $\beta = B/c_{\min}$ and $\Tilde{K} = \min \{n, \beta\}$ being an upper bound of the number of items in any feasible set. Since all selected actions should be feasible, for ease of notation, we omit denoting that condition throughout the proof. For example, we write $\argmax_{e \in \Omega \setminus A}f(e|A)$ to simplify the notation of $\argmax_{e: e \in \Omega \setminus A \text{ and } A\cup e\in D}f(e|A)$. Let $S$ be the set returned by modified algorithms in corresponding context.

\subsection{Robustness of Offline Methods for Submodular Maximization under Cardinality Constraint}
\label{sec:cardinality}
\subsubsection{\textsc{Greedy}}
We consider the original greedy algorithm \textsc{Greedy} proposed in \citet{nemhauser1978analysis}, which gives a $(1-\frac{1}{e})$-approximation guarantee for submodular maximization under a $k$-cardinality constraint. 
To restate \cref{thm:cardinality:greedy:robust} in the main paper, \textsc{Greedy} is a $(1-\frac{1}{e},2k)$-robust approximation algorithm for submodular maximization under a $k$-cardinality constraint.
The result follows from Corollary 4.3 of \citet{nie2022explore}, part of the regret analysis for a CMAB adaptation of \textsc{Greedy}.
\subsubsection{\textsc{ThresholdGreedy}}
We then consider the threshold greedy algorithm \textsc{ThresholdGreedy} proposed in \citet{badanidiyuru2014fast}, which gives a $(1-\frac{1}{e}-\epsilon')$-approximation guarantee for submodular maximization under a $k$-cardinality constraint, where $\epsilon'$ is a user specified parameter to balance accuracy and run time.   Restating \cref{thm:threshold:greedy:robust} in the main paper,
    \textsc{ThresholdGreedy} is a $(1-\frac{1}{e}-\epsilon',2(2-\epsilon')k)$-robust approximation algorithm for submodular maximization under a $k$-cardinality constraint.
\begin{proof}
    From the assumption of the surrogate function $\hat{f}$ we know
    \begin{align}
        f(e|S)-2\epsilon \leq \hat{f}(e|S) \leq f(e|S)+2\epsilon \nonumber
    \end{align}
    for any $e\in \Omega\setminus S$ and $S\subseteq \Omega$. Now assume the the next chosen element is $a$ and the current partial solution is $S$. On one hand, we have 
    \begin{align}
        \hat{f}(a|S)\geq w \Longrightarrow f(a|S) \geq w-2\epsilon, \label{eq:thg:1}
    \end{align}
    on the other hand, for every $e\in \mathrm{OPT}\setminus S$,
    \begin{align}
        \hat{f}(e|S) \leq \frac{w}{1-\epsilon'} \Longrightarrow f(e|S) \leq \frac{w}{1-\epsilon'}+2\epsilon. \label{eq:thg:2}
    \end{align}
    Combining and manipulating \eqref{eq:thg:1} and \eqref{eq:thg:2} we have for any $e\in \mathrm{OPT}\setminus S$:
    \begin{align}
        f(a|S) +2\epsilon \geq (f(e|S)-2\epsilon)(1-\epsilon')
        \Longrightarrow f(a|S)\geq (1-\epsilon')f(e|S) -2 (2-\epsilon')\epsilon.   
    \end{align}
    Taking an average over all $e\in \mathrm{OPT}\setminus S$,
    \begin{align}
        f(a|S) &\geq \frac{1-\epsilon'}{|\mathrm{OPT}\setminus S|}\sum_{e\in \mathrm{OPT}\setminus S}f(e|S) -2 (2-\epsilon')\epsilon \nonumber\\
        &\geq \frac{1-\epsilon'}{k}\sum_{e\in \mathrm{OPT}\setminus S}f(e|S) -2 (2-\epsilon')\epsilon. \label{eq:thg:3}
    \end{align}
    Now consider after $i\in[k-1]$ steps, we get a partial solution $S_i=\{a_1, \cdots, a_i\}$. By \eqref{eq:thg:3}, we have
    \begin{align}
        f(a_{i+1}|S_i) &\geq \frac{1-\epsilon'}{k}\sum_{e\in \mathrm{OPT}\setminus S}f(e|S_i) -2 (2-\epsilon')\epsilon \nonumber\\
        &\geq \frac{1-\epsilon'}{k}f( \mathrm{OPT}|S_i) -2 (2-\epsilon')\epsilon \tag{submodularity}\\
        &\geq \frac{1-\epsilon'}{k}\left(f(\mathrm{OPT})-f(S_i)\right) -2 (2-\epsilon')\epsilon, \tag{monotonicity}
    \end{align}
    and hence for $i\in [k-1]$, 
    \begin{align}
        f(S_{i+1})-f(S_i)=f(a_{i+1}|S_i) \geq \frac{1-\epsilon'}{k}\left(f(\mathrm{OPT})-f(S_i)\right) -2 (2-\epsilon')\epsilon. \label{eq:thg:4}
    \end{align}
    Using \eqref{eq:thg:4} as induction hypothesis, we then prove by induction (omitted) that for $i\in [k-1]$, 
    \begin{align}
        f(S_{i+1})\geq \left[1-\left(1-\frac{1-\epsilon'}{k}\right)^{i+1}\right]f(\mathrm{OPT})-2(i+1)(2-\epsilon')\epsilon, \nonumber
    \end{align}
    and plugging in $i=k-1$ we get
    \begin{align}
        f(S_k) &\geq \left[1-\left(1-\frac{1-\epsilon'}{k}\right)^k\right]f(\mathrm{OPT})-2k(2-\epsilon')\epsilon \nonumber\\
        &\geq (1-e^{-(1-\epsilon')})f(\mathrm{OPT})-2k(2-\epsilon')\epsilon \nonumber\\
        &\geq (1-1/e-\epsilon')f(\mathrm{OPT})-2k(2-\epsilon')\epsilon. \nonumber
    \end{align}
    We finish the proof by observing that $S_k$ is the output. 
\end{proof}

\subsection{Proof for Robustness of \textsc{Greedy+}} \label{sec:greedy+robust}
In this section, we prove \cref{thm:greedy:robust} in \cref{sec:appl-CETC:submod} of the main paper. The following statements, Lemmas~\ref{lem:consecutive_reward:apdx},\ref{lem:bnd-exp-val-aug-grd} and \ref{lem:sk_lower:apdx}, and their proofs are adapted from the proof of $\frac{1}{2}(1-\frac{1}{e})$ approximation ratio in the offline setting \cite{khuller1999budgeted} using a value oracle. %
\cite{Krause2005Anote} adapted the proof of \cite{khuller1999budgeted} to an offline setting where the greedy process relies on an exact oracle to evaluate individual element values and to compare the best individual element to the set output by the greedy process, but use an inexact value oracle (within $\epsilon$ of the correct value) to evaluate marginal densities.  %

The main  differences arise from (i) the algorithms of \cite{khuller1999budgeted,Krause2005Anote} evaluate densities before checking for feasibility,\footnote{As noted in \cref{foot:supp:evalclean},  concentration of estimates (i.e. the surrogate $\hat{f}$) used by C-ETC in the bandit setting will only be for evaluated subsets, which by restriction will all be feasible.} leading to different definitions of the augmented greedy sequence, necessitating us to use more care to show analogous properties, (ii) exact value oracles for best individual elements and for selecting OPT are used in  \cite{khuller1999budgeted,Krause2005Anote}, simplifying work to conclude the final bound for the approximation ratio $\alpha=\frac{1}{2}(1-\frac{1}{e})$ and leading to a different $\delta$. 

\begin{commentediting}
\color{blue}
we need to address the following, but maybe after submit main paper
\begin{itemize}\itemsep0ex
    \item this contains descriptions of 'expected marginal gains' - no randomness
    \item there's no discussion about overlap with khuller's proof or the note;maybe annotate after main deadline
    \item we can probably start with a common preliminary section for notation and bounds used in both robustness proofs.
    \item before submit supplementary file, think if can simplify case analysis
    \item naming of alg -- C-ETC shows up in proof robustness of GREEDY+
\end{itemize}
\color{black}
\end{commentediting}

Recall that \cref{thm:greedy:robust} in \cref{sec:appl-CETC:submod} of the main paper states that \textsc{Greedy+} is a $(\frac{1}{2}(1-\frac{1}{e}),2+\Tilde{K}+\beta)$-robust approximation algorithm for submodular maximization problem under a knapsack constraint.

We define $G_i$ and $g_i$ the same as previous section. Recall that the greedy process (using a surrogate $\hat{f}$) produces a nested sequence of subsets $\emptyset = G_0 \subset G_1 \subset \cdots \subset G_L$, where $L$ denotes the cardinality of the set final output of the greedy process. For the proof, we describe the greedy process as running for $L+1$ iterations, though on the final iteration no elements are added.

For any action $G_{i-1} \cup a$ evaluated in iteration $i$ of the greedy process, its marginal gains are upper bounded by that of the best subset based on surrogate function $\hat{f}$,

\begin{align}
    \frac{f(G_{i-1}\cup a)-f(G_{i-1})-2\epsilon}{c(a)} %
    &\leq  \frac {\hat{f}(G_{i-1}\cup a)-\hat{f}(G_{i-1})}{c(a)} \nonumber \\
    &\leq  \frac {\hat{f}(G_{i-1}\cup g_i)-\hat{f}(G_{i-1})}{c(g_i)} \tag{$g_i$ selected by greedy rule based on $\hat{f}$}\\
    &\leq  \frac{f(G_{i-1}\cup g_i)-f(G_{i-1})+2\epsilon}{c(g_i)} \nonumber\\
    &=  \frac{f(G_i)-f(G_{i-1})+2\epsilon}{c(g_i)}, \label{eq:clean_bound}
\end{align} %
where \eqref{eq:clean_bound} just uses the definition of $G_i \gets G_{i-1}\cup g_i$.  We will use \eqref{eq:clean_bound} to lower bound the true marginal gains (i.e. in terms of $f$) achieved for each iteration of the greedy process. 

Let $\ell \in\{1,\dots,L+1\}$ denote the first iteration for which there was an element $a'\in \Omega\backslash G_{\ell-1}$ whose cost exceeds the remaining budget ($c(a')+c(G_{\ell-1})>B)$ (thus subset $G_{\ell-1} \cup a'$ was not sampled), yet whose marginal density was higher than the marginal density of the chosen element $g_\ell$ up to $\pm 2 \epsilon$ normalized by the cost, specifically, for $\ell\leq L$,
\begin{align}
     \frac{f( G_{\ell-1} \cup a') - f( G_{\ell-1} ) - 2\epsilon}{c(a')} > \frac{f( G_{\ell-1} \cup a_\ell) - f( G_{\ell-1} ) + 2\epsilon}{c(a_r)}. \label{eq:prf:aug-greed:ineq}
\end{align}  If there is no such iteration $\ell<L+1$, then for $\ell= L+1$, we take the element $a'$ maximizing the term on the left hand side of  \eqref{eq:prf:aug-greed:ineq},
\begin{align}
    a' = \argmax_{a\in \Omega\backslash G_{\ell-1}} \  \frac{f( G_{\ell-1} \cup a) - f( G_{\ell-1} ) - 2\epsilon}{c(a)}. \label{eq:prf:aug-greed:ineq:2}
\end{align}
Likewise, if there is more than one element satisfying \eqref{eq:prf:aug-greed:ineq} for some (earliest) iteration $r$, then we also take the maximizer \eqref{eq:prf:aug-greed:ineq:2}.

We define an ``augmented'' greedy sequence of length $\ell$ which matches the greedy sequence up to the set of cardinality $\ell$, where the element $a'$ is selected despite violating the budget, 
\begin{align}
    \{\widetilde{G}_0=G_0=\emptyset, \widetilde{G}_1 = G_1, \dots,  \widetilde{G}_{\ell-1} = G_{\ell-1}, \widetilde{G}_\ell = G_{\ell-1}\cup\{a'\}  \} \label{eq:seq:aug-greedy}
\end{align} and correspondingly enumerate the elements of $\widetilde{G}_\ell$ in the order they were selected,
\begin{align}
    \{\widetilde{g}_1 = g_1, \dots, \widetilde{g}_{\ell-1} = g_{\ell-1}, \widetilde{g}_\ell = g'\}. \label{eq:seq:aug-greedy:arms}
\end{align}
We first prove the following lemma, bounding the marginal gains of the augmented greedy sequence $\{\widetilde{G}_0, \dots,  \widetilde{G}_\ell\}$.

\begin{lemma} \label{lem:consecutive_reward:apdx} 
For all   $i\in \{1,2,\cdots, \ell\}$, the following inequality holds:
\begin{align}
    f(\widetilde{G}_i)-f(\widetilde{G}_{i-1}) &\geq  \frac{c(\widetilde{g}_i)}{B}\left[f(\mathrm{OPT})-f(\widetilde{G}_{i-1})\right]-2 \left(1+\frac{\Tilde{K} c(\widetilde{g}_i)}{B}\right)\epsilon. \nonumber%
\end{align}
\end{lemma}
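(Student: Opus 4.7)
My plan is to adapt the classical density-based greedy argument of Khuller et al.\ to account for two complications: the greedy rule queries a surrogate $\hat{f}$ rather than $f$ (which contributes $\pm 2\epsilon$ slack per marginal gain, as already captured by \eqref{eq:clean_bound}) and the ``augmented'' greedy step at iteration $\ell$ may pick an element $\widetilde{g}_\ell = a'$ whose cost overflows the remaining budget. The core claim reduces to showing that every optimal element $o \in \mathrm{OPT}\setminus\widetilde{G}_{i-1}$ has its marginal density (approximately) dominated by that of $\widetilde{g}_i$, at which point submodularity and the budget constraint close the argument.

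First, I would fix $i \in \{1,\dots,\ell\}$ and establish an element-wise bound
\[
f(o\mid \widetilde{G}_{i-1}) \;\leq\; \frac{c(o)}{c(\widetilde{g}_i)}\bigl[f(\widetilde{G}_i)-f(\widetilde{G}_{i-1})+2\epsilon\bigr] + 2\epsilon
\]
for every $o\in\mathrm{OPT}\setminus\widetilde{G}_{i-1}$. I would split into three cases. If $o$ is feasible at iteration $i$ (so $G_{i-1}\cup o$ is evaluated), then \eqref{eq:clean_bound} applied with $a=o$ yields the bound directly. If $o$ is infeasible at iteration $i$ with $i < \ell$, then by the minimality of $\ell$ as the first iteration at which an infeasible element's density strictly exceeds the chosen density by the $\pm 2\epsilon$ slack in \eqref{eq:prf:aug-greed:ineq}, the same inequality must hold for $o$. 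Finally, if $i=\ell$, the construction of $\widetilde{g}_\ell = a'$ as the argmax in \eqref{eq:prf:aug-greed:ineq:2} across all elements (feasible or not) gives the bound for every remaining $o$.

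Second, I would combine the element-wise bound with the submodularity inequality Fact~\ref{fact:submod-basics:set-diff} and monotonicity:
\begin{align*}
f(\mathrm{OPT})-f(\widetilde{G}_{i-1})
&\;\leq\; \sum_{o\in\mathrm{OPT}\setminus\widetilde{G}_{i-1}} f(o\mid\widetilde{G}_{i-1})\\
&\;\leq\; \frac{f(\widetilde{G}_i)-f(\widetilde{G}_{i-1})+2\epsilon}{c(\widetilde{g}_i)} \sum_{o\in\mathrm{OPT}\setminus\widetilde{G}_{i-1}} c(o) \;+\; 2\epsilon\,|\mathrm{OPT}\setminus\widetilde{G}_{i-1}|\\
&\;\leq\; \frac{B}{c(\widetilde{g}_i)}\bigl[f(\widetilde{G}_i)-f(\widetilde{G}_{i-1})+2\epsilon\bigr] + 2\epsilon\,\Tilde{K},
\end{align*}
using $c(\mathrm{OPT}) \leq B$ and $|\mathrm{OPT}\setminus\widetilde{G}_{i-1}| \leq \Tilde{K}$. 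Rearranging the resulting inequality for $f(\widetilde{G}_i) - f(\widetilde{G}_{i-1})$ gives exactly the stated bound with an additive error of $2\bigl(1+\tfrac{\Tilde{K} c(\widetilde{g}_i)}{B}\bigr)\epsilon$.

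I expect the main obstacle to be the case analysis in the first step, particularly the argument for infeasible optimal elements when $i < \ell$. In the noiseless setting one never needs to care about infeasible elements because the greedy rule implicitly dominates their density; here, one must invoke the specific construction of $\ell$ and $a'$ to make sure the inequality survives the $\pm 2\epsilon$ slack on both sides. Also, threading the $2\epsilon$ perturbations so the final constants match the stated $2(1+\Tilde{K} c(\widetilde{g}_i)/B)$ (as opposed to a looser bound) will require bookkeeping, as both the numerator $2\epsilon$ (from \eqref{eq:clean_bound}) and the sum of $2\epsilon$ per element (from the extra density slack) must be kept separate before being combined via the $c(\mathrm{OPT})\leq B$ and cardinality $\leq \Tilde{K}$ estimates.
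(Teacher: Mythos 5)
Your proposal follows essentially the same route as the paper's proof: bound $f(\mathrm{OPT})-f(\widetilde{G}_{i-1})$ by the sum of marginal gains over $\mathrm{OPT}\setminus\widetilde{G}_{i-1}$ via Fact~\ref{fact:submod-basics:set-diff}, dominate each element's surrogate-adjusted marginal density by that of $\widetilde{g}_i$ through the same case analysis on feasibility and on $i<\ell$ versus $i=\ell$ (the paper splits this into four cases rather than your three, but the content is identical), then sum using $c(\mathrm{OPT})\le B$ and $|\mathrm{OPT}\setminus\widetilde{G}_{i-1}|\le\Tilde{K}$ and rearrange, yielding the same constant $2\bigl(1+\Tilde{K}c(\widetilde{g}_i)/B\bigr)$. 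The one point you leave slightly compressed --- that for a feasible element at iteration $i=\ell$ one must chain through the greedily chosen element via \eqref{eq:clean_bound} and then through \eqref{eq:prf:aug-greed:ineq} to reach $a'$ --- is exactly what the paper's Case 2 does, and you correctly flag it as the delicate step.
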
  

\begin{proof}  Set any $i\in \{1,2,\cdots, \ell\}$. Let $\{v_1, v_2,. \cdots, v_k\} = \mathrm{OPT}\setminus \widetilde{G}_{i-1}$.  Note that by  construction \eqref{eq:seq:aug-greedy}, we have $\widetilde{G}_{i-1} = G_{i-1}$.

The difference $f(\mathrm{OPT})-f(\widetilde{G}_{i-1})$ can be bounded by the marginal gains of elements in the set difference,
\begin{align}
    f(\mathrm{OPT})-f(\widetilde{G}_{i-1}) %
    &\leq \sum_{j=1}^k \left[ f(\widetilde{G}_{i-1}\cup  v_j)-f(\widetilde{G}_{i-1}) \right] \tag{\cref{fact:submod-basics:set-diff}}\\ %
    &= \sum_{j=1}^k \left[ f(\widetilde{G}_{i-1}\cup  v_j)-f(\widetilde{G}_{i-1}) -2\epsilon +  2\epsilon \right] \nonumber \\ %
    &=  \sum_{j=1}^k c(v_j)\frac{f(\widetilde{G}_{i-1}\cup  v_j)-f(\widetilde{G}_{i-1})-2\epsilon}{c(v_j)} +  2k\epsilon\nonumber\\
    &\leq  \sum_{j=1}^k c(v_j) \frac{f(\widetilde{G}_{i-1}\cup \widetilde{g}_i)-f(\widetilde{G}_{i-1}) + 2\epsilon}{c(\widetilde{g}_i)} + 2k\epsilon  \label{eq:prf:consequtive_reward_main:10} \\
    &=  \sum_{j=1}^k c(v_j) \frac{f(\widetilde{G}_i)-f(\widetilde{G}_{i-1}) + 2\epsilon}{c(\widetilde{g}_i)} + 2k\epsilon \label{eq:prf:consequtive_reward_main:11}
\end{align} where \eqref{eq:prf:consequtive_reward_main:10} holds by following.  We consider four cases, depending on whether or not $\hat{f}(G_{i-1} \cup v_j)$ was evaluated during the iteration $i$.  

\begin{itemize}
    \item  \textbf{Case 1 ($\hat{f}(G_{i-1} \cup v_j)$ was evaluated and $i<\ell$):} At iteration $i$ (necessarily $i \leq L$ since no subsets were evaluated in iteration $L+1$) with current greedy set $G_{i-1}$, adding the element $v_j$ to the current greedy set was feasible, $c(v_j)\leq B-c(G_{i-1})$.  Then \textsc{Greedy+} would have evaluated $\hat{f}(G_{i-1} \cup v_j)$.  Since $v_j$ was not selected, the chosen element $g_i = G_i \backslash G_{i-1}$ must have had a higher surrogate density $\hat{f}(G_{i-1} \cup v_j)> \hat{f}(G_{i-1} \cup  g_i)$, so for $i<\ell$, for which $\widetilde{g}_i = g_i$ by construction \eqref{eq:seq:aug-greedy:arms}, \eqref{eq:clean_bound} implies \eqref{eq:prf:consequtive_reward_main:10}. 
    
    \item  \textbf{Case 2 ($\hat{f}(G_{i-1} \cup v_j)$ was evaluated and $i=\ell$):}  By the reasoning in the previous case, for the item $a_\ell$ chosen at iteration $\ell$ by the greedy process (due to feasibility and having the highest surrogate density), we still have the bound \eqref{eq:clean_bound} on true values, which coupled with our specific construction of $\widetilde{g}_\ell$ \eqref{eq:prf:aug-greed:ineq} means  
    
    \begin{align*}
       \frac{f(\widetilde{G}_{\ell-1}\cup  v_j)-f(\widetilde{G}_{\ell-1})-2\epsilon}{c(v_j)}
    &\leq   \frac{f(\widetilde{G}_{\ell-1}\cup a_r)-f(\widetilde{G}_{\ell-1}) + 2\epsilon}{c(a_r)}   \tag{by \eqref{eq:clean_bound}}\\
    &<   \frac{f(\widetilde{G}_{\ell-1}\cup \widetilde{g}_r)-f(\widetilde{G}_{\ell-1}) - 2\epsilon}{c(\widetilde{g}_r)}\tag{by construction \eqref{eq:prf:aug-greed:ineq}}\\
    &< \frac{f(\widetilde{G}_{\ell-1}\cup \widetilde{g}_r)-f(\widetilde{G}_{\ell-1}) + 2\epsilon}{c(\widetilde{g}_r)}.
    \end{align*}

    \item \textbf{Case 3 ($\hat{f}(G_{i-1} \cup v_j)$ was not evaluated and $i<\ell$):}  At iteration $i<\ell\leq L+1$ with the current greedy set $G_{i-1}$, adding the element $v_j$ to the current greedy set was not feasible, $c(v_j)> B-c(G_{i-1})$. By construction of the augmented greedy sequence, only at iteration $\ell$ was there an infeasible element whose surrogate marginal density satisfied the inequality \eqref{eq:prf:aug-greed:ineq}.  Thus, for iterations $i<\ell$, $G_{i-1}=\widetilde{G}_{i-1}$ and $G_i=\widetilde{G}_i$, so  \eqref{eq:prf:consequtive_reward_main:10} holds. 
    
    \item \textbf{Case 4 ($\hat{f}(G_{i-1} \cup v_j)$ was not evaluated and $i=\ell$):}  For iteration $i=\ell$, with current greedy set $G_{i-1}$, the augmented greedy sequence construction implies \eqref{eq:prf:consequtive_reward_main:10}.  Namely, with $i=\ell$,
    
        \begin{align*}
       \frac{f(\widetilde{G}_{\ell-1}\cup  v_j)-f(\widetilde{G}_{\ell-1})-2\epsilon}{c(v_j)}
    &<   \frac{f(\widetilde{G}_{\ell-1}\cup \widetilde{g}_r)-f(\widetilde{G}_{\ell-1}) - 2\epsilon}{c(\widetilde{g}_r)} \tag{by \eqref{eq:prf:aug-greed:ineq:2}}
    \\
    &< \frac{f(\widetilde{G}_{\ell-1}\cup \widetilde{g}_r)-f(\widetilde{G}_{\ell-1}) + 2\epsilon}{c(\widetilde{g}_r)}.
    \end{align*}
    menaing \eqref{eq:prf:consequtive_reward_main:10} holds.
    
\end{itemize}
We now continue lower bounding $f(\mathrm{OPT})-f(\widetilde{G}_{i-1})$,

\begin{align}
    f(\mathrm{OPT})-f(\widetilde{G}_{i-1}) 
    &\leq  \left[\sum_{j=1}^k c(v_j) \frac{f(\widetilde{G}_i)-f(\widetilde{G}_{i-1}) + 2\epsilon}{c(\widetilde{g}_i)}\right] + 2k\epsilon \tag{copying \eqref{eq:prf:consequtive_reward_main:11}}\\
    &= \left[\sum_{j=1}^k c(v_j)\right] \frac{f(\widetilde{G}_i)-f(\widetilde{G}_{i-1}) + 2\epsilon}{c(\widetilde{g}_i)} + 2k\epsilon \nonumber\\
    &\leq B \frac{f(\widetilde{G}_i)-f(\widetilde{G}_{i-1}) + 2\epsilon}{c(\widetilde{g}_i)} + 2k\epsilon \tag{$\mathrm{OPT}$ is feasible, so $\sum_{j=1}^k c(v_j) \leq B$}\\
    &\leq \frac{B}{c(\widetilde{g}_i)} \left[f(\widetilde{G}_i)-f(\widetilde{G}_{i-1})\right] + 2\left[\frac{B}{c(\widetilde{g}_i)} + \Tilde{K}\right]\epsilon  \tag{rearranging; $k\leq \Tilde{K}$}. \nonumber
\end{align}
Multiplying both sides by $\frac{c(\widetilde{g}_i)}{B}$ and rearranging finishes the proof.
\end{proof}

We unravel the recurrence in \cref{lem:consecutive_reward:apdx} to lower bound $f(\widetilde{G}_i)$.

\begin{lemma} \label{lem:bnd-exp-val-aug-grd}
For all $i\in \{1,2,\cdots, \ell\}$,
\begin{align}
    f(\widetilde{G}_i)  &\geq \left[1-\prod_{j=1}^i (1-\frac{c(\widetilde{g}_j)}{B})\right]f(\mathrm{OPT}) -2 (\beta + \Tilde{K})\epsilon. \nonumber
\end{align}
\end{lemma}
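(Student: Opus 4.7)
\textbf{Proof plan for Lemma~\ref{lem:bnd-exp-val-aug-grd}.} My plan is to prove the lemma by induction on $i$, using the per-step recurrence established in Lemma~\ref{lem:consecutive_reward:apdx} as the driving inequality. First I would rewrite that recurrence in the convex-combination form
\[
  f(\widetilde{G}_i) \;\geq\; \tfrac{c(\widetilde{g}_i)}{B}\,f(\mathrm{OPT}) + \bigl(1-\tfrac{c(\widetilde{g}_i)}{B}\bigr)\,f(\widetilde{G}_{i-1}) - 2\Bigl(1+\tfrac{\Tilde{K}\,c(\widetilde{g}_i)}{B}\Bigr)\epsilon,
\]
since in this form it is transparent how the bound propagates through the iterations.

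For the base case $i=1$, plug in $f(\widetilde{G}_0)=0$; the coefficient of $f(\mathrm{OPT})$ becomes $c(\widetilde{g}_1)/B = 1 - \prod_{j=1}^{1}(1-c(\widetilde{g}_j)/B)$, and the additive error is $2(1 + \Tilde{K} c(\widetilde{g}_1)/B)\epsilon \leq 2(\beta + \Tilde{K})\epsilon$, using $c(\widetilde{g}_1)\leq B$ and $\beta\geq 1$. For the inductive step, I substitute the hypothesis for $f(\widetilde{G}_{i-1})$ into the recurrence. The coefficient of $f(\mathrm{OPT})$ simplifies (essentially a one-line algebraic identity) as
\[
\tfrac{c(\widetilde{g}_i)}{B} + \bigl(1-\tfrac{c(\widetilde{g}_i)}{B}\bigr)\Bigl[1-\prod_{j=1}^{i-1}\!\bigl(1-\tfrac{c(\widetilde{g}_j)}{B}\bigr)\Bigr] \;=\; 1 - \prod_{j=1}^{i}\bigl(1-\tfrac{c(\widetilde{g}_j)}{B}\bigr),
\]
which yields the claimed product structure.

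The main obstacle, and the only place where care is needed, is showing that the additive error does not accumulate beyond $2(\beta+\Tilde{K})\epsilon$. Combining the two error contributions gives a coefficient of $-2\epsilon$ equal to
\[
  (\beta + \Tilde{K})\bigl(1-\tfrac{c(\widetilde{g}_i)}{B}\bigr) + 1 + \tfrac{\Tilde{K}\,c(\widetilde{g}_i)}{B} \;=\; (\beta + \Tilde{K}) + 1 - \beta\,\tfrac{c(\widetilde{g}_i)}{B}.
\]
The key observation that makes the argument close is that every selected element satisfies $c(\widetilde{g}_i)\geq c_{\min}=B/\beta$, so $\beta\, c(\widetilde{g}_i)/B \geq 1$ and the stray $+1$ is absorbed. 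This reduces the expression to at most $\beta+\Tilde{K}$, preserving the inductive bound. The rest is bookkeeping, and the lemma follows.
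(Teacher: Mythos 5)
Your proposal is correct and follows essentially the same route as the paper: induction on $i$ driven by the recurrence of Lemma~\ref{lem:consecutive_reward:apdx} in convex-combination form, with the product identity for the $f(\mathrm{OPT})$ coefficient and the key observation that $\beta\,c(\widetilde{g}_i)/B = c(\widetilde{g}_i)/c_{\min}\geq 1$ absorbs the stray $+1$ in the error term. The only (immaterial) difference is in the base case, where you bound $1+\Tilde{K}c(\widetilde{g}_1)/B$ via $c(\widetilde{g}_1)\leq B$ and $\beta\geq 1$ rather than multiplying through by $B/c(\widetilde{g}_1)\geq 1$ as the paper does.
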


\begin{remark}
The steps to unravel the recurrence to obtain the first term (coefficient of $f(\mathrm{OPT})$) is the same as the proof for the analogous result in the offline setting \cite{khuller1999budgeted}.  The second term (with $\epsilon$) is due to working with marginal densities of a surrogate function $\hat{f}$.  The basic steps for working with that second term is the same as \cite{Krause2005Anote}, though we use a looser bound $\beta$; in \cite{Krause2005Anote} we think there may be a mistake in applying the induction step (with ``$c(X_i)$'' fixed for different $i$ in the proof), though they were loosely bounded with $\beta$ later on. 
\end{remark}

\begin{proof}  
The proof will follow by induction.  We first show the base case $i=1$ using  \cref{lem:consecutive_reward:apdx}.

\begin{align}
    f(\widetilde{G}_1) &= f(\widetilde{G}_1) - f(\widetilde{G}_0) \tag{$f$ is normalized; $\widetilde{G}_0 = \emptyset$}\\
    &\geq  \frac{c(\widetilde{g}_1)}{B}\left[f(\mathrm{OPT})-f(\widetilde{G}_0)\right]-2 \left(1+\frac{\Tilde{K} c(\widetilde{g}_1)}{B}\right)\epsilon \tag{using \cref{lem:consecutive_reward:apdx}}\\
    &=  \left[ 1- \left( 1-\frac{c(\widetilde{g}_1)}{B} \right)  \right]f(\mathrm{OPT})-2 \left(1+\frac{\Tilde{K} c(\widetilde{g}_1)}{B}\right) \epsilon\label{eq:prf:recurs:3}%
\end{align} where \eqref{eq:prf:recurs:3} follows from   rearranging.  For the second term in \eqref{eq:prf:recurs:3}, using that

\begin{align}
    1+\frac{\Tilde{K} c(\widetilde{g}_1)}{B} %
    &\leq \frac{B}{c(\widetilde{g}_1)}\left( 1+\frac{\Tilde{K} c(\widetilde{g}_1)}{B} \right) \tag{since $\frac{B}{c(\widetilde{g}_1)}\geq 1$}\\
    &= \frac{B}{c(\widetilde{g}_1)}+\Tilde{K} \nonumber\\
    &\leq \frac{B}{\cmin}+\Tilde{K}  \nonumber\\
    &= \beta+\Tilde{K}, \label{eq:prf:recurs:3b}
\end{align}
then \begin{align}
    f(\widetilde{G}_1) 
    &\geq  \left[ 1- \left( 1-\frac{c(\widetilde{g}_1)}{B} \right)  \right]f(\mathrm{OPT})-2 \left(1+\frac{\Tilde{K} c(\widetilde{g}_1)}{B}\right)\epsilon \tag{copying  \eqref{eq:prf:recurs:3}} \\
    &\geq  \left[ 1- \left( 1-\frac{c(\widetilde{g}_1)}{B} \right)  \right]f(\mathrm{OPT})-2 (\beta+\Tilde{K}) \epsilon\tag{using \eqref{eq:prf:recurs:3b} }.
\end{align}
This completes the base case of $i=1$.

We next consider $i>1$.  Unraveling the recurrence shown in \cref{lem:consecutive_reward:apdx},
\begin{align}
    f(\widetilde{G}_i) %
    &= f(\widetilde{G}_i)-f(\widetilde{G}_{i-1}) +f(\widetilde{G}_{i-1}) \nonumber\\ %
    &\geq \left[ \frac{c(\widetilde{g}_i)}{B}\left(f(\mathrm{OPT})-f(\widetilde{G}_{i-1})\right)-2 \left(1+\frac{\Tilde{K} c(\widetilde{g}_i)}{B}\right)\epsilon\right] +f(\widetilde{G}_{i-1}) \tag{using \cref{lem:consecutive_reward:apdx}}\\
    &=  \left[\frac{c(\widetilde{g}_i)}{B}\right]  f(\mathrm{OPT}) -2 \left(1+\frac{\Tilde{K} c(\widetilde{g}_i)}{B}\right)\epsilon + \left[1-\frac{c(\widetilde{g}_i)}{B}\right]f(\widetilde{G}_{i-1}) \tag{rearranging}\\
    &=  \left[1-(1-\frac{c(\widetilde{g}_i)}{B})\right]  f(\mathrm{OPT}) -2 \left(1+\frac{\Tilde{K} c(\widetilde{g}_i)}{B}\right)\epsilon \nonumber\\
    &\quad + \left[1-\frac{c(\widetilde{g}_i)}{B}\right]f(\widetilde{G}_{i-1}) \tag{rearranging}\\
    &\geq  \left[1-(1-\frac{c(\widetilde{g}_i)}{B})\right]  f(\mathrm{OPT}) -2 \left(1+\frac{\Tilde{K} c(\widetilde{g}_i)}{B}\right)\epsilon \nonumber\\
    &\quad + \left(1-\frac{c(\widetilde{g}_i)}{B}\right)\left[\left(1-\prod_{j=1}^{i-1} (1-\frac{c(\widetilde{g}_j)}{B})\right)f(\mathrm{OPT}) -2 (\beta + \Tilde{K})\epsilon\right] \tag{induction step}\\
    &= \left[1-(1-\frac{c(\widetilde{g}_i)}{B}) + \left(1-\frac{c(\widetilde{g}_i)}{B}\right)\left(1-\prod_{j=1}^{i-1} (1-\frac{c(\widetilde{g}_j)}{B})\right) \right]  f(\mathrm{OPT}) \nonumber\\
    &\quad  -2 \left(1+\frac{\Tilde{K} c(\widetilde{g}_i)}{B}
    +\left(1-\frac{c(\widetilde{g}_i)}{B}\right)(\beta + \Tilde{K})
    \right)\epsilon \tag{rearranging}\\
    &= \left[1-\prod_{j=1}^{i} (1-\frac{c(\widetilde{g}_j)}{B}) \right]  f(\mathrm{OPT}) \nonumber\\
    &\quad  -2 \left(1
    +\beta - \beta \frac{c(\widetilde{g}_i)}{B} + \Tilde{K} 
    \right)\epsilon. \label{eq:prf:recurs:8}%
\end{align}
For the second term in \eqref{eq:prf:recurs:8}, using that 
\begin{align}
    \beta \frac{c(\widetilde{g}_i)}{B} &= \frac{B}{\cmin}\frac{c(\widetilde{g}_i)}{B}\tag{def. of $\beta$}\\
    &=\frac{c(\widetilde{g}_i)}{\cmin} \nonumber\\
    &\geq 1, \label{eq:prf:bnd-err-recur:5}
\end{align}
then 
\begin{align}
    -2 \left(1
    +\beta - \beta \frac{c(\widetilde{g}_i)}{B} + \Tilde{K} 
    \right)\epsilon %
    &= -2 \left(\beta +\Tilde{K} \right)\epsilon + 2 \left( \beta \frac{c(\widetilde{g}_i)}{B} -1    \right)\epsilon \tag{rearranging}\\
     &\geq -2 \left(\beta +\Tilde{K} \right)\epsilon. \tag{using \eqref{eq:prf:bnd-err-recur:5}} \nonumber
\end{align}
Applying this to \eqref{eq:prf:recurs:8} completes the proof.
\end{proof}

The inequality in \cref{lem:bnd-exp-val-aug-grd} for the augmented greedy set of cardinality $\ell$ can be further simplified.  We will use the following observations.

\begin{lemma}
\label{lem:sk_lower:apdx}
The following inequality holds:
\begin{align}
    f(\widetilde{G}_\ell) & \geq (1-\frac{1}{e})f(\mathrm{OPT})-2(\beta+\Tilde{K})\epsilon. \nonumber
\end{align}
\end{lemma}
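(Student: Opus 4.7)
The plan is to start from \cref{lem:bnd-exp-val-aug-grd} specialized to $i=\ell$, which gives
\begin{align*}
    f(\widetilde{G}_\ell) \geq \left[1-\prod_{j=1}^\ell \left(1-\frac{c(\widetilde{g}_j)}{B}\right)\right]f(\mathrm{OPT}) -2(\beta+\Tilde{K})\epsilon,
\end{align*}
so it suffices to show that the coefficient of $f(\mathrm{OPT})$ is at least $1-1/e$, i.e. $\prod_{j=1}^\ell(1-c(\widetilde{g}_j)/B)\leq 1/e$.

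First I would verify, by inspecting the definition of the augmented greedy sequence \eqref{eq:seq:aug-greedy}, that the total cost strictly exceeds the budget: $c(\widetilde{G}_\ell)=\sum_{j=1}^\ell c(\widetilde{g}_j)>B$. When $\ell\leq L$, this is immediate since $\widetilde{g}_\ell=a'$ was chosen to satisfy $c(a')+c(G_{\ell-1})>B$. When $\ell=L+1$, the greedy process terminated because no remaining feasible element could be added to $G_L$, so in particular $c(G_L)+c(\widetilde{g}_\ell)>B$. In either case, letting $A:=\sum_{j=1}^\ell c(\widetilde{g}_j)$, we have $A>B$ and each $c(\widetilde{g}_j)\in(0,B]$, so every factor $1-c(\widetilde{g}_j)/B$ lies in $[0,1)$.

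Next, since $A>B$ and each $c(\widetilde{g}_j)\geq 0$,
\begin{align*}
    \prod_{j=1}^\ell\left(1-\frac{c(\widetilde{g}_j)}{B}\right) \leq \prod_{j=1}^\ell\left(1-\frac{c(\widetilde{g}_j)}{A}\right).
\end{align*}
I would then apply \cref{fact:khullerbnd} (which says that $1-\prod_j(1-x_j/A)$ is minimized over $\sum_j x_j=A$ when all $x_j$ are equal, equivalently that $\prod_j(1-x_j/A)$ is maximized at the equal-split point $x_j=A/\ell$) to bound the right-hand side by $(1-1/\ell)^\ell$. Finally, \cref{fact:submod-basics:1minus1ebnd} gives $1-(1-1/\ell)^\ell\geq 1-1/e$, hence $(1-1/\ell)^\ell\leq 1/e$. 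Chaining these estimates yields $1-\prod_{j=1}^\ell(1-c(\widetilde{g}_j)/B)\geq 1-1/e$, and substituting back into the bound from \cref{lem:bnd-exp-val-aug-grd} completes the proof.

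The only subtle step is step (2): confirming $c(\widetilde{G}_\ell)>B$ from the somewhat intricate case analysis used to define $\widetilde{g}_\ell$ (infeasible element satisfying the density-dominance inequality \eqref{eq:prf:aug-greed:ineq} when $\ell\leq L$, versus the maximizer \eqref{eq:prf:aug-greed:ineq:2} when $\ell=L+1$). Once that is in hand, the rest is a direct application of the two facts already stated in the appendix, and no additional error terms are introduced, keeping the additive error coefficient at $2(\beta+\Tilde{K})$ inherited from \cref{lem:bnd-exp-val-aug-grd}.
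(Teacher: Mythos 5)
Your proposal is correct and follows essentially the same route as the paper's proof: apply \cref{lem:bnd-exp-val-aug-grd} at $i=\ell$, use $c(\widetilde{G}_\ell)>B$ to replace $B$ by $c(\widetilde{G}_\ell)$ in the product, then invoke \cref{fact:khullerbnd} and \cref{fact:submod-basics:1minus1ebnd}. Your explicit verification that $c(\widetilde{G}_\ell)>B$ in both the $\ell\leq L$ and $\ell=L+1$ cases is a correct filling-in of a step the paper only asserts ``by construction.''
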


\begin{proof} %
Applying $i=\ell$ to \cref{lem:bnd-exp-val-aug-grd} and bounding the coefficient for $f(\mathrm{OPT})$,
\begin{align}
    f(\widetilde{G}_\ell)  %
    &\geq \left[1-\prod_{j=1}^\ell (1-\frac{c(\widetilde{g}_j)}{B})\right]f(\mathrm{OPT}) -2 (\beta + \Tilde{K})\epsilon \nonumber\\
    &\geq \left[1-\prod_{j=1}^\ell (1-\frac{c(\widetilde{g}_j)}{c(\widetilde{G}_\ell)})\right]f(\mathrm{OPT}) -2 (\beta + \Tilde{K})\epsilon \tag{by construction, $c(\widetilde{G}_\ell)>B$} \\
    &\geq \left[1-\prod_{j=1}^\ell (1-\frac{c(\widetilde{G}_\ell)/\ell}{c(\widetilde{G}_\ell)})\right]f(\mathrm{OPT}) -2 (\beta + \Tilde{K})\epsilon \tag{using \cref{fact:khullerbnd}} \\
    &= \left[1- (1-\frac{1}{\ell})^\ell\right]f(\mathrm{OPT}) -2 (\beta + \Tilde{K})\epsilon \tag{simplifying} \\
    &\geq \left(1-\frac{1}{e}\right)f(\mathrm{OPT}) -2 (\beta + \Tilde{K})\epsilon \tag{using \cref{fact:submod-basics:1minus1ebnd}}.
\end{align}  

\end{proof}

Using the aforementioned lemmas, we are now ready to complete the proof for Theorem 3 (robustness of \textsc{Greedy+} algorithm). We will bound the value of set $G_L$ using the results on the augmented greedy set \eqref{eq:seq:aug-greedy} of cardinality  $\ell$, and in turn bound the value of the set $S$, the final output of \textsc{Greedy+}.  

\begin{commentediting}
\begin{lemma}
\label{lem:bnd:exploitedset:apdx}
The final output set $S$ of \textsc{Greedy+} satisfies
\begin{align}
    f(S) %
    &\geq \frac{1}{2}  (1-\frac{1}{e})f(\mathrm{OPT})-(2+\beta+\Tilde{K})\epsilon. \nonumber
\end{align}
\end{lemma}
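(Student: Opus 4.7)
The plan is to leverage \cref{lem:sk_lower:apdx}, which already establishes that the augmented greedy set satisfies $f(\widetilde{G}_\ell) \geq (1-1/e)\,f(\mathrm{OPT}) - 2(\beta+\Tilde{K})\epsilon$. Since \textsc{Greedy+} returns $S = \argmax_{T \in \{G_L,\, a^{**}\}} \hat{f}(T)$, where $a^{**} \in \argmax_{e\in\Omega} \hat{f}(e)$ is the best singleton as judged by the surrogate, the strategy is to upper bound $f(\widetilde{G}_\ell)$ by (essentially) $f(G_L) + f(a^{**})$. The larger of these two is then at least $\tfrac{1}{2}(1-1/e)\,f(\mathrm{OPT})$ up to an additive perturbation, and the remaining task is to argue that $f(S)$ inherits this bound despite $S$ being selected under $\hat{f}$ rather than $f$.

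First I would decompose $\widetilde{G}_\ell = G_{\ell-1}\cup\{a'\}$ with $\ell-1\leq L$, so $G_{\ell-1} \subseteq G_L$. Monotonicity and submodularity of $f$ (with $f(\emptyset)=0$) give
\begin{align*}
    f(\widetilde{G}_\ell) \;\leq\; f(G_L \cup a') \;\leq\; f(G_L) + f(a' \mid G_L) \;\leq\; f(G_L) + f(a').
\end{align*}
To pass from $f(a')$ to $f(a^{**})$, I would use that $a^{**}$ is the arg-max under $\hat{f}$: $f(a^{**}) \geq \hat{f}(a^{**}) - \epsilon \geq \hat{f}(a') - \epsilon \geq f(a') - 2\epsilon$. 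Combining with \cref{lem:sk_lower:apdx} yields
\begin{align*}
    f(G_L) + f(a^{**}) \;\geq\; f(\widetilde{G}_\ell) - 2\epsilon \;\geq\; (1-\tfrac{1}{e})\,f(\mathrm{OPT}) - 2(\beta+\Tilde{K}+1)\epsilon.
\end{align*}

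The last step is a short case split on the identity of $S$. If $S = G_L$, then by the algorithm's selection rule $\hat{f}(G_L) \geq \hat{f}(a^{**})$, so $f(G_L) \geq \hat{f}(G_L) - \epsilon \geq \hat{f}(a^{**}) - \epsilon \geq f(a^{**}) - 2\epsilon$; plugging into the display above gives $2f(G_L) \geq (1-1/e)\,f(\mathrm{OPT}) - 2(\beta+\Tilde{K}+2)\epsilon$, whence $f(S) \geq \tfrac{1}{2}(1-1/e)\,f(\mathrm{OPT}) - (\beta+\Tilde{K}+2)\epsilon$. The symmetric case $S = a^{**}$ is identical, matching the claim $\delta = 2+\Tilde{K}+\beta$.

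The main obstacle is not the structural submodularity step (standard) but keeping the $\epsilon$-bookkeeping tight. A naive argument such as $2f(S)\geq 2\hat{f}(S)-2\epsilon\geq \hat{f}(G_L)+\hat{f}(a^{**})-2\epsilon\geq f(G_L)+f(a^{**})-4\epsilon$ loses an extra $\epsilon$ and yields the weaker $\delta = 3+\Tilde{K}+\beta$. The per-case argument above (``$S$ is the $\hat{f}$-winner, and the loser's $f$-value is at most $2\epsilon$ above $S$'s $f$-value'') is what recovers the stated constant.
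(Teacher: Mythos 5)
Your proof is correct and follows essentially the same route as the paper's: both invoke \cref{lem:sk_lower:apdx}, bound $f(\widetilde{G}_\ell)$ by the greedy output plus a best singleton via monotonicity and submodularity, and finish with a max-of-two-terms argument whose $\epsilon$-bookkeeping lands on $\delta = 2+\Tilde{K}+\beta$. The only cosmetic difference is that the paper routes through the true-best singleton $a^{\mathrm{OPT}}$ (losing no $\epsilon$ in the sum $f(G_L)+f(a^{\mathrm{OPT}})$ but paying $2\epsilon$ when relating $f(S)$ to the max), whereas you route through the surrogate-best singleton and pay the $2\epsilon$ up front and again in the per-case winner/loser comparison before halving; both ledgers balance to the same constant.
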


\end{commentediting}

Recall that \textsc{Greedy+} chooses the set $S$ to be either the best individual element (based on $\hat{f}$) $a^* \gets \argmax_{e\in \Omega} \hat{f}(e)$ or the output of the greedy process $G_L$.  Let $a^\mathrm{OPT} = \argmax_{e\in \Omega} f(e)$ denote the element with the highest value under $f$.  Then
\begin{align}
    f(a^*) &\geq \hat{f}(a^*) - \epsilon \nonumber\\
    &\geq \hat{f}(a^\mathrm{OPT}) - \epsilon \tag{by definition of $a^*$}\\
    &\geq f(a^\mathrm{OPT}) - 2\epsilon. \label{eq:prf:halfbnd:4}
\end{align}

By construction \eqref{eq:seq:aug-greedy}, $\widetilde{G}_\ell$ includes one more element $a'$ than $\widetilde{G}_{\ell-1}$  (and $a'$ maximizes \eqref{eq:prf:aug-greed:ineq:2}).  By submodularity, the marginal gain of $a'$ is bounded by $f(a')$ and in turn by the best individual element based on surrogate function $\hat{f}$,
\begin{align}
    f(\widetilde{G}_{\ell-1}) + f(a^\mathrm{OPT}) %
    &\geq f(\widetilde{G}_{\ell-1}) + f(a') \tag{by definition of $a^\mathrm{OPT}$}\\
    &\geq f(\widetilde{G}_{\ell-1}) + \left[f(\widetilde{G}_{\ell-1} \cup a') - f(\widetilde{G}_{\ell-1}) \right] \tag{by submodularity}\\
    &= f(\widetilde{G}_{\ell-1} \cup a')  \nonumber\\
    &= f(\widetilde{G}_\ell)  \tag{by construction \eqref{eq:seq:aug-greedy}} \\
    &\geq (1-\frac{1}{e})f(\mathrm{OPT})-2(\beta+\Tilde{K})\epsilon, \label{eq:prf:halfbnd:9}
\end{align} where \eqref{eq:prf:halfbnd:9} follows from \cref{lem:sk_lower:apdx}.

Also by construction \eqref{eq:seq:aug-greedy}, the greedy and augmented greedy processes match up to and including the set of cardinality $\ell-1$, so 
\begin{align}
    f(G_L)
    &\geq f(G_{\ell-1}) \tag{monotonicity}\\
    &= f(\widetilde{G}_{\ell-1}). \tag{By construction \eqref{eq:seq:aug-greedy}}
\end{align}

Thus, 
\begin{align}
    f(G_L) + f(a^\mathrm{OPT}) %
    &\geq f(\widetilde{G}_{\ell-1}) + f(a^\mathrm{OPT}) \nonumber\\%
    &\geq (1-\frac{1}{e})f(\mathrm{OPT})-2(\beta+\Tilde{K})\epsilon. \tag{using \eqref{eq:prf:halfbnd:9}}
\end{align} 

At least one of %
$f(G_L)$
and $f(a^\mathrm{OPT}) $ is at least half of the value of the right hand side, 
\begin{align}
    \max\{ f(G_L), f(a^\mathrm{OPT}) \} &\geq \frac{1}{2}(1-\frac{1}{e})f(\mathrm{OPT})-(\beta+\Tilde{K})\epsilon \label{eq:bnd:maxGL-aOPT}
\end{align}
Thus, for the chosen set $S$
\begin{align}
    f(S) &\geq \hat{f}(S) - \epsilon \nonumber\\
    &= \max\{ \hat{f}(G_L), \hat{f}(a^*) \} - \epsilon \nonumber\\
    &\geq \max\{ \hat{f}(G_L), \hat{f}(a^\mathrm{OPT}) \} - \epsilon \tag{$a^*$ is the element with largest $\hat{f}$ value}\nonumber\\
    &\geq \max\{ f(G_L)-\epsilon, f(a^\mathrm{OPT})-\epsilon \} - \epsilon \tag{element-wise dominance} \nonumber\\
    &= \max\{ f(G_L), f(a^\mathrm{OPT}) \} - 2\epsilon  \nonumber\\
    &\geq \frac{1}{2}(1-\frac{1}{e})f(\mathrm{OPT})-(\beta+\Tilde{K})\epsilon- 2\epsilon \tag{from \eqref{eq:bnd:maxGL-aOPT}}\nonumber\\
    &=  \frac{1}{2}(1-\frac{1}{e})f(\mathrm{OPT})-(2+\beta+\Tilde{K})\epsilon. \nonumber
\end{align}

which completes the proof.

\subsection{Proof for Robustness of \textsc{PartialEnumeration}} 
Now we analyze the \textsc{PartialEnumeration} algorithm for submodular maximization under a knapsack constraint proposed in \citet{Sviridenko2004ANO, khuller1999budgeted}.  Recall that \cref{thm:partial:enum:greedy:robust} in \cref{sec:appl-CETC:submod} of the main paper states 
    \textsc{PartialEnumeration} is a $(1-\frac{1}{e},4+2\Tilde{K}+2\beta)$-robust approximation algorithm for submodular maximization under a knapsack constraint.
\begin{proof}
Assume $|\mathrm{OPT}| > 3$, otherwise the algorithm finds a $(1, 2)$-robust approximation, so it is also a $(1-\frac{1}{e},2(\Tilde{K}+\beta))$-robust approximation for non-trivial cases where $\Tilde{K}\geq 1$ and $\beta\geq 1$. Enumerate the elements of the optimal solution as $\mathrm{OPT}=\{Y_1,\cdots, Y_m\}$, corresponding to the order they would be selected by the simple greedy algorithm (iteratively selecting the element with the largest marginal gain, not the largest marginal density) 
\begin{align}
    Y_{i+1}=\argmax_{Y\in \mathrm{OPT}}f(\{Y_1,\cdots, Y_i, Y\})-f(\{Y_1,\cdots, Y_i\}),
\end{align}  
and let $R=\{Y_1, Y_2, Y_3\}$. Consider the iteration where the algorithm considers $R$. Define the function 
\begin{align}
    f'(A)=f(A\cup R)-f(R).
\end{align}
$f'$ is a non-decreasing submodular set function with $f'(\emptyset)=0$, and the optimal solution (with budget $B-c(R)$) is $\mathrm{OPT}\setminus R$ since for any set $S$ with cost $c(S)\leq B-c(R)$,
\begin{align}
    f'(\mathrm{OPT}\setminus R) 
    &= f(\mathrm{OPT} \cup R )-f(R)\tag{def of $f'$}\\ %
    &= f(\mathrm{OPT})-f(R)\tag{$R \subseteq \mathrm{OPT}$ by construction}\\
    &\geq f(S\cup R)-f(R)\nonumber\\
    &= f'(S). \nonumber
\end{align}
Hence we can apply \textsc{Greedy+} algorithm to $f'$ (based on noisy evaluations). 
Let $g_\ell$ be the first element from $\mathrm{OPT} \setminus R$ which could not be added due to budget constraints,
and let $A=\{g_1, \cdots, g_{\ell-1}\}$ be first $\ell-1$ elements selected by \textsc{Greedy+} algorithm.
\begin{commentediting3}
\cjq{this is confusing.  Are you specifically considering those selected when run on the triplet $R$? or the outputted set of the 1-1/e algorithm?}.\ngy{the former. so after selecting $\ell-1$ elements there will be $\ell+2$ elements in total as a candidate solution} 
\end{commentediting3}
Let $G=A\cup R$. Using \cref{lem:sk_lower:apdx}, we get 
\begin{align}
    f'(A\cup g_\ell) & \geq (1-\frac{1}{e})f'(\mathrm{OPT}\setminus R)-2(\beta'+\Tilde{K}')\epsilon, \nonumber
\end{align}
where $\beta'=\frac{B-c(R)}{c_{\min}'}$, $\Tilde{K}'=\min\{n-3, \beta'\}$ and $c_{\min}'=\min_{e\in \Omega \setminus R}c(e)$. Simple calculation can show that $\beta'\leq \beta$ and $\Tilde{K}'\leq \Tilde{K}$. Thus,
\begin{align}
    f'(A\cup g_\ell) & \geq (1-\frac{1}{e})f'(\mathrm{OPT}\setminus R)-2(\beta+\Tilde{K})\epsilon, \nonumber
\end{align}
From the definition of $f'$, we have $f(G)=f'(A)+f(R)$. Let $\Delta = f'(A\cup g_\ell)-f'(A)$. We have
\begin{align}
    f'(A)+\Delta \geq (1-\frac{1}{e})f'(\mathrm{OPT}\setminus R)-2(\beta+\Tilde{K})\epsilon.
\end{align}
Further observe that elements in $\mathrm{OPT}$ are ordered that for all $1\leq i\leq 3$, 
\begin{commentediting3}
\ngy{stated here}\cjq{what is $i$ here?  is $i$ the index of $g_l$? or is it any index smaller than the index of $g_l$? or is it any of indices $1,2,3$ so you begin by considering B? o/w why does the inequality hold;  this is confusing. explicitly remind the reader that by construction, since $g_\ell \not \in B$, and $B=\{Y_1,Y_2,Y_3\}$, $g_\ell$ corresponds to $Y_j$ for some $j>3$.  Next consider any $i\leq 3<j$}    
\end{commentediting3}
\begin{align}
    &\hspace{-1cm}f(\{Y_1,\cdots, Y_i\})-f(\{Y_1,\cdots, Y_{i-1}\}) \nonumber\\
    \geq& f(\{Y_1,\cdots, Y_{i-1},g_\ell\})-f(\{Y_1,\cdots, Y_{i-1}\}) \tag{ordering rule}\\
    \geq& f(R\cup A\cup g_\ell)-f(R\cup A) \tag{$\{Y_1,\cdots, Y_{i-1}\} \subseteq R$ when $1\leq i\leq 3$ and submodularity}\\
    =&f(R\cup A\cup g_\ell) -f(R) -(f(R\cup A) -f(R))\nonumber\\
    =&f'(A\cup g_\ell)-f'(A)\nonumber\\
    =&\Delta. \nonumber
\end{align}
By telescoping sum, $f(R)\geq 3\Delta$. Now we get 
\begin{align}
    f(G) &= f(R)+f'(A) \nonumber\\
    &\geq f(R)+(1-\frac{1}{e})f'(\mathrm{OPT}\setminus R)-2(\beta+\Tilde{K})\epsilon-\Delta \nonumber\\
    &\geq f(R)+(1-\frac{1}{e})f'(\mathrm{OPT}\setminus R)-2(\beta+\Tilde{K})\epsilon-f(R)/3\nonumber\\
    &\geq (1-\frac{1}{3})f(R)+(1-\frac{1}{e})f'(\mathrm{OPT}\setminus R)-2(\beta+\Tilde{K})\epsilon \nonumber\\
    &\geq (1-\frac{1}{e})\left[f'(\mathrm{OPT}\setminus R) + f(R)\right]-2(\beta+\Tilde{K})\epsilon \tag{$e\leq 3$}\\
    &=(1-\frac{1}{e})f(\mathrm{OPT})-2(\beta+\Tilde{K})\epsilon. \tag{definition of $f'$}
\end{align}
The output of the algorithm is not necessarily $G$ because the values of the evaluated triplets are based on surrogate function $\hat{f}$. Denote $\mathcal{O}$ as the output of the algorithm and denote $G'$ as the best evaluated set (with respect to $\hat{f}$) with size $\ell+2$ (same as $G$). We must have that $\hat{f}(G')\geq \hat{f}(G)$. Also denote the final set (until violating budget) continuing $G'$ as $G''$. We have,
\begin{align}
    f(\mathcal{O}) &\geq \hat{f}(\mathcal{O}) -\epsilon \nonumber\\
    &\geq \hat{f}(G'') - \epsilon \tag{selection rule of the algorithm}\\
    &\geq f(G'') - 2\epsilon \nonumber\\
    &\geq f(G') - 2\epsilon \tag{$G' \subseteq G''$ and monotonicity of $f$}\\
    &\geq \hat{f}(G') - 3\epsilon \nonumber\\
    &\geq \hat{f}(G) - 3\epsilon \nonumber\\
    &\geq f(G) - 4\epsilon \nonumber\\
    &\geq (1-\frac{1}{e})f(\mathrm{OPT})-(4+2\beta+2\Tilde{K})\epsilon, \nonumber
\end{align}
finishing the proof.
\end{proof}

\subsection{Robustness of Offline Methods for Unconstrained Non-monotone Submodular Maximization}
\label{sec:non-monotone}
We consider the original randomized algorithm \textsc{RandomizedUSM} proposed in \citet{Buchbinder2012ATL}, which gives a $\frac{1}{2}$-approximation guarantee for unconstrained non-monotone submodular maximization. To restate \cref{thm:RandomUSM:robust} in the main paper, \textsc{RandomizedUSM} is a $(\frac{1}{2},\frac{5}{2}n)$-robust approximation algorithm for unconstrained non-monotone submodular maximization problem. The result follows directly from Corollary 2 of \citet{Fourati2023Randomized}.

\section{Comments on Lower bounds of Submodular CMAB}\label{apdx_lb_sub}
For the setting we explore in this paper, with stochastic (or even adversarial) knapsack-constrained combinatorial MAB with submodular expected rewards and just bandit feedback, it remains an open question if $\tilde{\mathcal{O}}(T^{1/2})$ expected cumulative $\alpha$-regret is possible (ignoring $n$ and $\beta$).
Both \cite{streeter2008online} and \cite{niazadeh2021online} analyze lower bounds for the adversarial setting.  However, \cite{streeter2008online} obtain bounds for $1$-regret (it is NP-hard in offline setting to obtain an approximation ratio better than $1-1/e$).  \cite{niazadeh2021online} obtain $\tilde{\Omega}(T^{2/3})$ lower bounds for the harder setting where feedback is only available during ``exploration'' rounds chosen by the agent, who incurs an associated penalty.

\section{Implementation of Algorithm OG$^\text{o}$} \label{implimentation:ogo}

\begin{algorithm}[h]
    \caption{Online Greedy for Opaque Feedback Model (OG$^\text{o}$)}
    \label{alg:OGO}
\begin{algorithmic}
   \STATE {\bfseries Input:} set of base arms $\Omega$, horizon $T$, cost for each arm $c(a)$, budget $B$
   \STATE Initialize $n\gets|\Omega|$, $\cmin \gets \min_{a\in \Omega}\{c(a)\}$, $\beta \gets \frac{B}{\cmin}$,  $\gamma \gets n^{1/3}\beta\left(\frac{\log(n)}{T}\right)^{1/3}$, $\epsilon \gets \sqrt{\frac{\beta\log(n)}{\gamma T}}$
   \STATE Initialize $\boldsymbol{\omega}_1 \gets \text{ones}(\beta, n)$
   \FOR{$t \in [1,\cdots,T]$}
   \STATE $S_t \gets \emptyset$
   \STATE $l \gets \text{zeros}(\beta, n)$ \quad // \textit{ loss }
   \STATE Randomly sample a value $\xi \sim \text{Uniform}([0,1])$ 
   \IF{$\xi \leq \gamma$} %
   \STATE $e \sim \text{Uniform}(\{1,\cdots,\beta\})$ 
   \FOR{$i \in [1,\cdots, e-1]$} 
       \STATE \qquad // \textit{For experts before $e$, exploit}
       \STATE Select an arm $a$ with probability $\frac{\boldsymbol{\omega}_t[i,a]}{\sum \boldsymbol{\omega}_t[i,:]}$, re-sample if $a \in S_t$
       \STATE $S_t \gets S_t \cup \{a\}$ with probability $\frac{\cmin}{c(a)}$; $S_t \gets S_{t-1}$ otherwise
   \ENDFOR
   \STATE $a \sim \text{Uniform}(\{1,\cdots,n\}\backslash S_t)$ \quad // \textit{For expert $e$, explore}
   \STATE $S_t \gets S_t \cup \{a\}$ 
   \STATE Play action $S_t$, observe $f_t(S_t)$
   \STATE Update $l[i,j] \gets \frac{\cmin f_t(S_t)}{c(a)}$ for all $i = e$ and $j \neq a$ \quad // \textit{Feed  $\frac{\cmin f_t(S_t)}{c(a)}$ back to expert $e$ associated with action $a$}
   \STATE Update $\boldsymbol{\omega}_{t+1}[i,j] \gets \boldsymbol{\omega}_{t}[i,j]\exp(-\epsilon l[i,j])$ for all pairs of $i$ and $j$  
   \ELSE 
   \STATE \qquad  // \textit{Exploitation with probability $1-\gamma$}
   \FOR{$i \in [1,\cdots, \beta]$} 
       \STATE \qquad  // \textit{For experts before $e$, exploit}
       \STATE Select arm $a$ with probability $\frac{\boldsymbol{\omega}_t[i,a]}{\sum \boldsymbol{\omega}_t[i,:]}$, re-sample if $a \in S_t$
       \STATE $S_t \gets S_t \cup \{a\}$ with probability $\frac{\cmin}{c(a)}$; $S_t \gets S_{t-1}$ otherwise
   \ENDFOR
   \STATE Play action $S_t$, observe $f_t(S_t)$
   \STATE $\boldsymbol{\omega}_{t+1}[i,j] \gets \boldsymbol{\omega}_{t}[i,j]$
   \quad // \textit{Since feeding back 0 to all expert-action payoffs, loss is 0, no update}
   \ENDIF 
   \ENDFOR
\end{algorithmic}
\end{algorithm}

In this section we describe implementation details and parameter selection for OG$^\text{o}$ algorithm \cite{streeter2008online}. The choice of exploration probability is given by the original paper:$\gamma = n^{1/3}\beta\left(\frac{\log(n)}{T}\right)^{1/3}$, where $\beta=B/\cmin$. Note that in the original paper, $B$ is used instead of $\beta$, because they assume the minimum cost is 1. Here we generalize it to arbitrary non-negative costs. $\epsilon$ is the learning rate for Randomized Weighted Majority (WMR) expert algorithm \cite{arora2012multiplicative}. It is chosen by setting the derivative of regret upper bound to zero, which is $\epsilon = \sqrt{\frac{\log(n)}{T_e}}$, where $T_e$ is the time spent on updating expert $e$. Since it explores with probability $\gamma$, and there are $\beta$ expert algorithms, we have $T_e \approx \frac{\gamma T}{\beta}$. Thus we pick $\epsilon = \sqrt{\frac{\beta\log(n)}{\gamma T}}$. In experiments, there are many cases the chosen $\gamma$ is large or even larger than 1, so we cap the probability of exploring $\gamma$ by 1/2 to avoid exploring too much. Note that unlike a hard budget in our setting, for OG$^\text{o}$, it only requires the budget to be satisfied in expectation, so in general we might choose sets over budget. \cref{alg:OGO} is the pseudo code for implementation details of OG$^\text{o}$.

\section{Dealing with Small Time Horizons in Experiments} \label{apdx:smallT}

In \cref{sec:appl-CETC:submod}, we used $N=\Tilde{K}n$ as an upper bound on the number of function evaluations for both C-ETC-K and C-ETC-Y, where $n$ is the number of base arms and $\Tilde{K}$ is an upper bound of the cardinality of any feasible sets. When the time horizon $T$ is small, it is possible that the exploration phase will not finish due to the  formula being optimized for $m$ (the number of plays for each action queried by $\mathcal{A}$) uses a loose bound on the exploitation time.   When this is the case, we select the largest $m$ (closest to the formula) for which we can guarantee that exploration will finish.  Recall that for \textsc{C-ETC-Y} and \textsc{C-ETC-K}, the number of oracle calls can only be upper bounded in advance. 

We first calculate $\moptrnd$ using \eqref{eq:int:value:m}:
\begin{align}
    \moptrnd= \ceil*{\frac{\delta^{2/3}T^{2/3}\log(T)^{1/3}}{2\Tilde{K}^{2/3}n^{2/3}}}. \nonumber
\end{align}
Note that a (slightly tighter) upper bound on the number of subsets evaluated during the exploration phase (with $\tilde{K}$ bounding the number of iterations of the greedy process) is
\begin{align}
    N &\leq n+(n-1)+\cdots+(n-\Tilde{K}+1) \nonumber\\
    &= \left(n-\frac{\Tilde{K}}{2}+\frac{1}{2}\right)\Tilde{K}.  \nonumber
\end{align}
We compare $\left(n-\frac{\Tilde{K}}{2}+\frac{1}{2}\right)\Tilde{K}\moptrnd$ with $T$. 

\begin{itemize}
    \item \textbf{Case 1.} If $\left(n-\frac{\Tilde{K}}{2}+\frac{1}{2}\right)\Tilde{K}\moptrnd < T$, C-ETC can finish exploring. We select $m=\moptrnd$. 
    \item \textbf{Case 2.} If $\left(n-\frac{\Tilde{K}}  {2}+\frac{1}{2}\right)\Tilde{K}\moptrnd \geq T$, it is possible that the algorithm cannot finish exploring.  %
    In this case, we will find a new $m$, so that the exploration can be guaranteed to finish.   
    We select the largest $m$ (closest to $m^\dagger$) so that the exploration time is upper bounded by $T$, 
    \begin{align}
        m=\frac{T}{\left(n-\frac{\Tilde{K}}{2}+\frac{1}{2}\right)\Tilde{K}}. \nonumber
    \end{align}
\end{itemize}

\section{Other Related Work for Adversarial CMAB with Knapsack constraints}\label{apdx:rad}
   \citet{streeter2008online}  propose and analyze an algorithm for adversarial CMAB with submodular rewards,  full-bandit feedback, and under a knapsack constraint (though only in expectation, taken over randomness in the algorithm). We also use this as a baseline in our experiments in \cref{sec:exp}.
The authors  adapted a simpler greedy algorithm than the one we adapt \citep{khuller1999budgeted}, using an $\epsilon$-greedy exploration type framework.  We provide evidence in our experiments %
that their algorithm requires large horizons to learn.  %
The offline algorithm they adapted %
achieves an approximation ratio $(1-1/e)$ %
for budgets that exactly match the cost used up by the greedy solution, but otherwise does not achieve a constant approximation \citep{khuller1999budgeted}.

In \citep{golovin2014online}, the authors propose an algorithm for adversarial setting with submodular rewards %
when there is a  matroid constraint (neither knapsack nor matroid constraints are special cases of the other).

\section{Related work on Stochastic Submodular CMAB with Semi-Bandit Feedback}\label{apdx_semi}
There are also a number of works that require additional ``semi-bandit'' feedback.  For combinatorial MAB with submodular rewards, a common type of semi-bandit feedback are marginal gains \citep{lin2015stochastic,yue2011linear, yu2016linear, takemori2020submodular}, which enable the learner to take actions of maximal cardinality or budget, receive a corresponding reward, and gain information not just on the set but individual elements.  For the full-bandit setting we consider, to greedily build a solution, we need to spend time taking small cardinality actions to estimate their quality, incurring regret.

\end{document}